\newtheorem{definition}{Definition}
\newtheorem{theorem}{Theorem}
\newtheorem{proposition}{Proposition}
\newtheorem{scenario}{Scenario}
\definecolor{dkgreen}{rgb}{0,0.6,0}
\definecolor{gray}{rgb}{0.5,0.5,0.5}
\definecolor{mauve}{rgb}{0.58,0,0.82}
\setlist[itemize]{leftmargin=2.0em}
\setlist[enumerate]{leftmargin=2.0em}
\tiny\color{gray},
\begin{document}

\lefttitle{Human Emotion Verification}

\jnlPage{1}{8}
\jnlDoiYr{2021}
\doival{10.1017/xxxxx}

%\title[Mind Transition Languages]{Mind Transition Languages: Modeling Dynamics of Human Mental States \footnote{This paper is an extended version of \cite{brannstrom2022emotional} and \cite{brannstrom2021modelling}, extended with a generalized formalism, introducing methods for representing, analyzing and comparing psychological theories in terms of action languages.}}

%\title[Mind Transition Language]{Mind Transition Language:\\ A Case Study in Emotion Verification \footnote{This paper is an extended version of \cite{brannstrom2022emotional} and \cite{brannstrom2021modelling}, extended with a generalized formalism, introducing methods for representing, analyzing and comparing psychological theories in terms of action languages.}}

\title[Human Emotion Verification]{Human Emotion Verification by Action Languages via Answer Set Programming \footnote{This paper is an extended version of \cite{brannstrom2022emotional} and \cite{brannstrom2021modelling}, extended with a generalized formalism, introducing methods for representing, analyzing and comparing psychological theories in terms of action languages.}}

\begin{authgrp}
\author{\gn{Andreas} \sn{Brännström} }
\affiliation{Umeå University\\Department of Computing Science}
\author{\gn{Juan Carlos} \sn{Nieves} }
\affiliation{Umeå University\\Department of Computing Science}
\end{authgrp}

%\usepackage{natbib}

%\history{\sub{xx xx xxxx;} \rev{xx xx xxxx;} \acc{xx xx xxxx}}

\maketitle

\begin{abstract}

In this paper, we introduce the action language C-MT (Mind Transition Language). It is built on top of answer set programming (ASP) and transition systems to represent how human mental states evolve in response to sequences of observable actions. Drawing on well-established psychological theories, such as the Appraisal Theory of Emotion, we formalize mental states—such as emotions—as multi-dimensional configurations. With the objective to address the need for controlled agent behaviors and to restrict unwanted mental side-effects of actions, we extend the language with a novel causal rule, forbids to cause, along with expressions specialized for mental state dynamics, which enables the modeling of principles for valid transitions between mental states. These principles of mental change are translated into transition constraints, and properties of invariance, which are rigorously evaluated using transition systems in terms of so-called trajectories. This enables controlled reasoning about the dynamic evolution of human mental states. Furthermore, the framework supports the comparison of different dynamics of change by analyzing trajectories that adhere to different psychological principles. We apply the action language to design models for emotion verification. Under consideration in Theory and Practice of Logic Programming (TPLP).

\end{abstract}

\begin{keywords}
Action Languages, Answer Set Programming, Theory of Mind.
\end{keywords}

\section{Introduction}
\label{sec:Introduction}

\noindent Interactive and intelligent systems are increasingly being designed to display human-like mental capabilities \cite{adikari2022empathic,milcent2022using,morris2018towards,martinengo2019suicide,burger2020technological}.
For instance, in the area of health and wellbeing, software assistants are being developed to display complex human traits, such as empathy and sympathy \cite{morris2018towards}, to deliver emotionally charged actions \cite{adikari2022empathic} or to provoke empathic responses from users \cite{milcent2022using}. Some of these systems are deployed in society for, e.g., depression support, therapy and behavior-change interventions \cite{martinengo2019suicide, burger2020technological}.
In such applications, that in various ways deal with manipulation of human mental states, such as emotions, 
ensuring reliable system behavior is crucial
\cite{muise2019planning}. A system may need to constrain and plan its interactions in order to anticipate and reduce unwanted influence as a result of its behavior. Moreover, effective methods for verifying that such side-effects are avoided are essential. This requires dynamic models of the human mind that capture the causality of mental states and support formal reasoning about the consequences of interaction strategies.

Research on formal models of human behavior spans from physiology \cite{fass2009rationale} to social practices \cite{erdogan2025toma}, with extensive work on mental states \cite{pereira2007formal,jiang2007ebdi,ong2019computational}. This includes modeling emotional responses, goal and plan recognition \cite{ramirez2009plan,shvo2020active}, and extensions of planning to epistemic, empathetic, and cognitive settings \cite{bolander2011epistemic,shvo2019towards,lorini2022cognitive}. Logics of attitudes and emotions further enrich epistemic logic with belief strength, appraisal, and counterfactual reasoning \cite{lorini2021qualitative,adam2009logical,lorini2011logic}. Despite this progress, few approaches formalize principled constraints on how mental states evolve. While psychological theories such as emotion regulation \cite{ortner2018roles,tamir2008hedonic} describe such patterns, formal methods rarely capture them as transition constraints or invariants \cite{hansen2003algorithms}. A more structured approach would be to model mental states as multi-variable configurations governed by constraints on how they evolve over sequences of state changes, enabling verification of unwanted influence in their long-term evolution.

In this paper, we introduce the action language ${\cal C}_{MT}$ (Mind Transition Language), designed to model changes in human mental states in response to observable actions. Through its translation into Answer Set Programming (ASP) \cite{brewka2011answer}, the language supports the generation and verification of action trajectories—sequences of actions and resulting states—while enforcing rules that forbid undesirable mental state transitions. To illustrate its use, we apply ${\cal C}_{MT}$ to psychological theories: Roseman’s Appraisal Theory of Emotion \cite{roseman1996appraisal} provides a characterization of mental states—here, emotion states—as multi-dimensional configurations, and theories of emotion regulation \cite{zaki2020integrating,tamir2007business} serve as an example of how principles of forbidden change can be specified. Although the language itself is general and can be instantiated with alternative theoretical accounts, in this work we present a case study based on these particular characterizations. The central aim of the framework is to support controlled agent behavior that minimizes unwanted mental side effects by enforcing such principles of forbidden change.

\begin{figure}
    \centering
    \includegraphics[width=1.0\linewidth]{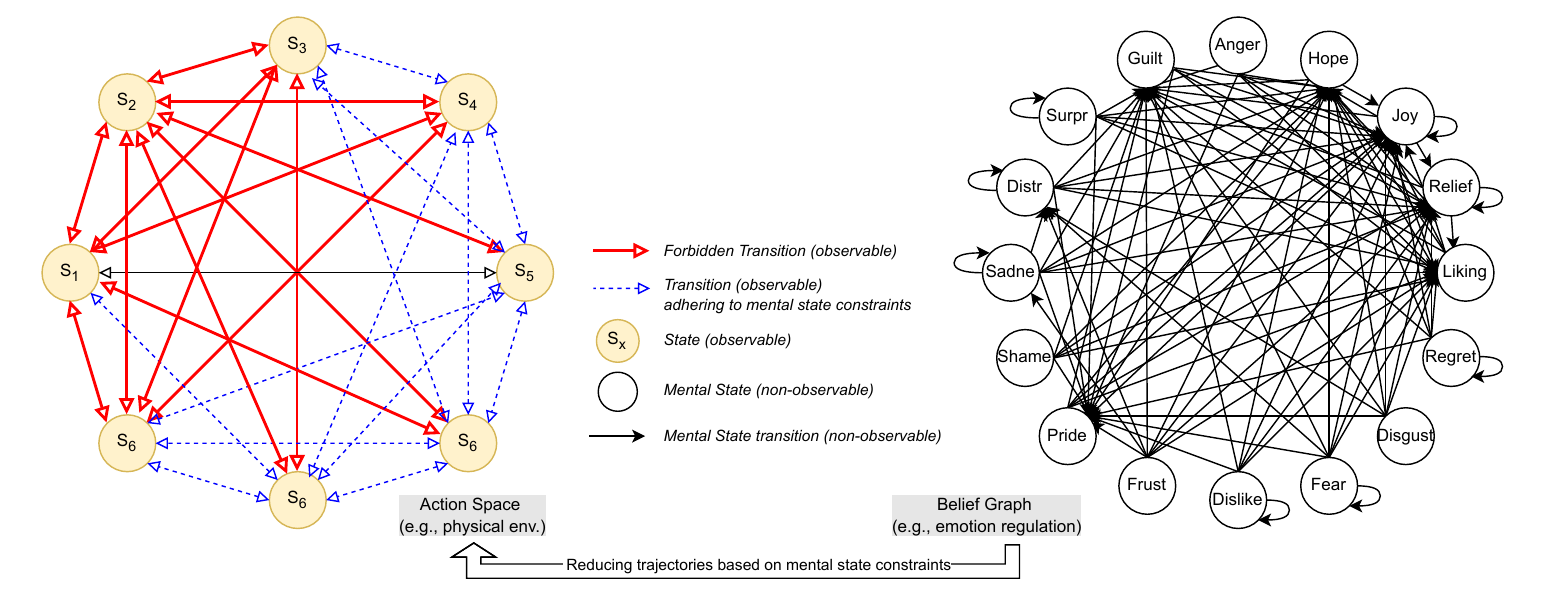}
    \caption{ In ${\cal C}_{MT}$, available sequences of actions and states (trajectories) in the ``physical'' state space are constrained by the actions' influence on the ``mental'' state space. }
    \label{fig:Trajectory-space}
\end{figure}

While action languages serve as intuitive specifications for dynamic reasoning processes, they can be characterized as Answer Set Programs. As a result, ${\cal C}_{MT}$ provides a logic programming foundation for knowledge elicitation and engineering of mental state dynamics. Emotional reasoning is inherently complex due to the exponential space of emotion states, particularly when accounting for their dynamics. This complexity poses a computational challenge, demanding solutions to combinatorial problems. ASP is well suited to address these challenges: it can express NP-search problems that are solvable by a nondeterministic Turing machine in polynomial time, with solutions encoded as answer sets \cite{brewka2011answer}. Thus, logic programming, and ASP in particular, offers an effective method for implementing and managing emotional reasoning. With this motivation, we introduce the framework’s syntax and semantics, and establish its link to ASP.

This further enables ${\cal C}_{MT}$ to capture dynamic properties such as \emph{invariance} \cite{hansen2003algorithms}, understood in transition systems as a property that must hold in all states along every possible trajectory. In our setting, invariance specifies properties over mental state changes such that, when satisfied, the evolution of states follows a corresponding psychological principle of change, which can be rigorously analyzed through formal verification. Ultimately, the framework supports reasoning about the evolution of human mental states while simultaneously constraining actions in the ``physical'' environment (see Figure \ref{fig:Trajectory-space}).

From this basis, we present the following key contributions:

\begin{itemize}
	\item We introduce the action language ${\cal C}_{MT}$, serving as a foundational platform for capturing properties relevant to mental state reasoning.
    \item We present the framework's syntax and semantics, and its practical implementation in Answer Set Programming, evaluated formally and empirically.
    \item We present characterizations of ${\cal C}_{MT}$ in the setting of emotional reasoning, capturing psychological theories such as Appraisal theory of Emotion \cite{roseman1996appraisal}, Hedonic Emotion Regulation \cite{zaki2020integrating} and Utilitarian Emotion Regulation \cite{tamir2007business}.
    \item We present how ${\cal C}_{MT}$ can be applied as a method for representing, verifying and comparing psychological theories in terms of action trajectories.
\end{itemize}

The remainder of this paper is structured as follows. 
Section~\ref{sec:Motivational-Background} presents the motivational background. 
Section~\ref{sec:Related Work} reviews related work. 
Section~\ref{sec:Formal Framework} introduces the formal framework. 
Section~\ref{sec:Case Study Emotion} presents a case study in emotional reasoning. 
Section~\ref{sec:Formal Analysis} provides a formal analysis of the framework. 
Section~\ref{sec:Evaluation} reports on the experimental evaluation. 
Section~\ref{sec:exmaple-human-emotion-verification} illustrates the approach through an example in human emotion verification. 
Section~\ref{sec:Discussion} offers a broader discussion of the results. 
Finally, Section~\ref{sec:Conclusion} concludes the paper and outlines directions for future work.

\section{Motivational Background}
\label{sec:Motivational-Background}

In today’s digital society, where social media and AI-based systems are deeply embedded in everyday interactions, the potential for manipulation and undue influence—whether by people or automated systems—has become a serious concern \cite{park2024ai}. A real example \cite{singleton2023chatbot} is the case of an individual who was sentenced to nine years for attempting to assassinate Queen Elizabeth II, after exchanging thousands of messages with a chatbot that appeared to encourage his violent intentions. Such incidents underscore the urgent need for methods that can verify, explain, and prevent unsafe forms of influence in human–AI interaction, where emotional and motivational dynamics play a central role. 

An underlying issue regards limitations in user modeling and personalization. State-of-the-art user modeling in interactive systems commonly use statistical methods to characterize users, based on, e.g., user interaction history and demographic information \cite{burger2020technological,masthoff2014preface,rabbi2015mybehavior}. In this way, systems make approximations of users, based on snapshots of human behavior. 
However, a user’s physical and mental state may change over time during an interaction, meaning that observations from a previous moment (e.g., the last action or the previous day) may no longer accurately reflect the user’s current state. 
Consequently, in a human-system interaction, a model based on a snapshot of the human may lead to uncontrolled and unwanted agent behavior. 
%Human behavior has been analyzed in multiple levels and abstractions, e.g., integrative physiology \cite{fass2009rationale} on a low level and formalizations of social practices \cite{dignum2018interactions} on a high level. On the level of mental states \cite{pereira2007formal, jiang2007ebdi,belkaid2014logical, ong2019computational}, approaches have typically focused on understanding mental-state context, e.g., by simulating emotional behavior \cite{ong2019computational} or to model expected human behavior in response to affective states \cite{jiang2007ebdi}.  
%However, these approaches do not capture comprehensive explanations for the causal factors and mechanisms underlying changes in human mental states. To address this limitation and enable the tracking of mental state changes in response to an agent's actions, it is necessary to develop a dynamic model of the human mind that incorporates transitions between states. This model should represent mental states as multi-dimensional configurations of variables within a domain, where these variables determine the causes of mental states. Transitions, on the other hand, result from actions that modify the domain and consequently trigger changes in mental states.
%A formal method used to describe the behavior of a dynamic system is the concept of transition system \cite{arnold1994finite}. 
In order for a software agent to execute suitable actions in interactions with humans, the agent must consider the mental states of its human interlocutors in its internal reasoning and decision-making. This ability, known as Theory of Mind (ToM) \cite{frith2005theory}, involves inferring another agent’s beliefs, emotions, motivations, goals, and intentions. To develop systems that effectively compute the ToM of their users, it is essential to make these systems aware of the human mental properties that may change during interactions. This involves creating models in terms of states and transitions between states.

The proposed framework rests on two fundamental pillars: psychological theories of emotion and non-monotonic reasoning via ASP. The psychological theories, such as the Appraisal Theory of Emotion, provide a structured and scientifically grounded basis for modeling the complex dynamics of human emotions. %These theories capture the nuances of emotional states and their transitions, allowing us to represent mental states as multi-dimensional configurations of factors like need-consistency, goal-consistency, accountability, and control potential. 
On the other hand, the non-monotonic reasoning approach enabled by ASP complements these psychological foundations with a computational mechanism to handle the inherent variability in emotional reasoning. %Unlike deterministic methods, ASP allows for the generation of multiple answer sets, each representing a possible configuration of mental states. This non-deterministic nature is crucial for capturing the diverse potential outcomes of emotional processes and for enabling flexible, principle-based reasoning about state transitions.

Action languages provide intuitive means of specifying dynamic reasoning processes, making them highly effective for knowledge elicitation and engineering. In our framework, the action language ${\cal C}_{MT}$ is used to describe actions and their effects on mental states, as well as the constraints governing these effects. By linking the action language to ASP, we enable the implementation of these specified dynamics in a computationally efficient manner. ASP is particularly suited for this purpose due to its ability to handle complex combinatorial problems, such as those inherent in emotional reasoning. It allows for the expression of NP-search problems that can be solved using nondeterministic polynomial time, with solutions encoded as answer sets \cite{brewka2011answer}. %This capability makes ASP a powerful tool for managing and implementing the principles of mental state transitions derived from psychological theories.

Psychological theories offer structured frameworks to explain various aspects of human cognition, emotion, and behavior. Appraisal theories, such as the appraisal theory by Roseman (1996) \cite{roseman1996appraisal} which is modeled as an example in this work, identify classes of determinants and how combinations of them define emotions. From this, we can formalize fluents (changeable variables) and states (configurations of fluents) in our framework. In the appraisal theory by Roseman (1996), emotion states are represented through variables such as need-consistency, goal-consistency, accountability, and control potential. By structuring mental states as configurations of these factors, we can represent, analyze, and reason about human mental states and the principles governing their transitions. Hence, in the proposed framework, a mental state of a human agent is approximated through a multi-dimensional representation, where each dimension corresponds to a factor in a psychological theory. In this way, we create abstractions of mental states, such as emotions, transforming psychological theories into computational models. 

Transitioning between mental states involves adhering to principles derived from psychological theories. For example, Hedonic Emotion Regulation (HER) \cite{zaki2020integrating} focuses on increasing positive emotions and decreasing negative ones, while Utilitarian Emotion Regulation (UER) \cite{tamir2007business} emphasizes the functional outcomes of emotion states. In our framework, these principles are translated into constraints that define valid mental state changes. By implementing these constraints in ASP, we ensure properties such as \emph{invariance} \cite{hansen2003algorithms}, preventing entry into unwanted mental states.

A key distinction between our approach and traditional statistical methods lies in the deterministic nature of the latter versus the non-deterministic nature of ASP in our framework. Statistical methods typically collapse a decision into a single point, which can lead to a loss of information and a lack of flexibility in reasoning processes. In contrast, ASP's non-determinism yields multiple answer sets, offering a range of possible solutions that can be filtered and analyzed according to the introduced constraints and principles of change grounded in psychological theories. This approach not only preserves the richness of potential outcomes but also allows for a more nuanced and comprehensive understanding of the dynamics of human mental states.

The practical relevance of this approach is clear in domains where mental states are central to interaction. Examples include conversational agents for depression support, therapy, and behavior change \cite{adikari2022empathic,milcent2022using,morris2018towards,martinengo2019suicide,burger2020technological}. 
In such settings, modeling and constraining mental state transitions is essential to delivering coherent and safe interactions.

\section{Related Work}
\label{sec:Related Work}

Human behavior has been formally analyzed at multiple levels and abstractions, ranging from integrative physiology \cite{fass2009rationale} at a low level to formalizations of social practices \cite{erdogan2025toma} at a high level. On the level of mental states \cite{erdogan2025toma, pereira2007formal, jiang2007ebdi, belkaid2014logical, ong2019computational}, research has modeled mental-state context, such as simulating emotional behavior \cite{ong2019computational} or modeling expected human responses to affective states \cite{jiang2007ebdi}. 
Moreover, a diverse body of research has explored the dynamics of mental states \cite{ramirez2009plan, shvo2019towards, shvo2020active, rao1995bdi, keren2014goal}. Plan recognition as planning, introduced by Ramirez and Geffner \cite{ramirez2009plan}, applies classical AI planning techniques to infer the goals and plans of agents based on observed actions. 
Active Goal Recognition (AGR) \cite{shvo2020active} extends goal recognition using contingent planning and landmark-based hypothesis elimination, enabling an active observer to sense, reason, and act.
Furthermore, in Epistemic planning \cite{bolander2011epistemic}, a generalization of classical planning, involves agents specifying goals that include the epistemic state, such as beliefs, of other agents. Empathetic planning \cite{shvo2019towards} formalizes empathy as reasoning about another agent’s preferences, utilizing multi-agent epistemic planning where an agent models the beliefs and goals of a human agent. Recent extensions of epistemic planning to cognitive planning formalizes a method for influencing the cognitive state of the target agent \cite{lorini2022cognitive, davila2021simple}.

While modal logic underpins many approaches to reasoning about knowledge and belief, it typically represents mental states as sets of possible worlds, often reducing complex phenomena such as ``emotions'' to atomic propositions (e.g., ``Joy'' or ``Sadness'') rather than multi-variable configurations with structured interdependencies. This limits expressivity in capturing the causes, constraints, and consequences of mental-state transitions over time. To address this, various logics of mental attitudes and emotions \cite{lorini2021qualitative, adam2009logical, lorini2011logic, dastani2012logic, steunebrink2012formal} have been developed. These works integrate epistemic logic with additional structures, such as plausibility orderings for belief strength \cite{lorini2021qualitative}, logic-based appraisal models for emotions \cite{adam2009logical}, and STIT (Seeing-To-It-That) logic for counterfactual reasoning about emotions such as regret and rejoicing \cite{lorini2011logic}. A notable approach is the TOMA framework \cite{erdogan2025toma}, which integrates epistemic logic to model and update beliefs while introducing higher-order abstractions based on lower-order beliefs. By employing modal operators for belief and knowledge, the system facilitates structured reasoning about trust, social roles, and norms. 

In the setting of mental state representation and reasoning, two related areas of research regard: 1) logics of mental attitudes and emotion \cite{lorini2021qualitative,adam2009logical,lorini2011logic,dastani2012logic,steunebrink2012formal}, and 2) as previously mentioned, epistemic planning \cite{bolander2011epistemic} extended to cognitive planning \cite{lorini2022cognitive, davila2021simple}. 
Logics of mental attitudes and emotion aim to formalize the relationships between epistemic and motivational attitudes of human and artificial agents, as well as the influence of mental attitudes on emotions. Some related works in this line of research include the logical formalization of OCC theory of emotions \cite{adam2009logical}, the formalization of counterfactual emotions \cite{lorini2011logic}, the representation of emotion intensity and coping strategies \cite{dastani2012logic}, the modeling of emotion triggers \cite{steunebrink2012formal}, and the logical theory of epistemic and motivational attitudes and their dynamics \cite{lorini2021qualitative}. Nevertheless, the principles constraining this causality and its potential side effects are not considered. 
In contrast to previous approaches to modeling and reasoning about mental states, the proposed ${\cal C}_{MT}$ action language deals with a multi-dimensional representation of mental states and the constraints for modeling principle-based transitions between them.

In the area of affective agents and computational theory of mind, agent models have been developed to reason about emotion and behavior \cite{qiu2022towards,yongsatianchot2021computational,ong2019computational,jara2019theory,si2010modeling}. For instance, agents based on Partially Observable Markov Decision Processes (POMDP) \cite{yongsatianchot2021computational} have been used to model emotion, showing potential in simulating human behavior. Nevertheless, the state space is solely in the physical environment. Hence, they have lacked to capture mental-state dynamics to reason about causes for mental-states and mental transitions. % to deliberate about emotion regulation and emotional side-effects of actions.
Inverse Reinforcement Learning (IRL) has been proposed to model Theory of Mind \cite{ong2019computational}, for predicting peoples' actions and inferring mental states through policy reconstruction. However, a limitation with their approach is that it assumes identical, and rational, decision-making due to its inability to capture the variability in human reasoning, neglecting individual differences and emotional influences that may significantly impact human behavior. To effectively model the dynamics of the human mind, capturing individual and contextual factors, a low level, multi-dimensional approach is required.
A study on socially-aware agents \cite{qiu2022towards} proposes a hybrid mental state parser that extracts information from dialogue and event observations to maintain a graphical representation of an agent's mind. The nodes represent agents, their personas, objects, and descriptions of the setting. The edges between these nodes depict the state of mind of the agents, capturing how their beliefs and mental states change as the setting evolves. While their representation can represent mental state changes, it lacks the ability to represent constraints of changes for analyzing their adherence to specific principles of mental change. Also, the graphical representation is on a high level, in contrast to the multidimensional representation proposed in the current paper.

Let us also mention that there is a range of approaches in the setting of BDI (Belief-Desire-Intention) agents \cite{pereira2007formal,jiang2007ebdi,jones2009personality,sanchez2019designing,sanchez2019abc}, integrating affective states into traditional BDI models, such as to simulate expected emotional responses \cite{pereira2007formal} or provide architectures for embedding emotion-driven reasoning \cite{jiang2007ebdi}. While these works have considered affective states, such as emotion, integrated the BDI model, challenges persist in implementing affective constraints, such as emotion regulation \cite{sanchez2019designing}.

The related paradigm of Human-Aware Planning (HAP) \cite{leonetti2019adaptive,chakraborti2018human,ahrndt2014human} focuses on planning in the state-space of the physical environment, often in shared contexts where both autonomous agents and humans act. The planner typically reasons about the human’s physical actions and coexists with them in domains such as robot navigation or collaborative manipulation \cite{chakraborti2018human}. However, for a rational software agent to anticipate how its actions or events may influence a human’s mental state, it must reason in a space that goes beyond the physical. This requires a planning model that explicitly includes the state-space of the human mind: mental states, allowable transitions between them, and the actions that may trigger such transitions.

From this background, we highlight that there is a lack of formal treatment on principled constraints governing how mental states evolve over an interaction. While psychological theories, such as emotion regulation \cite{ortner2018roles, tamir2008hedonic}, describe patterns of mental-state change, formal models rarely integrate such principles, e.g., as transition constraints or properties of invariance \cite{hansen2003algorithms}, which can be rigorously evaluated using formal methods.
By enforcing such constraints, a system can verify that mental-state trajectories, in contrast to isolated transitions, adhere to predefined principles while enabling the detection of deviations or violations. 
A more structured approach represents mental states as multi-variable configurations governed by trajectory-level constraints that regulate their long-term evolution. We refer to such a framework for defining valid mental states and transitions as a Belief Graph (BG), formally introduced and analyzed in this paper.

In the following sections, we introduce the formal framework that integrates these concepts, providing a structured approach to modeling and managing the dynamics of human mental states.

\section{Formal Framework}
\label{sec:Formal Framework}

This section introduces the syntax and semantics of the proposed formal framework and the action language extension ${\cal C}_{MT}$, which builds on ASP-based action reasoning. Although ${\cal C}_{MT}$ is implementation-agnostic and can, in principle, be realized using different computational frameworks, we view it as a high-level formal language designed specifically for realization in Answer Set Programming (ASP). From this perspective, we construct expressions in ${\cal C}_{MT}$ to align naturally with their ASP encodings. The framework offers flexibility to specialize for specific mental state domains, such as emotions, aligning with established psychological theories, such as the Appraisal theory of Emotion by Roseman (1996) \cite{roseman1996appraisal}. By incorporating principles of mental change, sets of transition constraints are formalized and implemented in terms of integrity constraints in answer set programs.

In the proposed action language, similar to previous action languages, \emph{fluents} represent properties that can change over time. These can describe various aspects of human-agent interactions, including non-observable aspects such as psychological attributes and observable aspects of agents or the environment. The value of a fluent at any given time depends on how it is affected by so-called \emph{actions} or indirectly by other fluents. 

We build on the action language ${\cal C}_{TAID}$ \cite{dworschak2008mbn}, originally designed for modeling biological systems. ${\cal C}_{TAID}$ includes features such as \emph{allowance}, \emph{inhibition}, and \emph{triggers}, which are also relevant for modeling the dynamics of mental states. The \emph{triggers} causal law accounts for interactions based on reactions, making it particularly useful for capturing how mental states may change as indirect effects of actions or external events. \emph{Allowance} rules specify that an action can occur under certain conditions but is not mandatory, while \emph{inhibition} rules prevent an action from occurring in certain contexts. These mechanisms are especially important in modeling mental states, where dependencies between cognitive and emotional factors may be partially known or not explicitly modeled. In such cases, allowance and inhibition rules provide a flexible way to account for uncertainties and exceptions in mental state reasoning.
These rules are relevant in the context of mental states where we have partial knowledge about the dependencies and reasons behind interactions in the mind. In situations where dependencies are partially known or not explicitly modeled, such as some conditions of the environment, allowance and inhibition rules provide a flexible way to handle uncertainties and exceptions.

We introduce the action language ${\cal C}_{MT}$ (Mind Transition Language), serving as a foundational platform, capturing domain-independent properties and constraints relevant to mental state reasoning. Along with the action language, we introduce some ``syntactic sugar'' to facilitate expressions about mental state dynamics. This includes ``influences mental fluent'' (similar to ``causes'' in ${\cal C}_{TAID}$), handling actions or events that may influence a change in a mental state, and the rules ``facilitates'' and ``contravenes'' (similar to ``allowance'' and ``inhibition'' rules in ${\cal C}_{TAID}$), which regulate actions' execution in an initial state. However, we specialize these rules to particularly concern \emph{human actions} regulated by \emph{mental fluents}. This is motivated by emotion theories, suggesting that ``emotions have distinctive goals and action tendencies'' \cite{roseman1994phenomenology}.

Moreover, ${\cal C}_{MT}$ introduces abstractions that we call \textbf{mental states}, which are defined by sets of fluents, along with mental state transition constraints, called \textbf{forbids to cause} rules, that specify relationships among mental fluents from one state to the next. Notably, such expressions enabling direct restriction of fluents between states, independent of actions, have not been explicitly incorporated in previous action languages like ${\cal C}_{TAID}$.

Consequently, the proposed framework for mental-state reasoning is comprised of two components: 1) The action language ${\cal C}_{MT}$, which defines actions in the environment/interaction that trigger changes in mental states, and 2) a set of constraints that precisely define valid transitions between mental states, derived from psychological principles. These constraints characterize a so-called Belief Graph (BG). In ASP, a BG is encoded as sets of integrity constraints, restricting particular fluent changes in transitions between mental states. Mental state dynamics are linked to actions in the environment by considering their effects on fluents in the mental state abstractions. In this way, a BG filters the potential trajectories resulting from the action language, based on their effects on mental states (see Figure \ref{fig:Mind-aware-planning-framework} for a conceptualization of the framework). 

Intuitively, ${\cal C}_{MT}$ extends ${\cal C}_{TAID}$ as follows (formalized later in this section):
\begin{itemize}
    \item To begin with, it refines the action and fluent alphabet by distinguishing between \emph{observable} environmental fluents (denoted ${\bf F}^E$) and actions (denoted ${\bf A}^E$) and \emph{non-observable} mental fluents (denoted ${\bf F}^H$) and human actions (denoted ${\bf A}^H$).
    
    \item It introduces specialized constructs such as \texttt{influences}, \texttt{facilitates}, and \texttt{contravenes}, which replicate ${\cal C}_{TAID}$’s \texttt{causes}, \texttt{allows}, and \texttt{inhibits} rules but are designed specifically to model how mental states regulate human behavior. This is made explicit in the ASP encoding, where these rules specifically are linked to mental fluents and human actions. While these constructs do not provide a substantial formal contribution, they support knowledge elicitation regarding modeling of mental dynamics, in line with one of the core purposes behind action languages. 

    \item  It incorporates a novel constraint rule, \[
(g_1^h, \dots, g_m^h \; \mathbf{forbids~to~cause} \; f_1^h, \dots, f_n^h)
\]
specifies that if the mental fluents \( g_1^h, \dots, g_m^h \in {\bf F}^H \) hold at time \( t \), then the mental fluents \( f_1^h, \dots, f_n^h \in {\bf F}^H \) are forbidden from holding at time \( t+1 \). This construct enables the specification of state transitions that are invalid regardless of any action occurrence—something not expressible in ${\cal C}_{TAID}$.

    \item A contribution of the language, utilizing the new \textbf{forbids to cause} rule, is the overall methodology for formalizing psychological theories in terms of dynamic computational models: abstractions of sets of fluents called mental-states and valid transitions between them. A set of these forbidding rules, on top of a set of dynamic causal laws, defines a so-called \emph{Belief Graph} (BG), which captures valid mental state transitions and filters action trajectories with constraints based on principles from psychological theories.
\end{itemize}

These additions enhance ${\cal C}_{MT}$ by enabling it to model complex dynamics of mental states and verify system behavior in accordance with psychological principles. The novel \textbf{forbids to cause} rule is essential in mental state modeling for two key reasons: (1) actions may directly induce undesirable changes in mental fluents, and (2) such changes may indirectly trigger unwanted ramification effects on other mental fluents. Consequently, this rule provides a formal mechanism to restrict actions that \emph{directly} or \emph{indirectly} lead to adverse mental outcomes.

 \begin{figure}[ht!]
 \centering
  \includegraphics[width=0.8\textwidth]{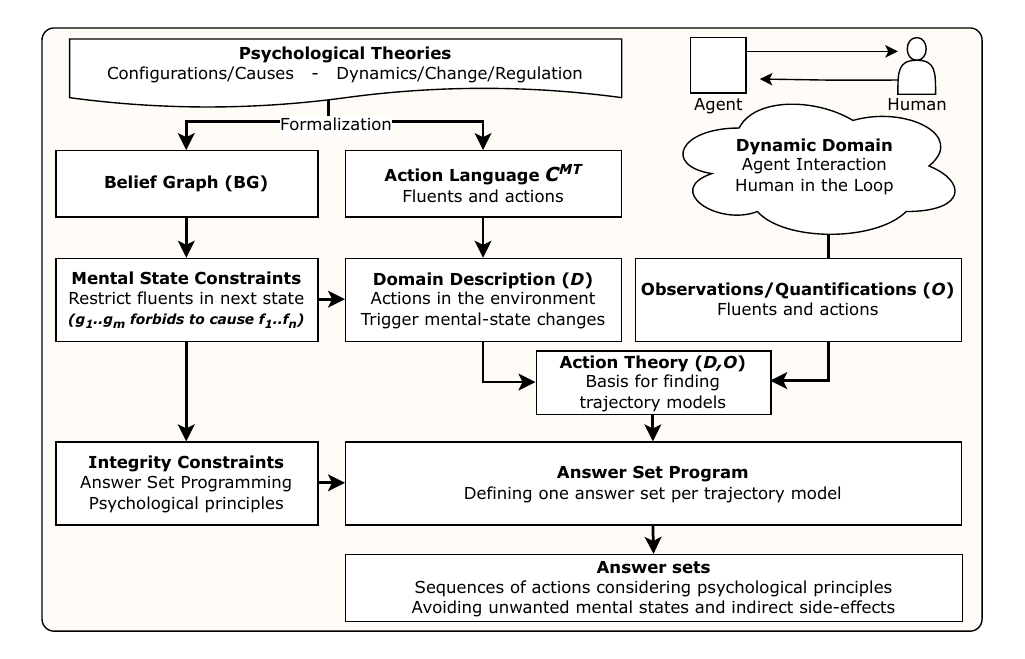}
	\caption{Conceptual Framework}
	\label{fig:Mind-aware-planning-framework}
\end{figure}

\subsection{Belief Graph (BG)}

A specialized BG is a set of propositional atoms for valid transitions between mental states. The mental states within the BG represent configurations of factors that contribute to specific mental states. These factors, known as mental fluents, are determined by psychological theories and encompass a range of possible values, thus defining the potential states within the BG.
The mental fluents that constitute the states of the BG are defined as follows:

\begin{definition}[Mental fluent]\label{def:mf}
Let $C = \{c_1, \dots, c_n\} $ be a set of symbols denoting psychological classes, and let $V = \{V_{c_1}, \dots,V_{c_n} \}$ be a set of total ordered sets of constants denoting psychological values for each class of $C$.
A mental fluent is a ground atom $f(c, v)$ of arity 2 such that $c \in C$, $v \in V_c$. 
\end{definition}

A mental state space $S$ is defined as the set of all possible combinations of mental fluents.

\begin{definition}[Mental state space]\label{def:state_space}
Let $C = \{c_1, \dots, c_n\}$ be a set of psychological classes, and let $V = \{V_{c_1}, \dots, V_{c_n}\}$ be a set where each $V_{c_i} (1 \leq i \leq n)$ is the set of possible values for the class $c_i$. The mental state space $S$ is defined as:
\[
S = \bigcup_{ v_1 \in V_{c_1},  v_2 \in V_{c_2},\dots, v_n \in V_{c_n}} \{ f(c_1, v_1), f(c_2, v_2), \ldots, f(c_n, v_n) \} 
\]
where each set $\{ f(c_1, v_1), f(c_2, v_2), \ldots, f(c_n, v_n) \}$ denotes a unique combination of mental fluents, called a mental state.
\end{definition}

We now proceed by defining a BG with states $S$ and directed edges $E \subseteq S \times S$. 
These edges between states capture principles of mental change suggested by psychological theories, providing rules for allowed mental state transitions.

\begin{definition}[Belief Graph]
\label{def: Belief Graph}

A Belief Graph is a directed graph $BG = (S,E)$ where $S$ is a finite set of mental states and $E \subseteq S \times S$ is a set of directed edges representing valid transitions between mental states. 
\end{definition}

A BG needs to be characterized for each specific application since different approaches to mental influence may be applicable. Consequently, to support controlled system behavior and principle-based assessment, it is essential to establish reasoning principles of ``monotonicity'' governing valid transitions between mental states. These principles can be motivated from various sources, such as psychological theories or insights from human experts. In the setting of reasoning about transitions between emotions, there are different principles and theories that have contrasting views on emotional change. For instance, principles of \emph{hedonic emotion regulation} \cite{zaki2020integrating} aim to increase positive emotions and decrease negative emotions. Another set of principles regards \emph{utilitarian emotion regulation} \cite{tamir2007business}, which aim to increase emotions that provide utility, such as control. Hence, the BG is agnostic to the choice of psychological theories. We show how this flexibility enables the framework to model, and compare, psychological theories through the principle-based assessment of generated trajectories.

\subsection{The ${\cal C}_{MT}$ Action Language}

${\cal C}_{MT}$ consists of a set of symbols representing actions and fluents, forming the alphabet of the action language. Given that the purpose of an action language is to provide a higher-level framework that makes modeling dynamic systems more natural and modular, several aspects of the language serve as ``syntactic sugar'' to simplify the modeling of mental states. The primary addition, beyond a specialized alphabet, is the incorporation of \textbf{forbids to cause} rules that given a set of mental fluents in the current state forbids a set of mental fluents to hold in the next state. By capturing elements of both the external environment and the human mind, the language facilitates the description and analysis of how environmental events influence human mental states and behavior, and through the new forbidding constraints, \emph{actions} as well as \emph{indirect effects} on mental fluents can be constrained. We require this addition to the language in order to model principles of mental change.

\begin{definition}[${\cal C}_{MT}$ alphabet]
Let ${\bf A}$ be a non-empty set of actions and ${\bf F}$ be a non-empty set of fluents.

\begin{itemize}
  \item ${\bf F} = {\bf F}^E \cup {\bf F}^H$ such that ${\bf F}^E$ is a non-empty set of fluent literals describing observable items in an environment and ${\bf F}^H$ is a non-empty set of fluent literals describing quantified non-observable features of mental-states of humans. ${\bf F}^E$ and ${\bf F}^H$ are pairwise disjoint.
  \item ${\bf A} = {\bf A}^E \cup {\bf A}^H$ such that ${\bf A}^E$ is a non-empty set of actions that can be performed by a software agent and ${\bf A}^H$ is non-empty set of actions that can be performed by a human agent. ${\bf A}^E$ and ${\bf A}^H$ are pairwise disjoint.
\end{itemize}
\end{definition}

Within ${\cal C}_{MT}$, a domain description defines static and dynamic causal laws for actions. These laws precisely express the expected influences exerted on mental fluents, either as direct effects of actions or as indirect causal effects.
The laws governing mental change operate by modulating the variables of a given mental state while adhering to the constraints outlined by the BG.

\begin{definition}[${\cal C}_{MT}$ domain description language]
\label{def:domain description language}
The ${\cal C}_{MT}$ domain description language $D^{MT}({\bf A}, {\bf F})$
consists of static and dynamic causal laws of the following form:

\begin{tabular}{ll}
${\cal C}_{MT}$ domain description language (extending ${\cal C}_{TAID}$):\\
$(a \; \mathbf{causes} \; f_1,\dots,f_n \; \mathbf{if} \; g_1, \dots, g_m)$ & $(1)$ \\
$(f_1,\dots,f_n \; \mathbf{if} \; g_1, \dots, g_m)$ & $(2)$ \\
$(f_1,\dots,f_n \; \mathbf{triggers} \; a)$ & $(3)$ \\
$(f_1,\dots,f_n \; \mathbf{allows} \; a)$ & $(4)$ \\
$(f_1,\dots,f_n \; \mathbf{inhibits} \; a)$ & $(5)$ \\
$(\mathbf{noconcurrency} \; a_1, \dots, a_n)$ & $(6)$ \\
$(\mathbf{default} \; g)$ & $(7)$ \\ \\
%${\cal C}_{MT}$ domain description language extension:\\
Mental-state domain description language extension:\\
$(a \; \mathbf{influences} \; f_1^h, \dots f_n^h \; \mathbf{if} \; g_1, \dots, g_m)$ & $(8)$ \\
$(g_1, \dots, g_m \; \mathbf{influences} \; f_1^h,\dots,f_n^h)$ & $(9)$ \\
$(g_1^h, \dots, g_m^h \; \mathbf{facilitates} \; a^h)$ & $(10)$ \\
$(g_1^h, \dots, g_m^h \; \mathbf{contravenes} \; a^h)$ & $(11)$ \\
$(g_1^h,\dots,g_m^h\; \mathbf{forbids~to~cause} \; f_1^h, \dots, f_n^h)$ & $(12)$ \\

%(\textbf{$\mathbf{mental~state~constraint}~R(v_p,v_q)~\mathbf{if}~f_p(c_p,v_p), f_q(c_q,v_q))$} & \textbf{$(12)$} \\
& \\
\end{tabular}

\noindent where $a \in {\bf A}$, $a^h \in {\bf A}^H$, and $a_i \in {\bf A}$ ($0 \leq i \leq n$) and $f_j \in {\bf F}$, $(0 \leq j \leq n)$ and $g_j \in {\bf F}$, $(0 \leq j \leq m)$, and $f_1^h, \dots, f_n^h \in {\bf F}^H$, $g_1^h, \dots, g_m^h \in {\bf F}^H$ are mental fluents of the form $f(c,v)$, where $c$ is a psychological class and $v$ is a psychological value.
\end{definition}

We next provide the formal specification of the action language components in terms of states, rules, and transition conditions. Once this foundation is in place, we proceed to their characterization under answer set semantics, where the operational meaning of the laws is captured by logic programs and their answer sets.

BGs are expressed in terms of ${\cal C}_{MT}$ logic programs, i.e., finite sets of dynamic and causal laws on mental states. This characterization allows us to ensure controlled mental change by restricting the states and state-transitions.
The mental state, within the context of the domain description, represents an interpretation of the current state of the system.

\begin{definition}[Mental state interpretation]
\label{def: c-ae emotion state}
A state $s \in S$ of the domain description $D^{MT}(\bf A, F)$ is an interpretation over ${\bf F}$ such that

\begin{enumerate}
	\item for every static causal law $(f_1,\dots,f_n \; \mathbf{if} \; g_1, \dots g_m) \in D^{MT}(\bf A, F)$, we have $\{f_1,\dots,f_n\} \subseteq s$ whenever $\{g_1, \dots g_m\} \subseteq s$.

	\item for every static causal law $(g_1,\dots,g_m \; \mathbf{influences} \; f_1^h,\dots,f_n^h) \in D^{MT}(\bf A, F)$, we have $\{f_1^h,\dots,f_n^h\} \subseteq s$ whenever $\{g_1,\dots,g_m\} \subseteq s$, and $\{f_1^h,\dots,f_n^h\} \subseteq {\bf F}^H$.

    %\item for every mental state constraint $(R(v_p, v_q)~\mathbf{if}~f_p(c_p, v_p), f_q(c_q, v_q)) \in D^{MT}(\bf A, F)$, we have \( R(v_p, v_p) \) holds true in $s$ whenever $\{f_p(v_p, v_p),f_q(c_q, v_q)\} \subseteq s$, and \( f_p(v_p, v_p),f_q(c_q, v_q) \in {\bf F}^H \), and $c_p, c_q$ are psychological classes, and $v_p, v_q$ are psychological values. 
    
\end{enumerate}

$S$ denotes all the possible states of $D^{MT}(\bf A, F)$.

\end{definition}

In this definition, a state is determined by the satisfaction of static causal laws within the domain description $D^{MT}({\bf A}, {\bf F})$. The first condition ensures that if the prerequisite mental fluents for an action are true, then the consequent mental fluents will also be true in the state. The second condition specifies that if certain mental fluents influence a particular mental fluent, then the influenced mental fluent must be true when all the influencing mental fluents are true. 

Let us define the laws of the domain description more precisely.

\begin{definition}[Domain description]

	By considering the domain description $D^{MT}(\bf A, F)$ and a state $s$, the following rules and laws apply:
    
	\begin{enumerate}

    	\item
    	An inhibition rule ($f_1,\dots,f_n \;$ \textbf{inhibits} a) is active in s, if $f_1,\dots,f_n \; \in s$, otherwise, passive. The set $A_I(s)$ is the set of actions for which there exists at least one active inhibition rule in s (as in ${\cal C}_{TAID}$ \cite{dworschak2008mbn}).
   	 
    	\item
    	A triggering rule ($f_1,\dots,f_n \;$ \textbf{triggers} a) is active in s, if $f_1,\dots,f_n \; \in s$ and all inhibition rules of action a are passive in s, otherwise, the triggering rule is passive in s. The set $A_T(s)$ is the set of actions for which there exists at least one active triggering rule in s. The set $\overline{A}_T(s)$ is the set of actions for which there exists at least one triggering rule and all triggering rules are passive in s (as in ${\cal C}_{TAID}$ \cite{dworschak2008mbn}).
   	 
    	\item
    	An allowance rule ($f_1,\dots,f_n \;$ \textbf{allows} a) is active in s, if $f_1,\dots,f_n \; \in s$ and all inhibition rules of action a are passive in s, otherwise, the allowance rule is passive in s. The set $A_A(s)$ is the set of actions for which there exists at least one active allowance rule in s. The set $\overline{A}_A(s)$ is the set of actions for which there exists at least one allowance rule and all allowance rules are passive in s (as in ${\cal C}_{TAID}$ \cite{dworschak2008mbn}).

    	\item
    	A facilitating rule ($g_1^h, \dots, g_m^h \;$ \textbf{facilitates} $a^h$) is active in s, if $a^h$ $\in {\bf A}^H$ and $g_1^h, \dots, g_m^h \; \in s$ and all inhibition rules and contravening rules of action a are passive in s, otherwise, the facilitating rule is passive in s. The set $A_{FAC}(s)$ is the set of actions for which there exists at least one active facilitating rule in s. The set $\overline{A}_{FAC}(s)$ is the set of actions for which there exists at least one facilitating rule and all facilitating rules are passive in s.
   	 
    	\item
    	An contravening rule ($g_1^h, \dots, g_m^h \;$ \textbf{contravenes} $a^h$) is active in s, if $a^h$ $\in {\bf A}^H$ and $g_1^h, \dots, g_m^h \; \in s$ and all inhibition rules and facilitating rules of action a are passive in s, otherwise, the contravening rule is passive in s. The set $A_{INT}(s)$ is the set of actions for which there exists at least one active contravening rule in s. 
    
    	\item
    	A dynamic causal law (a causes $f_1,\dots,f_n \;$ if $g_1,\dots,g_m \;$) is applicable in s, if $g_1,\dots,g_m \in s$.
    
    	\item
    	A static causal law ($f_1,\dots,f_n \;$ if $g_1,\dots,g_m \;$) is applicable in s, if $g_1,\dots,g_m \in s \;$.    

        \item
    	A dynamic causal law (a \textbf{influences} $f_1^h,\dots,f_n^h \;$ if $g_1,\dots,g_m \;$) is applicable in s, if $g_1,\dots,g_m \in s$ , and
    	$f_1^h,\dots,f_n^h \in F^H$, and
    	$f_i \in F (1 \leq i \leq n)$.   	 
    
    	\item
    	A static causal law ($g_1,\dots,g_m \;$ \textbf{influences} $f_1^h,\dots,f_n^h$) is applicable in s, if $g_1,\dots,g_m \in s \;$ , and
    	$f_1^h,\dots,f_n^h \in F^H$, and
    	$f_i \in F (1 \leq i \leq n)$.	 

        \item A forbidding rule \((g_1^h, \dots, g_m^h \; \mathbf{forbids~to~cause} \; f_1^h,\dots,f_n^h)\) is active in \( s \) if \( \{g_1^h, \dots, g_m^h\} \subseteq s \), where \( f_1^h, \dots, f_n^h, g_1^h, \dots, g_m^h \in {\bf F}^H \). The set \( F(s) \) is the set of fluents that are forbidden in $s+1$ when at least one active forbidding rule exists in \( s \). 

	\end{enumerate}
\end{definition}

Intuitively, the rules describe how actions may or may not occur, and how fluents are connected across and within states. Rule (1) states that certain fluents can inhibit an action, making it inactive in the current state. Rule (2) captures when fluents trigger an action, meaning it must occur if its triggers are satisfied and it is not inhibited. Rule (3) specifies when fluents allow an action, meaning the action is permitted but not enforced. Rules (4) and (5) extend this to human actions: some mental fluents can facilitate a human action, while others can contravene it. Rule (6) introduces dynamic causal laws, describing how actions under certain conditions cause fluents to hold in the next state. Rule (7) introduces static causal laws, which specify that if some fluents hold in a state, then others must co-hold in the same state. Rule (8) adds dynamic influence for mental fluents: when an action occurs under certain conditions, it brings about specified mental fluents in the next state. Rule (9) specifies static influence for mental fluents: when its conditions hold, the corresponding mental fluents must also hold in the same state. Finally, Rule (10) introduces forbids to cause, which regulates mental change by specifying that if certain mental fluents hold in the current state, then some mental fluents are not allowed to appear in the next state.

The output of the action language is in terms of trajectories. A mental-state trajectory consists of a sequence of valid transitions, represented as \\ $\langle s_0, A_1, s_1, A_2, \dots, A_n, s_n \rangle$, with alternating sets of mind-altering actions $A \subseteq \mathbf{A}$ and mental-states $s \in S$, following the constraints of the BG.

\begin{definition}[Trajectory]
\label{def:trajectory}
Let $D^{MT}(\bf A, F)$ be a domain description. A trajectory $\langle s_0,A_1,s_1,A_2,$ $\dots,$ $A_n,s_n\rangle$ of $D^{MT}(\bf A, F)$ is a sequence of sets of actions $A_i \subseteq A$ and states $s_i$ of $D^{MT}(\bf A, F)$ satisfying the following conditions for 0 $\leq$ i $<$ n:
\begin{enumerate}

\item $(s_i, A, s_{i+1})\in S \times 2^A \backslash \{\} \times S$

\item $A_T(s_i) \subseteq A_{i+1}$

\item $A_{FAC}(s_i) \subseteq A_{i+1}$

%\item $A_{INT}(s_i) \subseteq A_{i+1}$

\item $\overline{\rm A}_T(s_i) \cap A_{i+1} = \emptyset$

\item $\overline{\rm A}_A(s_i) \cap A_{i+1} = \emptyset$

\item $A_I(s_i) \cap A_{i+1} = \emptyset$

\item $\overline{\rm A}_{FAC}(s_i) \cap A_{i+1} = \emptyset$

\item $A_{INT}(s_i) \cap A_{i+1} = \emptyset$

\item $|A_i \cap B| \leq 1 ~for~all~( noconcurrency~B )~\in~D^{MT}(\bf A, F).$

\item \( F(s_i) \cap s_{i+1} = \emptyset \) ~~(no forbidden fluents in $s_{i+1}$)

\end{enumerate}

\end{definition}

The set \( B \) in condition (9) represents a subset of actions restricted by noconcurrency constraints in \( D^{MT}({\bf A}, {\bf F}) \), ensuring that actions in \( B \) cannot execute simultaneously. This prevents conflicts where multiple actions attempt to modify the same fluent at the same time. While actions affecting distinct fluents can occur concurrently, those altering the same fluent are mutually exclusive, maintaining consistency in fluent updates and ensuring well-defined state transitions.

\begin{definition}[Action Observation Language]
\label{def:action observation language}
The action observation language of ${\cal C}_{MT}$ (similar to ${\cal C}_{TAID}$) consists of expressions of the following form:

\begin{tabular}{ll}
$(f \;\; \mathbf{ at } \;\;t ) \; \;$  $(a\;\;  \mathbf{ occurs\_at  }\;\; t )$ & $(8)$ \\

\end{tabular}

\noindent where $f \in {\bf F}$, $a$ is an action and $t \in \mathbb{N}$ is a point in time.

\end{definition}

The action observation language allows us to specify observations concerning the current state of mental fluents and the execution of actions that influence mental states. By combining observations with the causal laws of the domain description, we can generate plans, explanations, and predictions regarding the behavior of the system. This integration of observations and the domain description is referred to as an action theory.

\begin{definition}[Action Theory]
\label{def:action theory}
Let $D$ be a domain description and $O$ be a set of observations. The pair $(D,O)$ is called an action theory.
\end{definition}

The action theory forms the basis for constructing trajectory models, trajectories where all observations are satisfied, providing a structured representation of the system's dynamics over time. Trajectory models enable us to analyze and reason about the evolution of mental states and actions, allowing for a deeper understanding how the mental state domain operates and how it responds to different observations and changes.

\begin{definition}[Trajectory Model]
\label{def:trajectory model}
Let $(D,O)$ be an action theory. A trajectory $\langle s_0,A_1,s_1,A_2,$ $\dots,$ $A_n,s_n\rangle$ of $D$ is a trajectory model of $(D,O)$, if it satisfies all observations of $O$ in the following way:

\begin{enumerate}
  \item if $(f \;at\; t) \in O$, then $f \in s_t$
  \item if $(a \;occurs\_at\; t) \in O$, then $a \in A_{t+1}$.
\end{enumerate}

\end{definition}

We can observe that actions in a trajectory model can be actions executed by a rational software agent to influence mental fluents, or actions estimated to be executed by a human agent.

An important consequence of the $\mathbf{forbids~to~cause}$ rule is that it can render an action theory inconsistent. 
Since such a rule blocks any transition to a state containing its forbidden fluents, a trajectory that reaches such a state is invalid. 
If all candidate trajectories are thus blocked, then no trajectory model exists and the action theory is inconsistent. 
This regulative effect is intentional, ensuring that unwanted mental fluents cannot appear in valid states.

\begin{definition}[Action Theory Consistency]
\label{def:consistency}
An action theory $(D,O)$ is \emph{consistent} iff there exists a trajectory model of $(D,O)$. Otherwise, $(D,O)$ is \emph{inconsistent}.
\end{definition}

 % (Definition~\ref{def:consistency}).
%
The central computational task is to determine, for a given ${\cal C}_{MT}$ action theory $(D,O)$, whether there exists a trajectory model of $(D,O)$. This baseline feasibility check underpins subsequent reasoning tasks such as planning, explanation, and query answering. %We therefore state it explicitly as a decision problem.

\begin{definition}[${\cal C}_{MT}$ Decision Problem]
Given a ${\cal C}_{MT}$ theory $(D,O)$, decide if there exists a trajectory model of $(D,O)$.
\end{definition}

We need a mechanism to write queries about state dynamics. The Action Query Language provides a means to inquire about specific sequences of actions and their impact on fluents and states. This is achieved by specifying subsets of the action set and their corresponding occurrences in time.

\begin{definition}[Action Query Language] \label{def:QueryLanguage}
\label{def:action query language}
The action query language of ${\cal C}_{MT}$ regards assertions about executing sequences of actions with expressions that constitute trajectories. A query is of the following form:
$(f_1,\dots,f_n \; \mathbf{after}$  $A_i$ $\mathbf{ occurs\_at}$ $t_i, \dots, A_m$ $\mathbf{ occurs\_at}$ $t_m)$
\noindent where $f_1$, \dots, $f_n$ are fluent literals $\in {\bf F}$, $A_i$, \dots, $A_m$ are subsets of ${\bf A}$, and $t_i$, \dots, $t_m$ are points in time.

\end{definition}

By formulating queries in the action language, causal relationships between actions and their effects can be investigated, contributing to a deeper understanding of system dynamics, informed decision-making and controlled methods for automated planning.

\begin{definition}[Query Truth in Action Theory]
Let $(D,O)$ be an action theory and $Q$ a query.  
\begin{itemize}
  \item $Q$ holds \emph{skeptically} in $(D,O)$ iff $\tau \models Q$ for every trajectory model $\tau$ of $(D,O)$.
  \item $Q$ holds \emph{credulously} in $(D,O)$ iff $\tau \models Q$ for some trajectory model $\tau$ of $(D,O)$.
\end{itemize}
\end{definition}

Given a ${\cal C}_{MT}$ action theory $(D,O)$ and a query $Q$, the decision problem regards whether $Q$ holds in all (or some) trajectory models of $(D,O)$.

%\begin{definition}[Query Decision Problems]
%Given an action theory $(D,O)$ and a query $Q$:
%\begin{itemize}
%  \item the \emph{skeptical query decision problem} is to decide whether $Q$ holds skeptically in $(D,O)$;
%  \item the \emph{credulous query decision problem} is to decide whether $Q$ holds credulously in $(D,O)$.
%\end{itemize}
%\end{definition}

\subsection{Action Language in Answer Set Semantics}
\label{section:AnswerSetSemantics-ae}

%In this section, we provide operational semantics for ${\cal C}_{MT}$. This operational semantics is based in translations of expressions from the ${\cal C}_{MT}$ domain description (\ref{def:domain description language}) into answer set programs. The descriptions of expressions 1-7 follow the definitions in \cite{dworschak2008mbn}. In order to keep the paper self-contained with a complete translation, we cover the necessary translations from \cite{dworschak2008mbn} building up to the proposed mental-state specifications. We organize the translations into distinct subsections, addressing the Action Description Language, the Action Observation Language, and the Action Query Language.

%In this section, we provide operational semantics for ${\cal C}_{MT}$. 
%The goal of the ASP encoding is to capture trajectory models of a ${\cal C}_{MT}$ theory as answer sets. 
%Each answer set corresponds to exactly one trajectory model, with the predicate \texttt{holds}$(f,t)$ recording which fluents hold at each time $t$, and \texttt{holds(occurs}(a),t\texttt{)} recording which actions occur. 
%In this way, the encoding provides a one-to-one correspondence between answer sets and trajectory models, thereby solving the central reasoning problems of ${\cal C}_{MT}$ (trajectory model existence and query answering) by ASP computation.

In this section, we provide operational semantics for ${\cal C}_{MT}$ by translation into answer set programs. 
The goal is to construct an encoding such that the encoding has exactly one answer set for every trajectory model of a ${\cal C}_{MT}$ theory. 
Intuitively, each answer set represents a trajectory model: for a trajectory $\langle s_0,A_1,s_1,\dots,A_n,s_n\rangle$, the atoms \texttt{holds}$(f,t)$ indicate that fluent $f$ holds in state $s_t$, and the atoms \texttt{holds(occurs}(a),t\texttt{)} indicate that action $a$ occurs between $s_t$ and $s_{t+1}$. 
Conversely, each answer set uniquely determines a trajectory by the fluents and actions it contains, thereby establishing a one-to-one correspondence between answer sets and trajectory models and reducing the central reasoning problems of ${\cal C}_{MT}$ to ASP computation.

To make the presentation self-contained, we include translations from \cite{dworschak2008mbn} for the ${\cal C}_{TAID}$ language (expressions 1–7 in Definition~\ref{def:domain description language}), and then extend them with the new constructs of ${\cal C}_{MT}$. 
We organize the translations into distinct subsections, addressing the Action Description Language, the Action Observation Language, and the Action Query Language.

\subsubsection{Encoding of the Action Description Language}
\label{translation}

We consider the encoding of ${\cal C}_{TAID}$ action description language, which we extend with components of ${\cal C}_{MT}$.\\

\noindent Define symbols for a fluent f $\in \bf F^E$ and an action a $\in \bf A^E$ (extending \cite{dworschak2008mbn}).
\begin{verbatim}
fluent_e(f), action_e(a).
fluent(f) :- fluent_e(f).
action(a) :- action_e(a).
\end{verbatim}

\noindent Define symbols for a fluent e $\in \bf F^H$ and an action u $\in \bf A^H$ (extending \cite{dworschak2008mbn}).
\begin{verbatim}
mental_fluent(e), human_action(u).
fluent(e) :- mental_fluent(e).
action(u) :- human_action(u).
\end{verbatim}

\noindent Define a range of time points $0 \leq t \leq t\_max$, $t \in \mathbb{N}$ (as in \cite{dworschak2008mbn}).
\begin{verbatim}
time(0..t_max). #const t_max = N. 
\end{verbatim}

%\noindent \textbf{Psychological Classes and Values (introduced in ${\cal C}_{MT}$)}: A psychological class $c \in C$ represents a distinct attribute or dimension of a mental state, while a psychological value $v \in V_c$ is a specific value that this attribute can take. Together, they define the mental fluents within a Belief Graph. Let $n = |C|$ be the number of psychological classes and $m = |V|$ be the number of psychological values across all classes. Encoded in ASP as:

%\begin{verbatim}
% Define psychological classes
%psychological_class(c_1). [...] psychological_class(c_n).

% Define psychological values
%psychological_value(v_1). [...] psychological_value(v_m).

% Define valid class-value mappings
%wellformed_mental_fluent(c_1, v_1). [...] wellformed_mental_fluent(c_n, v_m).
%\end{verbatim}

%\noindent \textbf{Mental Fluents (introduced in ${\cal C}_{MT}$)}: A mental fluent $f(c,v) \in {\bf F}^H$ represents a component of a mental state, where $c \in C$, is a psychological class and $v \in V_c$ is a valid value for that class. This is encoded in ASP as:

%\begin{verbatim}
%mental_fluent(c, v) :- wellformed_mental_fluent(c, v).
%fluent_h(c, v) :- mental_fluent(c, v).
%\end{verbatim}
%

\noindent \textbf{Contradiction Constraint (as in \cite{dworschak2008mbn})}: A fluent $f \in {\bf F}$ and its negation $\neg f$ cannot hold simultaneously at the same time step $T \in \mathbb{N}$. In ASP, this is enforced as a constraint, ensuring that $\texttt{holds}(f,T)$ and $\texttt{holds(neg(f),T)}$ cannot both be true. The predicates $\texttt{fluent}(f)$ and $\texttt{time}(T)$ declare fluents and time steps.

\begin{verbatim}
:- holds(f,T), holds(neg(f),T), fluent(f), time(T).
\end{verbatim}

\noindent \textbf{Inertial Fluents (as in \cite{dworschak2008mbn})}: An inertial fluent persists across time steps unless modified by an action or a static causal law. In ASP, this is encoded by ensuring that if $\texttt{holds}(f,T)$ is true and not overridden, then $\texttt{holds}(f,T+1)$ holds by default. The predicates $\texttt{fluent}(f)$ and $\texttt{time}(T)$ track fluents over time.

\begin{verbatim}
holds(f,T+1) :- holds(f,T), not holds(neg(f,T+1)), not default(f),
fluent(f), time(T), time(T+1).
\end{verbatim}

\noindent \textbf{Non-Inertial Fluents (as in \cite{dworschak2008mbn})}: A non-inertial fluent resets to a default value unless explicitly updated. In ASP, this is encoded by inferring $\texttt{holds}(f,T)$ whenever $\texttt{default}(f)$ holds and $\texttt{holds(neg(f),T)}$ is not inferred. The predicates $\texttt{fluent}(f)$, $\texttt{default}(f)$, and $\texttt{time}(T)$ define non-inertial fluents and their default behavior.

\begin{verbatim}
holds(f,T) :- not holds(neg(f),T), default(f), fluent(f), time(T).
\end{verbatim}

\noindent \textbf{Dynamic Causal Law (as in \cite{dworschak2008mbn})}: A dynamic causal law $(a \; \mathbf{causes}$ $f$ $\mathbf{if}$ $g_1, \dots, g_n)$ states that if an action $a \in {\bf A}$ occurs at time $T \in \mathbb{N}$ and fluents $g_1, \dots, g_n \in {\bf F}$ hold, then a fluent $f \in {\bf F}$ must hold at time $T+1$. In ASP, one rule is generated for each dynamic causal law, ensuring that if $\texttt{holds(occurs}(a,T)\texttt{)}$ and $\texttt{holds}(g_1,T), \dots, \texttt{holds}(g_n,T)$ hold, then $\texttt{holds}(f,T+1)$ is inferred. The predicates $\texttt{fluent}(f)$ and $\texttt{fluent}(g_1), \dots, \texttt{fluent}(g_n)$ declare fluents, $\texttt{action}(a)$ declares the action, and $\texttt{time}(T)$ ensures valid time progression.

\begin{verbatim}
holds(f,T+1) :- holds(occurs(a),T), 
holds(g_1,T), ..., holds(g_n,T),
fluent(g_1), ..., fluent(g_n), 
fluent(f), action(a), time(T), time(T+1).
\end{verbatim}

\noindent \textbf{Static Causal Law (as in \cite{dworschak2008mbn})}: A static causal law $(f \; \mathbf{if} \; g_1, \dots, g_n)$ states that a fluent $f \in {\bf F}$ holds at time $T \in \mathbb{N}$ if fluents $g_1, \dots, g_n \in {\bf F}$ hold at the same time. In ASP, one rule is generated for each static causal law, ensuring that if $\texttt{holds}(g_1,T), \dots, \texttt{holds}(g_n,T)$ hold, then $\texttt{holds}(f,T)$ is inferred. The predicates $\texttt{fluent}(f)$ and $\texttt{fluent}(g_1), \dots, \texttt{fluent}(g_n)$ declare fluents, and $\texttt{time}(T)$ ensures valid time progression.

\begin{verbatim}
holds(f,T) :- holds(g_1,T), ..., holds(g_n,T), 
fluent(g_1), ..., fluent(g_n), fluent(f), time(T).
\end{verbatim}

\noindent \textbf{Inhibition Rule (as in \cite{dworschak2008mbn})}: An inhibition rule $(f_1, \dots, f_n \textbf{ inhibits } a)$ states that an action $a \in {\bf A}$ is prevented from occurring at time $T \in \mathbb{N}$ if fluents $f_1, \dots, f_n \in {\bf F}$ hold. In ASP, one rule is generated for each inhibition rule, ensuring that if $\texttt{holds}(f_1,T), \dots, \texttt{holds}(f_n,T)$ hold, then $\texttt{holds(ab(occurs}(a),T)\texttt{)}$ is inferred, marking the action as inhibited. The predicates $\texttt{fluent}(f_1),$ $\dots,$ $\texttt{fluent}(f_n)$ declare fluents, $\texttt{action}(a)$ declares the action, and $\texttt{time}(T)$ ensures valid time progression.

\begin{verbatim}
holds(ab(occurs(a)),T) :- holds(f_1,T), ..., holds(f_n,T), 
action(a), fluent(f_1), ..., fluent(f_n), time(T).
\end{verbatim}

%\noindent The predicate $\texttt{ab(occurs(a),T)}$ encodes action inhibition, meaning that if $\texttt{holds(ab(occurs(a)),T)}$ is inferred, the action $a \in {\bf A}$ cannot occur at time $T \in \mathbb{N}$.

As in ${\cal C}_{TAID}$, unlike standard ASP planning encodings, we do not use a choice rule for action generation, such as 

\texttt{\{ holds(occurs(A), T): action(A) \} = 1 :- T = 1..t\_max.} Instead, actions are determined by logical conditions: they occur if they are explicitly triggered, allowed, and not inhibited. This ensures that only valid and necessary actions are selected, avoiding unnecessary non-determinism.\\

%\noindent \textbf{Nondeterministic Action Selection (Plan Generation)}: At each time step $T \in \mathbb{N}$, exactly one action $a \in {\bf A}$ is selected non-deterministically. This corresponds to a possible execution where one action is chosen per step. In ASP, this is encoded as a choice rule ensuring that one action occurs at each time point. The predicates $\texttt{action}(A)$ and $\texttt{time}(T)$ define the set of actions and time steps.

%\begin{verbatim}
%{ holds(occurs(A), T): action(A) } = 1 :- T = 1..t_max.
%\end{verbatim}

\noindent \textbf{Triggering Rule (as in \cite{dworschak2008mbn})}: A triggering rule $(f_1, \dots, f_n \textbf{ triggers } a)$ states that an action $a \in {\bf A}$ occurs at time $T \in \mathbb{N}$ if fluents $f_1, \dots, f_n \in {\bf F}$ hold and no inhibition rule is active. In ASP, one rule is generated for each triggering rule, ensuring that if $\texttt{holds}(f_1,T), \dots, \texttt{holds}(f_n,T)$ hold and $\texttt{not holds(ab(occurs}(a,T)\texttt{))}$, then $\texttt{holds(occurs}(a,T)\texttt{)}$ is inferred. The predicates $\texttt{fluent}(f_1), \dots, \texttt{fluent}(f_n)$ declare fluents, $\texttt{action}(a)$ declares the action, and $\texttt{time}(T)$ ensures valid time progression.

\begin{verbatim}
holds(occurs(a),T) :- not holds(ab(occurs(a)),T), 
holds(f_1,T), ..., holds(f_n,T), 
fluent(f_1), ..., fluent(f_n), action(a), time(T).
\end{verbatim}

\noindent \textbf{Allowance Rule (as in \cite{dworschak2008mbn})}: An allowance rule $(f_1, \dots, f_n \textbf{ allows } a)$ states that an action $a \in {\bf A}$ is permitted to occur at time $T \in \mathbb{N}$ if fluents $f_1, \dots, f_n \in {\bf F}$ hold and no inhibition rule is active. In ASP, one rule is generated for each allowance rule, ensuring that if $\texttt{holds}(f_1,T), \dots, \texttt{holds}(f_n,T)$ hold and $\texttt{not holds(ab(occurs}(a,T)\texttt{))}$, then $\texttt{holds(allow(occurs}(a),T)\texttt{)}$ is inferred. The predicates $\texttt{fluent}(f_1), \dots, \texttt{fluent}(f_n)$ declare fluents, $\texttt{action}(a)$ declares the action, and $\texttt{time}(T)$ ensures valid time progression.

\begin{verbatim}
holds(allow(occurs(a)),T) :- not holds(ab(occurs(a)),T), 
holds(f_1,T), ..., holds(f_n,T), fluent(f_1), ..., fluent(f_n), 
action(a), time(T).
\end{verbatim}

\noindent \textbf{Ensure Exogenous Actions Can Always Occur (as in \cite{dworschak2008mbn})}: This rule ensures that an action $a \in {\bf A}$ is always permitted to occur at any time step $T \in \mathbb{N}$, regardless of specific preconditions. In ASP, this is encoded by inferring $\texttt{holds(allow(occurs}(a),T)\texttt{)}$ for every declared action $\texttt{action}(a)$ and time point $\texttt{time}(T)$. This guarantees that exogenous actions remain available throughout the execution.

\begin{verbatim}
holds(allow(occurs(a)),T) :- action(a), time(T).
\end{verbatim}

\noindent \textbf{No-Concurrency Constraint (as in \cite{dworschak2008mbn})}: The actions $a_1, \dots, a_n \in {\bf A}$ cannot occur simultaneously at time $T \in \mathbb{N}$. In ASP, this is encoded as a constraint that ensures selecting two or more actions at $T$ leads to inconsistency. The rule applies over declared actions $\texttt{action}(a_1), \dots, \texttt{action}(a_n)$ and time points $\texttt{time}(T)$.

\begin{verbatim}
:- time(T), 2 {holds(occurs(a_1),T) : 
action(a_1), ..., holds(occurs(a_n),T) : action(a_n)}.
\end{verbatim}

\subsubsection{Encoding of the Action Observation Language}

We consider the encoding of ${\cal C}_{TAID}$ action observation language, which we extend with components of ${\cal C}_{MT}$. The action observation language allows for the specification of observations about fluents and actions over time. Observations provide constraints on the initial state and subsequent state transitions, ensuring that execution traces align with known facts.\\

\noindent \textbf{Initial State Fluent Observations (as in \cite{dworschak2008mbn})}: A fluent $f \in {\bf F}$ that holds in the initial state at time $T = 0$ is directly asserted. In ASP, this is represented by $\texttt{holds}(f,0)$, ensuring that observed fluents are set at the start of execution, forming the basis for reasoning about subsequent state transitions.

\begin{verbatim}
holds(f,0).
\end{verbatim}

%\noindent The initiation of a mental fluent $f(c,v) \in {\bf F}^H$, where $c \in C$ is a psychological class and $v \in V_c$ is a psychological value, is done in a similar manner, but first initiated with the $\texttt{init\_on}(c, v)$ predicate.

%\begin{verbatim}
%holds(mental_fluent(c, v), 0) :- init_on(c, v).
%\end{verbatim}

\noindent \textbf{Fluent Observations for Other States (as in \cite{dworschak2008mbn})}: If a fluent $f \in {\bf F}$ is observed to hold at time $T \in \mathbb{N}$, it must be enforced. In ASP, this is encoded as a constraint ensuring that $\texttt{holds}(f,T)$ must be true whenever $f$ is observed. The rule applies over declared fluents $\texttt{fluent}(f)$ and time points $\texttt{time}(T)$.

\begin{verbatim}
:- not holds(f,T), fluent(f), time(T).
\end{verbatim}

\noindent \textbf{Generate Possible Completions of the Initial State (as in \cite{dworschak2008mbn})}: A fluent $f \in {\bf F}$ in the initial state at $T = 0$ must either hold or its negation $\neg f$ must hold, but not both. In ASP, this is encoded using a default completion principle, ensuring that $\texttt{holds}(f,0)$ is inferred unless $\texttt{holds(neg(f),0)}$ is explicitly stated, and vice versa. The rule applies over declared fluents $\texttt{fluent}(f)$ at time step $\texttt{time}(0)$.

\begin{verbatim}
holds(f,0) :- not holds(neg(f),0).
holds(neg(f),0) :- not holds(f,0).
\end{verbatim}

\noindent \textbf{Exogenous Action Observations (as in \cite{dworschak2008mbn})}: An exogenous action $a \in {\bf A}$ that occurs at time $T \in \mathbb{N}$ is recorded as a fact. This corresponds to an external execution of $a$, independent of triggering, allowance, or inhibition conditions. In ASP, this is represented by asserting $\texttt{holds(occurs}(a),T\texttt{)}$, ensuring that the action is registered as having taken place. The rule applies over declared actions $\texttt{action}(a)$ and time points $\texttt{time}(T)$.

\begin{verbatim}
holds(occurs(a),T).  
\end{verbatim}

\noindent \textbf{Observed Non-Occurrence of Actions (as in \cite{dworschak2008mbn})}: If an action $a \in {\bf A}$ does not occur at time $T \in \mathbb{N}$, this must be explicitly recorded. This corresponds to cases where no triggering or allowance rule is active, or an inhibition rule prevents execution. In ASP, this is enforced as a constraint, ensuring that $\texttt{holds(neg(occurs}(a),T\texttt{))}$ is inferred when the action does not occur. The rule applies over declared actions $\texttt{action}(a)$ and time points $\texttt{time}(T)$.

\begin{verbatim}
:- not holds(neg(occurs(a)),T), action(a), time(T).  
\end{verbatim}

\noindent \textbf{Action Execution Conditions (as in \cite{dworschak2008mbn})}: An action $a \in {\bf A}$ occurs at time $T \in \mathbb{N}$ if at least one allowance rule $(f_1, \dots, f_n \mathbf{allows}~a)$ is active, no inhibition rule $(f_1, \dots, f_n \mathbf{inhibits}~a)$ applies, and no external constraint negates its occurrence. In ASP, this is encoded by ensuring that $\texttt{holds(occurs}(a),T\texttt{)}$ is inferred under these conditions. Additionally, if an action does not occur, its negation $\texttt{holds(neg(occurs}(a),T\texttt{))}$ is inferred. The rule applies over declared actions $\texttt{action}(a)$ and time points $\texttt{time}(T)$, where $T < t_{\max}$.

\begin{verbatim}
holds(occurs(a),T) :- holds(allow(occurs(a)),T), 
not holds(ab(occurs(a)),T), not holds(neg(occurs(a)),T), 
action(a), time(T), T < t_max.

holds(neg(occurs(a)),T) :- not holds(occurs(a),T), 
action(a), time(T), T < t_max.
\end{verbatim}

\subsubsection{Encoding of the Action Query Language}

We consider the encoding of ${\cal C}_{TAID}$ action query language, which we extend with components of ${\cal C}_{MT}$.\\

%The initial state can be partially specified; it is then completed by the rules in (10) for taking into account all possible initial states. 

%\noindent To incorporate plan generation, we need a nondeterministic choice rule that allows the selection of actions at each time step while ensuring that the chosen sequence of actions achieves the goal. 

\noindent \textbf{Goal Achievement Constraint (as in \cite{dworschak2008mbn})}: The goal must be achieved in every valid plan. This corresponds to ensuring that at least one fluent configuration satisfying the goal holds. In ASP, this is enforced as a constraint ensuring that $\texttt{achieved}$ is inferred.

\begin{verbatim}
:- not achieved.
\end{verbatim}

\noindent \textbf{Initial Goal Satisfaction (as in \cite{dworschak2008mbn})}: The goal is considered achieved if it already holds at the initial time step $T=0$.

\begin{verbatim}
achieved :- achieved(0).
\end{verbatim}

\noindent \textbf{Persistence of Goal Achievement (as in \cite{dworschak2008mbn})}: If the goal is achieved at time $T+1$ but was not achieved at $T$, then it remains achieved for all subsequent time steps. The predicates $\texttt{time}(T)$ ensure valid time progression.

\begin{verbatim}
achieved :- achieved(T+1), not achieved(T), time(T), time(T+1).
\end{verbatim}

\noindent \textbf{Fluent-Based Goal Satisfaction (as in \cite{dworschak2008mbn})}: The goal is achieved at time $T$ if a set of fluents $f_1, \dots, f_n \in {\bf F}$ required for goal satisfaction hold. This corresponds to a static causal law of the form $(f_1, \dots, f_n \mathbf{if} \; \text{goal\_conditions})$. In ASP, this is encoded by inferring $\texttt{achieved}(T)$ when $\texttt{holds}(f_1,T), ..., \texttt{holds}(f_n,T)$ hold. %As we aim to consider goals for mental fluents differently, the fluents here are specifically about the environment.

\begin{verbatim}
achieved(T) :- holds(f_1, T), ..., holds(f_n, T), 
achieved(T+1), fluent(f_1), ..., fluent(f_n), time(T), time(T+1).
\end{verbatim}

\noindent \textbf{Final Goal Satisfaction (as in \cite{dworschak2008mbn})}: The goal must hold at the maximum time step $t_{\max}$. This ensures that the trajectory satisfies the required final state.

\begin{verbatim}
achieved(t_max) :- holds(f_1, t_max), ..., holds(f_n, t_max),
fluent(f_1), ..., fluent(f_n).
\end{verbatim}

%\noindent \textbf{Mental State Goal Satisfaction}: In ${\cal C}_{MT}$, particularly for mental fluents, the goal must be achieved in the final time step. In ASP, this is enforced by initially declaring the goal with the predicate $\texttt{goal\_on}(c, v)$, where $c \in C$ is a psychological class and $v \in V_c$ is a psychological value, as a constraint ensuring that $\texttt{goal\_on}(c, v)$ is inferred.

%\begin{verbatim}
%:- goal_on(c, v), not holds(mental_fluent(c, v), t_max).
%\end{verbatim}

\noindent \textbf{Execution of Allowed Actions (as in \cite{dworschak2008mbn})}: An action $a \in {\bf A}$ occurs at time $T \in \mathbb{N}$ if it is allowed by an active allowance rule $(f_1, \dots, f_n \mathbf{allows}~a)$, is not inhibited by an inhibition rule $(f_1, \dots, f_n \mathbf{inhibits} a)$, and the goal has not yet been achieved. In ASP, this ensures that $\texttt{holds(occurs}(a),T\texttt{)}$ is inferred only when these conditions hold. The predicates $\texttt{fluent}(f_1), ..., \texttt{fluent}(f_n)$ declare fluents, $\texttt{action}(a)$ declares actions, and $\texttt{time}(T)$ defines time steps.

\begin{verbatim}
holds(occurs(a), T) :- holds(allow(occurs(a)), T), not achieved(T), 
not holds(ab(occurs(a)), T), not holds(neg(occurs(a)), T), 
action(a), time(T).
\end{verbatim}

\noindent \textbf{Explicit Non-Occurrence of Actions (as in \cite{dworschak2008mbn})}: If an action $a \in {\bf A}$ does not occur at time $T \in \mathbb{N}$, this must be explicitly recorded. This corresponds to enforcing that an action not chosen in the trajectory is negated.

\begin{verbatim}
holds(neg(occurs(a)), T) :- not holds(occurs(a), T), 
action(a), time(T).
\end{verbatim}

\subsubsection{Encoding of the Mental State Language Extension}

We now proceed by presenting the translations for static and dynamic causal laws for mental state specifications introduced in ${\cal C}_{MT}$. \\

\noindent \textbf{Dynamic causal law for Mental Fluents (introduced in ${\cal C}_{MT}$)}: A dynamic causal law $(a \; \mathbf{influences} \; f_1^h, \dots, f_n^h \; \mathbf{if} \; g_1, \dots, g_m)$ specifies that an action $a \in {\bf A}$ causes mental fluents $f_1^h, \dots, f_n^h \in {\bf F}^H$ to hold at $T+1$ if fluents $g_1, \dots, g_m \in {\bf F}$ hold at $T \in \mathbb{N}$. The ASP encoding generates a rule for each mental fluent $f_i^h \in {\bf F}^H$ ($1 \leq i \leq n$), of the form: 

\begin{verbatim}
holds(f_i, T+1) :- holds(occurs(a), T), 
    holds(g_1, T), ..., holds(g_m, T),
    fluent(g_1), ..., fluent(g_m), mental_fluent(f_i),
    action(a), time(T).
\end{verbatim}

%We can encode a specific action $a \in {\bf A}$ as a predicate $\texttt{influence}(c,v)$, where $c \in C$ is a psychological class and $v \in V_c$ is a psychological value, that influences mental fluents $f_1^h, \dots, f_n^h \in {\bf F}^H$. In ASP, encoded as:

%\begin{verbatim}
%action(influence(c, v)) :- mental_fluent(c, v).
%\end{verbatim}

\noindent \textbf{Static causal law for Mental Fluents (introduced in ${\cal C}_{MT}$)}: A static causal law $(g_1, \dots, g_m \; \mathbf{influences} \; f_1^h,\dots,f_n^h)$ states that if fluents $g_1, \dots, g_m \in {\bf F}$ hold at time $T$, then mental fluents $f_1^h,\dots,f_n^h \in {\bf F}^H$ also hold at $T \in \mathbb{N}$.  The ASP encoding generates a rule for each mental fluent $f_i^h \in {\bf F}^H$ ($1 \leq i \leq n$), of the form: 

\begin{verbatim}
holds(f_i, T) :- holds(g_1, T), ..., holds(g_m, T),
        fluent(g_1), ..., fluent(g_m), mental_fluent(f_i),
        time(T).
\end{verbatim}

%contravening rules regards scenarios not expecting a human action, while facilitating rules regard scenarios expecting a human action.\\
The following translations regard contravening and facilitating rules (introduced in ${\cal C}_{MT}$). Unlike the `inhibition' and `allowance' rules outlined in ${\cal C}_{TAID}$ \cite{dworschak2008mbn}, which impact actions in the current time step, `contravenes' and `facilitates' relate particularly about a human action $a^h \in {\bf A}^H$ and mental fluents $f^h \in {\bf F}^H$. As we have previously discussed; appraisal theories have suggested that emotional states have distinctive ``action tendencies'' \cite{roseman1994phenomenology}. Capturing these semantics, while seemingly redundant due to the general rules in the action language, supports knowledge representation.\\

\noindent \textbf{Facilitation Rule (introduced in ${\cal C}_{MT}$)}: A facilitation rule $(g_1^h,\dots,g_m^h$ $\mathbf{facilitates} \; a^h)$ states that if mental fluents $g_1^h, \dots, g_m^h \in {\bf F}^H$ hold at time $T \in \mathbb{N}$, and the action $a^h \in {\bf A}^H$ is not inhibited, then $a^h$ occurs. The ASP encoding ensures that $\texttt{holds(occurs}(a^h),T\texttt{)}$ is inferred if $\texttt{not holds(ab(occurs}(a^h),T)\texttt{)}$ holds.

\begin{verbatim}
holds(occurs(a), T) :- not holds(ab(occurs(a)), T),
        holds(g_1, T), ..., holds(g_m, T),
        mental_fluent(g_1), ..., mental_fluent(g_m),
        human_action(a), time(T).
\end{verbatim}

\noindent \textbf{Contravening Rule (introduced in ${\cal C}_{MT}$)}: A contravening rule $(g_1^h,\dots,g_m^h$ $\mathbf{contravenes} \; a^h)$ states that if mental fluents $g_1^h, \dots, g_m^h \in {\bf F}^H$ hold at time $T \in \mathbb{N}$, then the human action $a^h \in {\bf A}^H$ is inhibited from occurring. The ASP encoding infers $\texttt{holds(ab(occurs}(a^h),T\texttt{))}$ under these conditions.

\begin{verbatim}
holds(ab(occurs(a)), T) :- holds(g_1, T), ..., holds(g_m, T),
        mental_fluent(g_1), ..., mental_fluent(g_m),
        human_action(a), time(T).
\end{verbatim}

\noindent \textbf{Forbidding Rule (introduced in ${\cal C}_{MT}$)}: A forbidding rule $(g_1^h,\dots,g_m^h$ \\ $\mathbf{forbids~to~cause}$ $f_1^h, \dots, f_n^h)$ enforces constraints on mental state transitions in the belief graph. If mental fluents $g_1^h, \dots, g_m^h \in {\bf F}^H$ hold at time $T \in \mathbb{N}$, then fluents $f_1^h, \dots, f_n^h \in {\bf F}^H$ are forbidden to hold at $T+1$. This ensures that the belief graph’s constraints on state transitions are respected. The ASP encoding introduces an integrity constraint to prohibit transitions that violate these restrictions.

\begin{verbatim}
:- holds(f_1, T+1), ..., holds(f_n, T+1), 
        holds(g_1, T), ..., holds(g_m, T),
        mental_fluent(f_1), ..., mental_fluent(f_n),
        mental_fluent(g_1), ..., mental_fluent(g_m),
        time(T).
\end{verbatim}

In the upcoming section, we will explore a case study that applies a specialization of ${\cal C}_{MT}$ for emotional reasoning. The case study models emotion states and dynamics based on well-established psychological theories. By examining this specialized application, we aim to demonstrate the effectiveness and versatility of the ${\cal C}_{MT}$ framework for modeling different kinds of mental state dynamics.

\section{Case Study: Emotional Reasoning}
\label{sec:Case Study Emotion}

\noindent The section presents a characterization of emotion dynamics in ${\cal C}_{MT}$ with specialized subsets of fluents and mental state constraints adapted for emotional reasoning. This is achieved by formalizing different emotion theories; the Appraisal theory of Emotion (AE) by Roseman (1996) \cite{roseman1996appraisal}, Hedonic Emotion Regulation (HER) \cite{zaki2020integrating}, and Utilitarian Emotion Regulation (UER) \cite{tamir2007business}, capturing links between human emotions and their underlying causes in the environment. To this end, subsets of mental state fluents, called emotion fluents, together with a set of action rules and sets of mental state constraints are specified, capturing principles from emotion theories. %By introducing a particular vocabulary for emotion fluents, and emotion related semantics following emotion theories, the action language captures dynamics of human emotions.
By following the psychological theory of AE, an emotion state-space is defined with 108 emotional configurations (states), through which a set of 16 basic human emotions, according to AE, can be represented and explained.
By following the theories of HER and UER, different constraints for emotional-change are defined. HER focuses on augmenting positive emotions while diminishing negative ones. Conversely, UER seeks to promote particular emotions, including potentially negative ones, that enhance specific attributes, such as motivation or control, serving a utilitarian purpose in the long run. Consequently, HER and UER adopt contrasting principles for effecting emotional change. By examining the trajectories produced through the application of either HER or UER within the state-space of AE, we can compare and evaluate their respective behaviors.

%By following the theories of HER and UER, two alternative sets of constraints for emotional-change are defined. The theory of HER aims to increase positive emotion and decrease negative emotion. On the other hand, the theory of UER aims to induce emotions (possibly negative ones) that increase certain attributes, such as motivation or control, to support a long-term utilitarian purpose. Hence, HER and UER comes with different principles for emotional change. By analyzing trajectories generated based on either HER or UER within the state-space of AE, we can effectively compare and assess their behaviors.
%We formalize these two alternative emotion regulation approaches in order to evaluate the framework's potential for modeling different mental-state dynamics. This comparison is done by analyzing generated trajectories that follow the respective principles, both within the AE-based state-space.

\subsection{Emotion Theories: AE, HER and UER}
\label{sec:Emotion Theories}

The \textbf{Appraisal theory of Emotion (AE)} by Roseman (1996) \cite{roseman1996appraisal} proposes that emotions are caused by an appraisal of a situation in terms of 1) being consistent or inconsistent with needs, importance of the situation, 2) being consistent or inconsistent with goals, the attainability/potential to achieve goals, 3) who/what is accountable/caused the situation, which can be the environment, others, or oneself, and 4) as being easy or difficult to control. According to AE, the difference between goal consistency/attainability and need consistency/importance determines negative, stable and positive emotions. More intense negative emotions (e.g., Anger or Fear) arise when the need consistency is greater than the goal consistency, while less intense negative emotions can arise when both the need consistency and goal consistency are low. On the other hand, positive emotions (e.g., Joy or Liking) arise when the goal consistency is greater than the need consistency, or when both are high. By ranking consistency values as \emph{Low $<$ Undecided $<$ High} and by looking at the difference between need and goal consistency, positive and negative emotions can be distinguished.

\textbf{Hedonic Emotion Regulation (HER)} \cite{zaki2020integrating} is a theory for regulating emotions, guided by the goals to increase positive emotion and decrease negative emotion. According to HER, both of these emotion regulation goals are associated with improved well-being, where decreasing of negative emotion has been most effective \cite{ortner2018roles}. The principles of HER can be applied to reason about emotional change. For instance, the relation between goal attainability and goal importance has been empirically explored, showing that goal attainability, rather than goal importance, was positively linked to well–being \cite{buhler2019closer}. Another empirical study analyzed self-responsibility and emotions, and showed that accountability does not by itself affect whether the emotion is negative or positive. Nevertheless, when something else than when an individual (self/other) is perceived as accountable, less negative emotions appear \cite{passyn2006self}. Another empirical study examined the relationship between control potential and emotions \cite{roseman1996appraisal}. The study found that the perception of high or low control potential influenced the experience of accommodating emotions or contending emotions, respectively. However, the study did not find a direct effect of control potential on the experience of positive or negative emotions.

By analyzing the four dimensions of emotion as proposed by AE (Need consistency, Goal consistency, Accountability, and Control potential), we can derive meaningful interpretations about the relationship between HER and AE in the context of hedonic emotional change. These interpretations are summarized in Table \ref{tab:HER-interpretations}.

\begin{table}[h]
\centering
\caption{Hedonic Emotion Regulation as basis for Principles of Change}
\label{tab:HER-interpretations}
{\tablefont\begin{tabular}{@{\extracolsep{\fill}} p{0.48\linewidth}  p{0.45\linewidth}}
	\hline
	\textbf{Hedonic Emotion Regulation} & \textbf{Principle of Change}\\
	\hline
	\textbf{Need Consistency/Importance:} A high need consistency is associated with positive emotions only when the goal consistency is high, but negative if the goal consistency is low (supported by \cite{buhler2019closer}). & High need consistency maximizes positive emotion only when goal consistency is high. \\
	\hline
	\textbf{Goal Consistency/Attainability:} A high goal consistency is associated with positive emotions. In particular when the need consistency is high (supported by \cite{buhler2019closer}). & High goal consistency maximizes positive emotion. \\
	\hline
	\textbf{Accountability:} Accountability does not affect whether the emotion is negative or positive. Nevertheless, when the environment is perceived as primarily accountable for the situation, less negative emotions appear than when an individual (self/other) is perceived as accountable (supported by \cite{passyn2006self}). & Accountability to environment maximizes positive emotions. \\
	\hline
	\textbf{Control Potential:} Control Potential does not affect whether the emotion is negative or positive. Nevertheless, a control potential above low is often associated with accommodating emotions, closer to positive emotions (supported by \cite{roseman1996appraisal}). & High control potential maximizes positive emotion when there is a non-negative balance between goal consistency and need consistency, i.e., when goal consistency is equal or higher than need consistency.\\
	\hline
\end{tabular}}
\end{table}

\textbf{Utilitarian Emotion Regulation (UER)} \cite{tamir2007business} is a theory emotion regulation that, in contrast to HER, is guided by the goals to experience emotions in the short-term for a long-term utilitarian purpose. 
This may involve accepting a temporary negative emotion to increase their capability of dealing with a situation. 
For instance, valuing long-term goals (i.e., focusing on need, importance and motivation) \cite{dweck2017needs} over immediate pleasure makes individuals more likely to engage in UER. Furthermore, a study on goal-setting \cite{schunk2001self} found that when individuals perceive a discrepancy between their present performance and a desired goal, it can lead to dissatisfaction. This dissatisfaction, in turn, can serve as a motivator for increased effort and striving towards the goal. Another study, exploring accountability and emotion \cite{autry1985locus}, suggests that self-accountability for a situation has utilitarian gains by having effects on the individual's ``locus of control''. Moreover, a study exploring control and emotion \cite{tamir2008hedonic} found that increasing control potential, such as through an increased level of anger, can have utilitarian gains. Another study on the emotion of fear \cite{tamir2009choosing} proposes the concept of ``fear as function for goal pursuit'', highlighting its role in motivating goal-directed behavior. Yet another study \cite{parrott2002functional} suggests that ``negative'' emotions can modify an individual's ``readiness to think and act, and their potential functional utility''.

By analyzing the four dimensions of emotion as proposed by AE (Need consistency, Goal consistency, Accountability, and Control potential), we can derive meaningful interpretations about the relationship between UER and AE in the context of utilitarian emotional change. These interpretations are summarized in Table \ref{tab:UER-interpretations}.

\begin{table}[h]
\centering
\caption{Utilitarian Emotion Regulation as basis for Principles of Change}
\label{tab:UER-interpretations}
{\tablefont\begin{tabular}{@{\extracolsep{\fill}} p{0.48\linewidth}  p{0.45\linewidth}}
	\hline
	\textbf{Utilitarian Emotion Regulation} & \textbf{Principle of Change}\\
	\hline
	\textbf{Need Consistency/Importance:} Valuing long-term goals over immediate pleasure makes individuals more likely to engage in UER. They are willing to endure temporary negative emotions to maximize utility and achieve important goals (supported by \cite{dweck2017needs}). & High need consistency maximizes utility. \\
	\hline
	\textbf{Goal Consistency/Attainability:} Perceiving important goals as difficult to achieve in the short-term can motivate individuals to engage in UER strategies to increase their chances of long-term success (supported by \cite{schunk2001self}). & Low or Undecided goal consistency maximizes utility if the need consistency is high. \\
	\hline
	\textbf{Accountability:} Individuals perceiving themselves as primarily accountable for their long-term goals drive them to regulate emotions in a utilitarian manner. Taking responsibility for emotional experiences enhances their ability to deal with challenges effectively. Furthermore, if the control potential is high, the environment can be perceived as accountable, where an individual may be able to affect aspects of the environment for utilitarian gains (supported by \cite{autry1985locus}). & Accountability to self maximizes utility. Accountability to environment maximizes utility if the control potential is high. \\
	\hline
	\textbf{Control Potential:} Sense of control is an important utilitarian characteristic. Accepting and regulating temporary negative emotions to increase perceived control contribute to long-term goal attainment (supported by \cite{tamir2008hedonic}). & High control potential maximizes utility.\\
	\hline
\end{tabular}}
\end{table}

Following the interpretations of HER and UER, we can define transition constraints in an emotion specialized BG. This will allow certain sets of trajectories depending on which emotional-change theory we model. In both cases, we use the state space defined by AE.

%Following these interpretations, we can define transition constraints in the EDG. This involves allowing for the inclusion of ``negative emotions'' such as anger or frustration, which are associated with specific appraisal dimensions that align with UER. For example, ``Anger'', according to AE, is associated with a high control potential, which may enable individuals to feel empowered and capable of pursuing their goals. Similarly, ``Frustration'', according to AE, is associated with a high need consistency, which may foster motivation to persist in pursuing a goal. According to UER, perceived control and motivation can provide long-term utilitarian gains even when they are associated with negative emotions \cite{tamir2007business}.
    
%Following these observations, we can define transition constraints in the EDG. This regard allowing ``negative emotion'' that have utilitarian gains, e.g., ``Anger'',  associated with a high control potential (enabling self-efficacy to pursue a goal) or, e.g., ``Frustration'', associated with a high need consistency (enabling motivation to pursue a goal), which, according to UER, may help to manage, e.g., a confrontational or stressful situation \cite{tamir2007business}.

%causal similarities to positive emotions in later stages of an interaction, i.e., sharing one or more appraisals (according to AE).
%According to UER, a person may benefit from feeling a ``negative emotion'', such as Frustration or Anger, to manage a confrontational situation, in order to easier transition to certain ``positive emotions'', such as Relief, Liking or Joy.

In the following section, we present a formalization of these theories in terms of ${\cal C}_{MT}$. We first define an emotion state in terms of the dimensions of AE. We then define a set of transition constraints that restrict particular configurations (states) of AE-appraisals to arise. Two sets of transition constraints are defined, constituting two different sub-graphs of valid transitions in the AE state-space, a so-called \emph{hedonic emotion state} and an \emph{utilitarian emotion state}, based on HER and UER principles, respectively. %Finally, we make a formal analysis of the resulting transition systems and make comparisons of generated trajectories from HER and UER rule-sets, providing a computational multi-dimensional method for representing, analyzing and comparing emotional (psychological) theories.

\subsection{Formalizing Emotional Reasoning}

Components of AE and HER (resp. UER) are formalized as a specialized Belief Graph %(see Definition \ref{def: Belief Graph}), called an emotion graph (EG), 
to reason about emotion states and hedonic (resp. utilitarian) emotional change to reduce unintended emotional side-effects. The constraints of the specialized BG serve as safety restrictions for emotion-influencing actions, applicable for different types of emotional reasoning settings.  

Recall that AE defines emotions as a composition of an individual's appraisal of a situation, in terms of need consistency (importance), goal consistency (attainability), accountability (who/what) and control potential. By following this definition of emotional causes, we can define \emph{emotion fluent}, a changeable emotion variable.

\begin{definition}[Emotion fluent]
\label{def:emotion-fluent}

An emotion fluent is a mental fluent $f(c, v)$, a ground atom of arity 2, where $c \in C$, and $v \in V_c$, such that
$C = \{ne, go, ac, co\}$ is a set of constants denoting psychological classes and $V = \{V_{ne}, V_{go}, V_{ac}, V_{co}\}$ a set of total ordered sets of constants denoting psychological values for each class of $C$, where:

\indent $V_{ne} = \{high, low, undecided\}$\\
\indent $V_{go} = \{high, low, undecided\}$\\
\indent $V_{ac} = \{self, other, environment, undecided\}$\\
\indent $V_{co} = \{high, low, undecided\}$.\\

\noindent where $ne$ denotes need consistency, $go$ denotes goal consistency, $ac$ denotes accountability and $co$ denotes control potential.

\end{definition}

By defining a set of emotions following AE in this way, and by utilizing different principles of emotion regulation, we can specify preferred (e.g., hedonic or utilitarian) transitions between emotion states. In the following subsection, we specify a BG to reason about emotional transitions.

%\subsection{Emotion Graph (EG)}

Following the AE-theory, 16 emotion states are specified, corresponding to each basic emotion: \{Anger, Dislike, Disgust, Sadness, Hope, Frustration, Fear, Distress, Joy, Liking, Pride, Surprise, Relief, Regret, Shame, Guilt\}. It is important to note that various other emotion states can be defined (in total 108) by different combinations of the four AE dimensions, and it is not necessary, nor always preferred, to assign a specific label, such as ``Joy'', to each of them. While labels aid in expressing these states in a human readable way, they do not inherently contribute to the functionality of a system's reasoning. It is worth noting that emotional expressions vary across individuals, cultures, languages, and other factors. Therefore, it is crucial for the system to analyze emotion states using a multidimensional format. We can model the states and transitions as an EG that represents valid emotional change given a recognized emotion state configuration. In Fig. \ref{fig:emotion graph}, examples of emotion states are presented.

 \begin{figure}[ht!]
 \centering
  \includegraphics[width=0.8\textwidth]{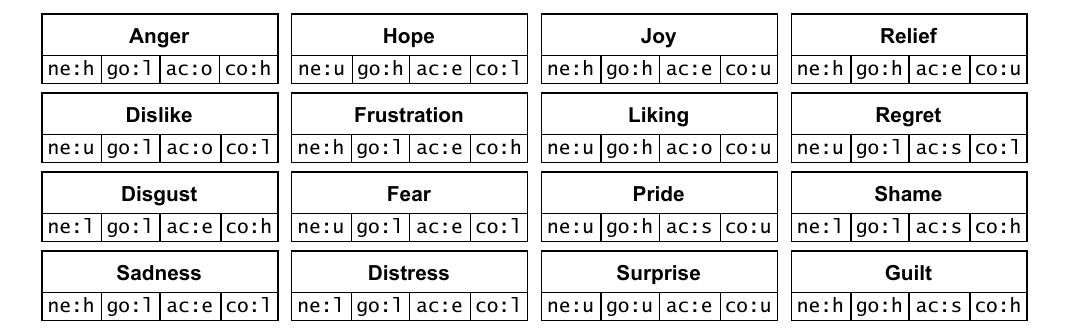}
  \caption{Emotion states following the Appraisal theory of Emotion by Roseman (1996) \cite{roseman1996appraisal}. Each emotion state is here expressed by an intuitive ``emotion'' label on top, and an appraisal configuration consisting of a set of variable:value pairs below. The variables are: ne = need consistency, go = goal consistency, ac = accountable, co = control potential. The values are: l = low, h = high, u = undecided, o = other, s = self, e = environment. }
  \label{fig:emotion graph}
 \end{figure}

%\begin{definition}[emotion graph]
%\label{def: emotion graph}

%An emotion graph (EG) is a specialized Belief Graph $EG = \langle S, E \rangle$, where $S$ is a set of nodes representing emotion states and $E \subseteq S \times S$ is a set of directed edges representing transitions between emotion states.
%Each emotion state $s \in S$ is a set of emotion fluents. 

%\end{definition}

The specialized belief graph for emotion is captured by a given program specified by the semantics of the action language ${\cal C}_{MT}$, with constraints in terms of HER or UER, serving as restrictions for hedonic contra utilitarian emotional change, presented in the following section.

\subsection{Action Language Emotion Specification}

The ${\cal C}_{MT}$ alphabet is specialized with
an emotion related vocabulary and causal laws to specify fluents and actions according to AE-theory. 

The emotional reasoning semantics is characterized by the constraints of a specialized BG for emotion, specified through a set of static causal laws and mental state constraints. In this way, we can restrict states and state-transitions to comply with principles of emotional change.

For any particular application, we need to define a BG that, based on application specific interaction goals and relevant theories, avoids unintended mental states. This specifies an BG with a subset of transitions (in the fully connected graph) that is considered valid. We define a hedonic theory specification that follows principles of HER, aiming to increase positive emotion and decrease negative emotion (see more details of HER in Section \ref{sec:Emotion Theories}). In an abstraction of HER, the hedonic emotion specification is defined to not allow entering a negative emotion state and to preserve a positive balance in emotion. 

Recall Figure~\ref{fig:Trajectory-space}, which illustrates the Belief Graph representing possible trajectories over environmental and mental states. Transitions (edges) in this graph are defined by a set of dynamic causal laws of the form $(a~\mathbf{causes}~f_1,\dots,f_n~\mathbf{if}~g_1, \dots, g_m)$, capturing how actions produce observable changes. Overlaid on these are forbidding rules of the form $(g_1^h,\dots,g_m^h~\mathbf{forbids~to~cause}~f_1^h, \dots, f_n^h)$, which specify mental state transitions that are disallowed based on psychological theory. The red arrows in the graph correspond to environmental transitions that are excluded because they would cause such forbidden mental transitions. In this way, the Belief Graph integrates both causal dynamics and psychological constraints to filter out inadmissible action trajectories.

The hedonic theory specification can thus be formally expressed as follows.

\begin{definition}[Hedonic emotion theory specification]
\label{def:hedonic_forbid}
Let \( D^{MT}_{AE}(\mathbf{A}, \mathbf{F}) \) be a domain description over the ${\cal C}_{MT}$ action language, with a vocabulary consisting of emotion fluents (as defined in Definition~\ref{def:emotion-fluent}) and actions over them.
A hedonic theory specification in \( D^{MT}_{AE}(\bf A, F) \) is a collection of causal laws as follows:

\begin{enumerate}
    \footnotesize
\item $f(c_{\text{ne}}, \text{high}) \; \mathbf{forbids~to~cause} \; f(c_{\text{go}}, \text{low}).$
\item $f(c_{\text{ne}}, \text{high}) \; \mathbf{forbids~to~cause} \; f(c_{\text{go}}, \text{undecided}).$
\item $f(c_{\text{ne}}, \text{high}) \; \mathbf{forbids~to~cause} \; f(c_{\text{go}}, \text{high}).$
\item $f(c_{\text{ne}}, \text{undecided}) \; \mathbf{forbids~to~cause} \; f(c_{\text{go}}, \text{low}).$
\item $f(c_{\text{ne}}, \text{undecided}) \; \mathbf{forbids~to~cause} \; f(c_{\text{go}}, \text{undecided}).$
\item $f(c_{\text{ne}}, \text{undecided}) \; \mathbf{forbids~to~cause} \; f(c_{\text{go}}, \text{high}).$

\item $f(c_{\text{go}}, \text{high}) \; \mathbf{forbids~to~cause} \; f(c_{\text{go}}, \text{low}).$
\item $f(c_{\text{go}}, \text{high}) \; \mathbf{forbids~to~cause} \; f(c_{\text{go}}, \text{undecided}).$
\item $f(c_{\text{go}}, \text{undecided}) \; \mathbf{forbids~to~cause} \; f(c_{\text{go}}, \text{low}).$
\item $f(c_{\text{go}}, \text{low}) \; \mathbf{forbids~to~cause} \; f(c_{\text{co}}, \text{high}).$

\item $\{ f(c_{\text{ne}}, \text{high}), f(c_{\text{go}}, \text{low}), f(c_{\text{ac}}, \text{other}) \} \; \mathbf{forbids~to~cause} \; f(c_{\text{co}}, \text{high}).$
\item $\{ f(c_{\text{ne}}, \text{high}), f(c_{\text{go}}, \text{undecided}), f(c_{\text{ac}}, \text{other}) \} \; \mathbf{forbids~to~cause} \; f(c_{\text{co}}, \text{high}).$
\item $\{ f(c_{\text{ne}}, \text{high}), f(c_{\text{go}}, \text{low}), f(c_{\text{ac}}, \text{self}) \} \; \mathbf{forbids~to~cause} \; f(c_{\text{co}}, \text{high}).$
\item $\{ f(c_{\text{ne}}, \text{high}), f(c_{\text{go}}, \text{undecided}), f(c_{\text{ac}}, \text{self}) \} \; \mathbf{forbids~to~cause} \; f(c_{\text{co}}, \text{high}).$
\item $\{ f(c_{\text{ne}}, \text{high}), f(c_{\text{go}}, \text{low}), f(c_{\text{ac}}, \text{undecided}) \} \; \mathbf{forbids~to~cause} \; f(c_{\text{co}}, \text{high}).$
\item $\{ f(c_{\text{ne}}, \text{high}), f(c_{\text{go}}, \text{undecided}), f(c_{\text{ac}}, \text{undecided}) \} \; \mathbf{forbids~to~cause} \; f(c_{\text{co}}, \text{high}).$

\end{enumerate}
\end{definition}

%%%%%

We further define an utilitarian emotion theory specification that follows principles of UER, aiming to prioritize particular emotional dimensions (e.g., need\_consistency) that are associated with increased utilitarian gain (see more details of UER in Section \ref{sec:Emotion Theories}).
In an abstraction of UER, the utilitarian emotion theory specification is defined to allow entering a
negative emotion state in the short-term if it may result in a long-term utilitarian gain.

\begin{definition}[Utilitarian emotion theory specification]
\label{def:utilitarian_forbid}
Let \( D^{MT}_{AE}(\mathbf{A}, \mathbf{F}) \) be a domain description over the ${\cal C}_{MT}$ action language, with a vocabulary consisting of emotion fluents (as defined in Definition~\ref{def:emotion-fluent}) and actions over them.
A utilitarian theory specification in \( D^{MT}_{AE}(\bf A, F) \) is a collection of causal laws as follows:

\begin{enumerate}
    \footnotesize
\item $f(c_{\text{ne}}, \text{low}) \; \mathbf{forbids~to~cause} \; f(c_{\text{ne}}, \text{undecided}).$
\item $f(c_{\text{ne}}, \text{low}) \; \mathbf{forbids~to~cause} \; f(c_{\text{ne}}, \text{high}).$
\item $f(c_{\text{ne}}, \text{undecided}) \; \mathbf{forbids~to~cause} \; f(c_{\text{ne}}, \text{low}).$
\item $f(c_{\text{ne}}, \text{undecided}) \; \mathbf{forbids~to~cause} \; f(c_{\text{ne}}, \text{high}).$

\item $f(c_{\text{go}}, \text{low}) \; \mathbf{forbids~to~cause} \; f(c_{\text{ne}}, \text{low}).$
\item $f(c_{\text{go}}, \text{low}) \; \mathbf{forbids~to~cause} \; f(c_{\text{ne}}, \text{undecided}).$
\item $f(c_{\text{go}}, \text{undecided}) \; \mathbf{forbids~to~cause} \; f(c_{\text{ne}}, \text{low}).$
\item $f(c_{\text{go}}, \text{undecided}) \; \mathbf{forbids~to~cause} \; f(c_{\text{ne}}, \text{undecided}).$
\item $f(c_{\text{go}}, \text{high}) \; \mathbf{forbids~to~cause} \; f(c_{\text{ne}}, \text{low}).$
\item $f(c_{\text{go}}, \text{high}) \; \mathbf{forbids~to~cause} \; f(c_{\text{ne}}, \text{undecided}).$
\item $f(c_{\text{go}}, \text{high}) \; \mathbf{forbids~to~cause} \; f(c_{\text{ne}}, \text{high}).$

\item $f(c_{\text{ac}}, \text{undecided}) \; \mathbf{forbids~to~cause} \; f(c_{\text{ac}}, \text{self}).$
\item $f(c_{\text{ac}}, \text{undecided}) \; \mathbf{forbids~to~cause} \; f(c_{\text{ac}}, \text{other}).$
\item $f(c_{\text{ac}}, \text{undecided}) \; \mathbf{forbids~to~cause} \; f(c_{\text{ac}}, \text{environment}).$
\item $f(c_{\text{ac}}, \text{other}) \; \mathbf{forbids~to~cause} \; f(c_{\text{ac}}, \text{undecided}).$
\item $f(c_{\text{ac}}, \text{other}) \; \mathbf{forbids~to~cause} \; f(c_{\text{ac}}, \text{self}).$
\item $f(c_{\text{ac}}, \text{other}) \; \mathbf{forbids~to~cause} \; f(c_{\text{ac}}, \text{environment}).$
\item $f(c_{\text{ac}}, \text{environment}) \; \mathbf{forbids~to~cause} \; f(c_{\text{co}}, \text{low}).$
\item $f(c_{\text{ac}}, \text{environment}) \; \mathbf{forbids~to~cause} \; f(c_{\text{co}}, \text{undecided}).$

\item $f(c_{\text{co}}, \text{low}) \; \mathbf{forbids~to~cause} \; f(c_{\text{co}}, \text{undecided}).$
\item $f(c_{\text{co}}, \text{low}) \; \mathbf{forbids~to~cause} \; f(c_{\text{co}}, \text{high}).$
\item $f(c_{\text{co}}, \text{undecided}) \; \mathbf{forbids~to~cause} \; f(c_{\text{co}}, \text{low}).$
\item $f(c_{\text{co}}, \text{undecided}) \; \mathbf{forbids~to~cause} \; f(c_{\text{co}}, \text{high}).$

\end{enumerate}
\end{definition}

%These specifications enforce a structured adjustment of appraisal values, ensuring that emotional transitions occur in a controlled manner to prevent unintended effects. Specifically, they regulate the dynamics of emotional change according to the underlying emotion regulation principles—hedonic emotion regulation (HER) and utilitarian emotion regulation (UER). 
These sets of $\mathbf{forbids~to~cause}$ rules are encoded as integrity constraints. Given a logic program $P_{EG}$ (based on the translation in Section \ref{translation}), we attach sets of integrity constraints representing HER (Listing \ref{lst:implementation-HER}) or UER (Listing \ref{lst:implementation-UER}) based principles. 
These integrity constraints are independent of the main logic program $P_{EG}$. By extending the program with different sets of integrity constraints, based on different emotion-regulation principles, we can analyze and compare their respective trajectories. 

\begin{lstlisting}[caption={Integrity Constraints for Hedonic Emotion Regulation.},captionpos=t,label={lst:implementation-HER}, language=]
% HEDONIC EMOTION REGULATION
% Need consistency can only increase if goal consistency is high
:- holds(mental_fluent(need_consistency, high), T+1), 
   holds(mental_fluent(goal_consistency, low), T), time(T).
:- holds(mental_fluent(need_consistency, high), T+1), 
   holds(mental_fluent(goal_consistency, undecided), T), time(T).
:- holds(mental_fluent(need_consistency, undecided), T+1), 
   holds(mental_fluent(goal_consistency, low), T), time(T).
:- holds(mental_fluent(need_consistency, undecided), T+1), 
   holds(mental_fluent(goal_consistency, undecided), T), time(T).

% Goal consistency cannot decrease
:- holds(mental_fluent(goal_consistency, low), T+1), 
   holds(mental_fluent(goal_consistency, V1), T), V1 != low, time(T).
:- holds(mental_fluent(goal_consistency, undecided), T+1), 
   holds(mental_fluent(goal_consistency, V1), T), V1 != undecided, time(T).

% Prevent high control potential if goal consistency is low
:- holds(mental_fluent(control_potential, high), T+1), 
   holds(mental_fluent(goal_consistency, low), T), time(T).

% Prevent control potential from increasing to high
% when accountability is not the environment
% and there is a negative balance between goal and need consistency
:- holds(mental_fluent(control_potential, high), T+1), 
   holds(mental_fluent(need_consistency, high), T), 
   holds(mental_fluent(goal_consistency, low), T), 
   holds(mental_fluent(accountability, other), T), time(T).

:- holds(mental_fluent(control_potential, high), T+1), 
   holds(mental_fluent(need_consistency, high), T), 
   holds(mental_fluent(goal_consistency, undecided), T), 
   holds(mental_fluent(accountability, other), T), time(T).

:- holds(mental_fluent(control_potential, high), T+1), 
   holds(mental_fluent(need_consistency, high), T), 
   holds(mental_fluent(goal_consistency, low), T), 
   holds(mental_fluent(accountability, self), T), time(T).

:- holds(mental_fluent(control_potential, high), T+1), 
   holds(mental_fluent(need_consistency, high), T), 
   holds(mental_fluent(goal_consistency, undecided), T), 
   holds(mental_fluent(accountability, self), T), time(T).

:- holds(mental_fluent(control_potential, high), T+1), 
   holds(mental_fluent(need_consistency, high), T), 
   holds(mental_fluent(goal_consistency, low), T), 
   holds(mental_fluent(accountability, undecided), T), time(T).

:- holds(mental_fluent(control_potential, high), T+1), 
   holds(mental_fluent(need_consistency, high), T), 
   holds(mental_fluent(goal_consistency, undecided), T), 
   holds(mental_fluent(accountability, undecided), T), time(T).
\end{lstlisting}

\begin{lstlisting}[caption={Integrity Constraints for Utilitarian Emotion Regulation.},captionpos=t,label={lst:implementation-UER}, language=]
% UTILITARIAN EMOTION REGULATION
% Need consistency can only be influenced to high.
:- holds(mental_fluent(need_consistency, low), T+1), 
    holds(mental_fluent(need_consistency, V1), T), V1 != low, time(T).
:- holds(mental_fluent(need_consistency, undecided), T+1), 
    holds(mental_fluent(need_consistency, V1), T), V1 != undecided, time(T).

% The goal consistency can only be influenced to low or undecided, 
% and the need consistency must be high.
:- holds(mental_fluent(goal_consistency, low), T+1), 
    holds(mental_fluent(need_consistency, low), T), time(T).
:- holds(mental_fluent(goal_consistency, low), T+1), 
    holds(mental_fluent(need_consistency, undecided), T), time(T).
:- holds(mental_fluent(goal_consistency, undecided), T+1), 
    holds(mental_fluent(need_consistency, low), T), time(T).
:- holds(mental_fluent(goal_consistency, undecided), T+1), 
    holds(mental_fluent(need_consistency, undecided), T), time(T).
:- holds(mental_fluent(goal_consistency, high), T+1), 
    holds(mental_fluent(need_consistency, undecided), T), time(T).
:- holds(mental_fluent(goal_consistency, high), T+1), 
    holds(mental_fluent(need_consistency, high), T), time(T).

% Accountability can only be influenced to self or environment. 
% If environment, then the control must be high.
:- holds(mental_fluent(accountability, undecided), T+1), 
    holds(mental_fluent(accountability, V1), T), V1 != undecided, time(T).
:- holds(mental_fluent(accountability, other), T+1), 
    holds(mental_fluent(accountability, V1), T), V1 != other, time(T).
:- holds(mental_fluent(accountability, environment), T+1), 
    holds(mental_fluent(control_potential, low), T), time(T).
:- holds(mental_fluent(accountability, environment), T+1), 
    holds(mental_fluent(control_potential, undecided), T), time(T).

% Control potential can only be influenced to High.
:- holds(mental_fluent(control_potential, low), T+1), 
    holds(mental_fluent(control_potential, V1), T), V1 != low, time(T).
:- holds(mental_fluent(control_potential, undecided), T+1), 
    holds(mental_fluent(control_potential, V1), T), V1 != undecided, time(T).
\end{lstlisting}

\section{Formal Analysis}
\label{sec:Formal Analysis}

We start by establishing a link between ${\cal C_{MT}}$ and answer set semantics, supporting implementations of the action language in Answer Set Programming (ASP). In the proceeding parts of the section, we show that \textbf{forbids to cause} rules imply relations between mental fluents, allowing us to analyze Safety properties based on these relations (considering the invariance principle \cite{hansen2003algorithms}) to be preserved in trajectories. 

To support the formal analysis, we define an operator that transforms an action theory (and query, if applicable) into a logic program $P_{MT}$ based on the high-level action language ${\cal C}_{MT}$. 

\begin{definition}[${\cal C}_{MT}$ Logic Program]
\label{def:logic_program_generation_operator}
Let $D^{MT}$ be a ${\cal C}_{MT}$ domain description and $O_{\text{initial}}$ a set of initial observations. We define the mapping function 
$P_{MT} := \Pi(D^{MT}, O_{\text{initial}})$ 
such that $P_{MT}$ is a ground ASP program where, for every expression $\varphi$ in $D^{MT} \cup O_{\text{initial}}$, $\Pi(\varphi)$ denotes the ASP translation of $\varphi$, and $P_{MT}$ includes all such translations. 
Let $\mathit{AS}(P_{MT})$ denote the set of answer sets of the logic program $P_{MT}$. 
%Then: 
%$\mathit{AS}(P_{MT}) \text{ corresponds to the set } \{ \tau \mid \tau \text{ is a trajectory model of } (D^{MT}, O_{\text{initial}}) \},$ 
%where each trajectory model $\tau = \langle s_0, A_1, s_1, \dots, A_{t_{\max}}, s_{t_{\max}} \rangle$ corresponds to an answer set ${\cal A} \in \mathit{AS}(P_{MT})$ such that for all time points $0 \leq k \leq t_{\max}$:
%\begin{itemize}
%    \item $holds(f,k) \in {\cal A}$ if and only if $f \in s_k$,
%    \item $holds(neg(f),k) \in {\cal A}$ if and only if $f \notin s_k$,
%    \item $holds(occurs(a),k) \in {\cal A}$ if and only if $a \in A_{k+1}$,
%    \item $holds(neg(occurs(a)),k) \in {\cal A}$ if and only if $a \notin A_{k+1}$.
%\end{itemize}
\end{definition}

In the following subsections, we proceed with the formal analysis.

\subsection{Analysis of Translation to Answer Set Semantics}

In order to define the semantics of ${\cal C}_{MT}$, we characterize trajectory models in terms of answer sets. 
This is formalized by the following theorem:

\begin{theorem}
\label{theorem:C-MT-to-ASP-translation}
Let $(D^{MT},O_{\text{initial}})$ be an action theory such that $O_{\text{initial}}$ specifies fluent observations in the initial state. Let $Q$ be a query, and define:
\[
A_Q = \{(a~\text{occurs\_at}~t_k) \mid a \in A_{k+1},\; 0 \leq k < t_{\max} \}.
\]
Let ${\cal P}_{MT} := \Pi(D^{MT}, O_{\text{initial}} \cup A_Q)$ be the ${\cal C}_{MT}$ logic program obtained via the translation operator $\Pi$.

\begin{enumerate}
    \item If there is a trajectory model $\langle s_0, A_1, s_1, \dots, A_{t_{\max}}, s_{t_{\max}} \rangle$ of ${\cal C}_{MT}(D^{MT}, O_{\text{initial}} \cup A_Q)$, then there is an answer set ${\cal A}$ of ${\cal P}_{MT}$ such that for all time points $0 \leq k \leq t_{\max}$:
    \begin{enumerate}
        \item $holds(f,k) \in {\cal A}$ iff $f \in s_k$,
        \item $holds(neg(f),k) \in {\cal A}$ iff $f \notin s_k$,
        \item $holds(occurs(a),k) \in {\cal A}$ iff $a \in A_{k+1}$,
        \item $holds(neg(occurs(a)),k) \in {\cal A}$ iff $a \notin A_{k+1}$.
    \end{enumerate}
    
    \item If there is an answer set ${\cal A}$ of ${\cal P}_{MT}$ and for each $0 \leq k \leq t_{\max}$,
    \begin{enumerate}
        \item $s_k = \{f \mid holds(f,k) \in {\cal A}\} \cup \{\neg f \mid holds(neg(f),k) \in {\cal A}\}$,
        \item $A_{k+1} = \{a \mid holds(occurs(a),k) \in {\cal A}\}$,
    \end{enumerate}
    then there is a trajectory model $\langle s_0, A_1, s_1, \dots, A_{t_{\max}}, s_{t_{\max}} \rangle$ of ${\cal C}_{MT}(D^{MT}, O_{\text{initial}} \cup A_Q)$.
\end{enumerate}
\end{theorem}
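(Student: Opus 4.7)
The plan is to establish a one-to-one correspondence between trajectory models and answer sets by leveraging the known translation correctness for ${\cal C}_{TAID}$ \cite{dworschak2008mbn} and extending it to cover the new constructs of ${\cal C}_{MT}$. Since the translations of rules (1)--(7) from Definition~\ref{def:domain description language} are reproduced directly from ${\cal C}_{TAID}$, I would first invoke the existing correspondence result as a black box for the common sublanguage, and then augment the argument to handle the specialized constructs: \texttt{influences} (8,9), \texttt{facilitates} (10), \texttt{contravenes} (11), and the novel \textbf{forbids to cause} rule (12). The overall structure is a proof in two directions, each carried out by induction on the time index $k$ from $0$ to $t_{\max}$.

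For direction~(1), I would take a trajectory model $\tau = \langle s_0, A_1, s_1, \dots, A_{t_{\max}}, s_{t_{\max}} \rangle$ and construct a candidate interpretation ${\cal A}_\tau$ by including the atoms $holds(f,k)$, $holds(neg(f),k)$, $holds(occurs(a),k)$, $holds(neg(occurs(a)),k)$ exactly as dictated by the four equivalences (a)--(d). The base case follows from Definition~\ref{def: c-ae emotion state} applied to $s_0$ together with the initial-state encoding of $O_{\text{initial}}$ and $A_Q$. For the inductive step, I would verify rule-by-rule that each ground rule of ${\cal P}_{MT}$ is satisfied by ${\cal A}_\tau$: the dynamic and static causal laws translate directly via conditions (1)--(8) of Definition~\ref{def: c-ae emotion state} and Definition on domain description; the inhibition, triggering, and allowance rules correspond to conditions (2), (4), (5), (6) of Definition~\ref{def:trajectory}; the facilitation and contravening rules correspond to conditions (3), (7), (8); and the \texttt{noconcurrency} constraint matches (9). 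The crucial new check concerns the \textbf{forbids to cause} rule: its ASP encoding as an integrity constraint is satisfied precisely because trajectory condition (10), $F(s_i) \cap s_{i+1} = \emptyset$, holds. Minimality of ${\cal A}_\tau$ (i.e., that it is an answer set and not merely a model of the reduct) follows because every atom in ${\cal A}_\tau$ is either asserted by an observation or derivable through the inertia and causal rules, while the choice of actions is fully determined by triggers, facilitation, and observations, with non-occurrence atoms supplied by the negation-as-failure clause.

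For direction~(2), given an answer set ${\cal A}$ I would define $s_k$ and $A_{k+1}$ as in the statement and then verify, again by induction, that the resulting sequence satisfies Definitions~\ref{def: c-ae emotion state} and~\ref{def:trajectory}. That each $s_k$ is a well-defined state follows from the contradiction constraint (ensuring $holds(f,k)$ and $holds(neg(f),k)$ are mutually exclusive) together with the completion rules at $k=0$ and the inertia/causal derivations for $k>0$. The trajectory conditions (1)--(9) are the contrapositives of the corresponding encoded rules, and condition~(10) is obtained directly from the integrity constraint encoding the forbidding rule. Observation satisfaction (Definition~\ref{def:trajectory model}) follows because $O_{\text{initial}}$ is asserted as facts and the auxiliary set $A_Q$ encodes precisely the action occurrences required by the query.

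The main obstacle I expect is the interplay between the negation-as-failure in the inertia and non-occurrence rules and the new \textbf{forbids to cause} constraint, because an integrity constraint can render a candidate answer set inadmissible in a way that indirectly forces additional action choices elsewhere in the trajectory. In particular, one must argue that the minimality condition of the answer set semantics does not spuriously add or omit \texttt{holds} atoms that would break the bijection with states. I would address this by a careful case analysis on how each atom in ${\cal A}$ can be justified through the reduct, showing that the only atoms surviving minimization are exactly those that correspond to fluents actually holding in $s_k$ and to actions actually occurring in $A_{k+1}$. A secondary technical point is handling the specialized \texttt{facilitates} and \texttt{contravenes} rules uniformly with their generic \texttt{allows}/\texttt{inhibits} counterparts, which I would resolve by noting that their ASP translations share the same ab/allow predicate structure, so the ${\cal C}_{TAID}$ argument applies verbatim once restricted to ${\bf A}^H$ and ${\bf F}^H$.
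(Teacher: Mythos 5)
Your proposal is correct and follows essentially the same route as the paper's proof: both directions are established by induction over the time index, with a case analysis on how each fluent or action atom is justified (inertia, dynamic/static causal laws, defaults, the mental-fluent variants, and the \textbf{forbids to cause} integrity constraint matched against trajectory condition (10), $F(s_i) \cap s_{i+1} = \emptyset$). The only notable difference is organizational: the paper invokes the Splitting Set Theorem of Lifschitz and Turner to assemble the time-stratified answer set and re-derives the ${\cal C}_{TAID}$ cases explicitly, whereas you argue minimality directly through the reduct and propose to reuse the ${\cal C}_{TAID}$ correspondence as a black box — which is fine provided you note (as your rule-by-rule verification implicitly does) that mental fluents may occur in the bodies of the inherited ${\cal C}_{TAID}$ rules, so the reuse cannot be entirely modular.
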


\begin{proof}
We prove Theorem~\ref{theorem:C-MT-to-ASP-translation} in two directions using induction over the maximal time point $t_{\max}$ of the trajectory. The proof applies the Splitting Set Theorem \cite{lifschitz1994splitting} to justify that answer sets can be constructed incrementally over time.

\medskip
\noindent \textbf{Part 1: From Trajectory to Answer Set.}

Let $\langle s_0, A_1, s_1, \dots, A_{t_{\max}}, s_{t_{\max}} \rangle$ be a trajectory model of $\mathcal{C}_{MT}(D^{MT}, O_{\text{initial}} \cup A_Q)$. We define a set $X$ of atoms such that:
\begin{align*}
X := &~ \{ holds(f, k) \mid f \in s_k,\; 0 \leq k \leq t_{\max} \} \\
     & \cup \{ holds(neg(f), k) \mid f \notin s_k,\; 0 \leq k \leq t_{\max} \} \\
     & \cup \{ holds(occurs(a), k) \mid a \in A_{k+1},\; 0 \leq k < t_{\max} \} \\
     & \cup \{ holds(neg(occurs(a)), k) \mid a \notin A_{k+1},\; 0 \leq k < t_{\max} \}.
\end{align*}

Let $P_k$ be the set of ASP rules in $\mathcal{P}_{MT}$ whose time arguments are $\leq k$. Then $\langle P_0, P_1, \dots, P_{t_{\max}} \rangle$ forms a splitting sequence of $\mathcal{P}_{MT}$.

We prove by induction on $k$ that $X_k := X \cap \text{Atoms}(P_k)$ is an answer set of $P_k$.

\paragraph{Base Case ($k = 0$).} 
By construction of $X_0$, for every $f \in s_0$, we have $holds(f, 0) \in X_0$, and for every $f \notin s_0$, we have $holds(neg(f), 0) \in X_0$. These atoms match the translation of initial observations in $O_{\text{initial}}$. Hence, all rules in $P_0$ are satisfied, and $X_0$ is an answer set of $P_0$.

\paragraph{Induction Step.} 
Assume $X_k$ is an answer set of $P_k$. Consider $P_{k+1}$. The new rules introduced in $P_{k+1}$ \textbackslash~$P_k$ have head atoms with time $k+1$ and body atoms with times $\leq k+1$. 

%By the construction of $X$, and using the cases of Definition~\ref{def:trajectory}:
By the construction of $X$, and since $f \in s_{k+1}$ can only hold by one of the rules of Definition~\ref{def:trajectory}, we distinguish the following cases:

\begin{itemize}
  \item \textbf{Case 1:} $f \in s_{k+1}$ due to inertia: $holds(f,k) \in X_k$, $neg(holds(neg(f),k+1)) \in X_{k+1}$.
  \item \textbf{Case 2:} $f \in s_{k+1}$ due to a dynamic causal law. There exists $a \in A_k$ and preconditions $g_1,\dots,g_m \in s_k$ such that $holds(occurs(a),k), holds(g_i,k) \in X_k$. The rule head $holds(f,k+1)$ is in $X_{k+1}$.
  \item \textbf{Case 3:} $f \in s_{k+1}$ due to a static causal law. Preconditions $g_1,\dots,g_m \in s_{k+1}$ imply $holds(g_i,k+1) \in X_{k+1}$. Hence $holds(f,k+1) \in X_{k+1}$.
  \item \textbf{Case 4:} $f \in s_{k+1}$ by default (non-inertial fluent), i.e., no rule for $\neg f$ is applicable. Then $holds(neg(f),k+1) \notin X$, so $holds(f,k+1)$ is derived by the default rule.
  \item \textbf{Case 5:} $f \in s_{k+1}$ is mental: derived by static causal law as in Case 3.
  \item \textbf{Case 6:} $f^h \in s_{k+1}$ due to dynamic causal law. Same as Case 2 but with $f^h$, and $holds(f^h,k+1) \in X$.
  \item \textbf{Case 7:} $f^h \in s_{k+1}$ due to static causal law. As in Case 3.
   \item \textbf{Case 8:} $f^h \notin s_{k+1}$ due to a forbidding constraint $(g_1^h, \dots, g_m^h~\mathbf{forbids~to~cause}~f^h) \in D^{MT}$ with $g_1^h, \dots, g_m^h \in s_k$. The translated integrity constraint
    \[
    :-\ holds(f^h, k+1),\ holds(g_1^h, k),\ \dots,\ holds(g_m^h, k)
    \]
    must be satisfied in $X$, so $holds(f^h, k+1) \notin X$.
\end{itemize}

Thus, all new rule bodies are satisfied, and all forbidden transitions are avoided. By the Splitting Set Theorem \cite{lifschitz1994splitting}, $X_{k+1}$ is an answer set of $P_{k+1}$. Therefore, $X$ is an answer set of $\mathcal{P}_{MT}$.

\medskip
\noindent \textbf{Part 2: From Answer Set to Trajectory.}

Assume there is an answer set $\mathcal{A}$ of the logic program $\mathcal{P}_{MT}$. We construct a trajectory model 
\[
\langle s_0, A_1, s_1, \ldots, A_{t_{\max}}, s_{t_{\max}} \rangle
\]
of ${\cal C}_{MT}(D^{MT}, O_{\text{initial}} \cup A_Q)$ as follows.

\paragraph{State and Action Set Construction.}
For every $0 \leq k \leq t_{\max}$, define:
\[
s_k := \{f \mid holds(f,k) \in \mathcal{A} \} \cup \{\neg f \mid holds(neg(f),k) \in \mathcal{A}\},
\]
%\[
%s_k := s_k \cup \{f^h \mid holds(f^h,k) \in \mathcal{A} \} \cup \{\neg f^h \mid holds(neg(f^h),k) \in \mathcal{A}\},
%\]
\[
A_{k+1} := \{a \mid holds(occurs(a),k) \in \mathcal{A} \}.
\]

We will prove by induction on $t_{\max}$ that the constructed sequence satisfies all trajectory conditions of Definition~\ref{def:trajectory}.

\paragraph{Base Case ($t_{\max} = 0$):}
Then the trajectory reduces to $\langle s_0 \rangle$. The initial observations $O_{\text{initial}}$ specify fluents and mental fluents that are true or false at time $0$.

By translation of $O_{\text{initial}}$ into facts of the form $holds(f,0)$ and $holds(neg(f),0)$ in $\mathcal{P}_{MT}$, and since $\mathcal{A}$ is an answer set, the state $s_0$ includes exactly the correct fluents and mental fluents. Hence, $s_0$ satisfies the initial conditions.

\paragraph{Induction Step:}
Assume that for $t_{\max} = r$, the sequence
\[
\langle s_0, A_1, s_1, \dots, A_r, s_r \rangle
\]
is a trajectory model of ${\cal C}_{MT}$. We now extend it to
\[
\langle s_0, A_1, s_1, \dots, A_r, s_r, A_{r+1}, s_{r+1} \rangle
\]
and prove that the extension satisfies the conditions.

%Let $X = \mathcal{A}$, and consider each possible case for $f \in s_{r+1}$.
Let $X = \mathcal{A}$. Since an atom of the form $holds(f,k) \in X$ can only be present if it is derived by some rule in $\mathcal{P}_{MT}$, we consider each possible case for $f \in s_{r+1}$:

\begin{itemize}
    \item \textbf{Case 1 (Dynamic Law for $f$):} If $holds(f, r+1) \in X$ is derived from a dynamic causal law, then there exists $a \in A_r$ such that
    \[
    (a~\mathbf{influences}~f~\mathbf{if}~g_1, \dots, g_m) \in D^{MT}
    \]
    with $holds(g_1, r), \dots, holds(g_m, r) \in X$, hence $g_1, \dots, g_m \in s_r$ and $f \in s_{r+1}$.

    \item \textbf{Case 2 (Static Law for $f$):} If $holds(f, r+1) \in X$ is derived from a static causal law, then
    \[
    (g_1, \dots, g_m~\mathbf{influences}~f) \in D^{MT}
    \]
    with $holds(g_1, r+1), \dots, holds(g_m, r+1) \in X$, hence $g_1, \dots, g_m \in s_{r+1}$ and $f \in s_{r+1}$.

    \item \textbf{Case 3 (Inertial Fluent $f$):} If $f$ is inertial, and $holds(f, r) \in X$ but $holds(neg(f), r+1) \notin X$, then $f$ persists by inertia, and $f \in s_{r+1}$.

    \item \textbf{Case 4 (Default Non-Inertial $f$):} If $f$ is non-inertial, and no law causes $neg(f)$, and $default(f)$ applies, then $holds(f, r+1) \in X$ and $f \in s_{r+1}$.

    \item \textbf{Case 5 (Action Occurrence):} For every $a \in A_{r+1}$, we have $holds(occurs(a), r) \in X$. This satisfies the action occurrence condition.

    \item \textbf{Case 6 (Dynamic Mental Fluent):} If $holds(f^h, r+1) \in X$ is derived from a dynamic mental causal law, then there exists $a \in A_r$ such that
    \[
    (a~\mathbf{influences}~f^h~\mathbf{if}~g_1, \dots, g_m) \in D^{MT}
    \]
    with $g_1, \dots, g_m \in s_r$. Hence $f^h \in s_{r+1}$.

    \item \textbf{Case 7 (Static Mental Fluent):} If $holds(f^h, r+1) \in X$ is derived from a static mental causal law, then
    \[
    (g_1, \dots, g_m~\mathbf{influences}~f^h) \in D^{MT}
    \]
    with $g_1, \dots, g_m \in s_{r+1}$. Hence $f^h \in s_{r+1}$.

    \item \textbf{Case 8 (Forbidding Constraint):} Suppose for contradiction that $f^h \in s_{r+1}$ and $f^h \in F(s_r)$. Then there exists a forbidding rule
    \[
    (g_1^h, \dots, g_m^h~\mathbf{forbids~to~cause}~f^h) \in D^{MT}
    \]
    with $g_1^h, \dots, g_m^h \in s_r$. By translation, the integrity constraint
    \[
    :-~holds(f^h, r+1),~holds(g_1^h, r), \dots, holds(g_m^h, r)
    \]
    is present in $\mathcal{P}_{MT}$. Since $\mathcal{A}$ is an answer set, this constraint must be satisfied, and $holds(f^h, r+1) \notin \mathcal{A}$, hence $f^h \notin s_{r+1}$.
\end{itemize}

The logic program $\mathcal{P}_{MT}$ is stratified by time: rules whose head involves $time = r+1$ do not appear in the body of rules at time $t \leq r$. Thus, the ASP Splitting Set Theorem \cite{lifschitz1994splitting} applies. The bottom part (for $t \leq r$) is satisfied by induction hypothesis. The top part (for $t = r+1$) builds on $s_r$, $A_{r+1}$ and produces $s_{r+1}$ as shown above.

By induction on $t_{\max}$, the constructed sequence satisfies all conditions of Definition~\ref{def:trajectory}. Hence, $\langle s_0, A_1, s_1, \dots, A_{t_{\max}}, s_{t_{\max}} \rangle$ is a valid trajectory model of ${\cal C}_{MT}(D^{MT}, O_{\text{initial}} \cup A_Q)$.

\end{proof}

\subsection{Safety Analysis: Emotional Change}
\label{sec:Proving emotional change}

In this subsection, we aim to prove that trajectories generated by ${\cal C}_{MT}$ preserve certain safety properties that holds for all states along a trajectory, using the \emph{invariance principle} \cite{hansen2003algorithms}, by considering particular ${\cal C}_{MT}$ specifications. As a particular case, we analyze principles for emotional change based on HER and UER, respectively. %For each emotional change theory, HER and UER, an \emph{invariance property} is defined. 
For particular emotion theories in ${\cal C}_{MT}$, we define different MT Invariant properties.

\begin{proposition}
\label{prop:forbids-to-relational-conjunction}
Let \( D^{MT}(\mathbf{A}, \mathbf{F}) \) be a ${\cal C}_{MT}$ domain description, and let \( C \) be a set of psychological classes with associated value domains \( V_c \subseteq V \) for each \( c \in C \). Let \( \mathbf{F}^H \subseteq \mathbf{F} \) denote the set of mental fluents. 
Let \( V \) be a totally ordered value domain. Let \( \mathcal{R} = \{ R_{=}, R_{\not=}, R_{<}, R_{\leq}, R_{>}, R_{\geq} \} \), where each \( R_* \subseteq V \times V \) is defined by its standard semantic interpretation:
\begin{align*}
R_{=}(v, w) \text{~holds true~} &\iff v = w \\
R_{\not=}(v, w) \text{~holds true~} &\iff v \not= w \\
R_{<}(v, w) \text{~holds true~} &\iff v < w \\
R_{\leq}(v, w) \text{~holds true~} &\iff v \leq w \\
R_{>}(v, w) \text{~holds true~} &\iff v > w \\
R_{\geq}(v, w) \text{~holds true~} &\iff v \geq w
\end{align*}
for all \( v, w \in V \), where \( V \) is a totally ordered set.

If
$f_1(c_1, v_1) \land \dots \land f_m(c_m, v_m) 
\;\mathbf{forbids~to~cause}\; 
f_1'(c_1', v_{c_1'}') \land \dots \land f_n'(c_n', v_{c_n'}') 
\in D^{MT},$
then
$D^{MT} \models R_1(v_{c_1'}', w_1') \land \dots \land R_k(v_{c_k'}', w_k'),$
for some \( k \geq 1 \), where for all \( 1 \leq j \leq k \):
  \( R_j \subseteq V \times V \) is a binary relation from the set \( \mathcal{R} \),
  \( w_j' \in \{v_{c_1'}', \dots, v_{c_n'}'\} \cup V \), and
  \( c_1, \dots, c_m, c_1', \dots, c_n' \in C \).
\end{proposition}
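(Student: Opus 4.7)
The plan is to derive the relational conjunction by unfolding the operational semantics of the $\mathbf{forbids~to~cause}$ rule and then appealing to the total order on the value domain $V$. I would first invoke Definition~2 (item 10) together with its ASP translation from Section~4.3: the rule in $D^{MT}$ induces an integrity constraint which, by Theorem~1, is enforced in every trajectory model of $(D^{MT}, O_{\text{initial}})$. Concretely, whenever all $g_i^h(c_i, v_i)$ hold in $s_t$, the joint satisfaction of $f_1'(c_1', v_{c_1'}'), \dots, f_n'(c_n', v_{c_n'}')$ is blocked in $s_{t+1}$, so at least one forbidden atom must fail in every valid next state.

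Next, I would construct the required relational atoms. For each forbidden fluent $f_j'(c_j', v_{c_j'}')$, let $w_j'$ denote the value that $c_j'$ actually takes in $s_{t+1}$ according to a given trajectory model; since every class is assigned some value in a state, $w_j'$ is well-defined and lies in $V_{c_j'} \subseteq V$. The integrity constraint forces $w_j' \neq v_{c_j'}'$ for at least one index $j$. Since $V$ is totally ordered, this deviation can be expressed by a relation from $\mathcal{R}$: $R_{\neq}(v_{c_j'}', w_j')$ in general, strengthened to $R_<(v_{c_j'}', w_j')$ or $R_>(v_{c_j'}', w_j')$ depending on the position of $w_j'$. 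Collecting one such witness per position for which a deviation is forced yields the desired conjunction with $k \geq 1$ and $w_j' \in \{v_{c_1'}', \dots, v_{c_n'}'\} \cup V$.

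The main obstacle is reconciling the entailment uniformly across all trajectory models, since different trajectories may deviate from the forbidden configuration at different coordinates, and hence the specific witness values $w_j'$ and relations $R_j$ may vary. To address this, I would take the intersection, over all trajectory models, of the positions $j$ at which a deviation is guaranteed; the ASP integrity constraint together with the Splitting Set argument underlying Theorem~1 ensures this intersection is non-empty, yielding at least one position $j$ and one relation $R_j$ that is entailed uniformly. For any remaining positions, I would pad the conjunction with relational atoms that are tautological over $V$ (for instance, $R_=(v_{c_j'}', v_{c_j'}')$, admissible because $w_j' \in \{v_{c_1'}', \dots, v_{c_n'}'\}$ is allowed by the statement), which preserves the conjunctive form without affecting entailment. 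Together with the semantic strengthening from the forbids-to-cause constraint, this yields $D^{MT} \models R_1(v_{c_1'}', w_1') \land \dots \land R_k(v_{c_k'}', w_k')$ as required.
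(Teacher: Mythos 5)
Your route differs substantially from the paper's, and the step you lean on to close it does not hold up. The paper does not reason about trajectory models or the ASP encoding at all: it works directly at the level of the domain description, reading each forbidden fluent $f_j'(c_j', v_j')$ as the exclusion of a value configuration and then, by a case analysis over the six possible comparison types on the totally ordered domain ($v_j' = w_j'$ forbidden $\Rightarrow R_{\neq}$, $v_j' > w_j'$ forbidden $\Rightarrow R_{\leq}$, and so on), directly assigning the complementary relation $R_j \in \mathcal{R}$ that must consequently hold. No appeal to Theorem~1, the integrity-constraint translation, or the Splitting Set Theorem is made or needed.

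The genuine gap in your argument is the claim that ``the intersection, over all trajectory models, of the positions $j$ at which a deviation is guaranteed'' is non-empty, and that this follows from the Splitting Set argument underlying Theorem~1. The Splitting Set Theorem only licenses the incremental, time-stratified construction of answer sets; it says nothing about \emph{which} coordinate of a forbidden tuple fails in a given model. When $n \geq 2$ and the translated constraint only blocks the joint satisfaction of $f_1', \dots, f_n'$, one trajectory model may deviate at coordinate $1$ while another deviates at coordinate $2$, so the intersection can be empty and no single atom $R_j(v_{c_j'}', w_j')$ with a fixed, model-independent witness $w_j'$ is entailed uniformly. Relatedly, your witness $w_j'$ is defined per trajectory model, so the asserted entailment $D^{MT} \models R_j(v_{c_j'}', w_j')$ is not well-formed as a uniform claim, and your fallback of padding with tautological atoms such as $R_{=}(v_{c_j'}', v_{c_j'}')$ only produces a vacuous conjunction rather than the relational content the proposition is meant to extract from the forbidding rule. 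To repair the argument you would either need to prove the non-trivial uniformity claim or abandon the detour through trajectory models and argue, as the paper does, directly from the meaning of the forbidding rule over the ordered value domain.
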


\begin{proof}
Let the rule 
$f_1(c_1, v_1) \land \dots \land f_m(c_m, v_m) 
\;\mathbf{forbids~to~cause}\; 
f_1'(c_1', v_1') \land \dots \land f_n'(c_n', v_n') \in D^{MT}.$ 

For each \( 1 \leq j \leq n \), let \( w_j' \in \{v_1', \dots, v_n'\} \cup V \), and define \( R_j \in \mathcal{R} \) such that the fluent \( f_j'(c_j', v_j') \) being forbidden implies that \( R_j(v_j', w_j') \) holds.

The following cases cover all binary comparisons over a totally ordered set:

\begin{itemize}
  \item If \( f_j'(c_j', v_j') \) is forbidden when \( v_j' = w_j' \), then \( R_j = R_{\not=} \).
  \item If \( f_j'(c_j', v_j') \) is forbidden when \( v_j' \not= w_j' \), then \( R_j = R_{=} \).
  \item If \( f_j'(c_j', v_j') \) is forbidden when \( v_j' > w_j' \), then \( R_j = R_{\leq} \).
  \item If \( f_j'(c_j', v_j') \) is forbidden when \( v_j' < w_j' \), then \( R_j = R_{\geq} \).
  \item If \( f_j'(c_j', v_j') \) is forbidden when \( v_j' \leq w_j' \), then \( R_j = R_{>} \).
  \item If \( f_j'(c_j', v_j') \) is forbidden when \( v_j' \geq w_j' \), then \( R_j = R_{<} \).
\end{itemize}

Hence, for some \( k \geq 1 \), we obtain the conjunction of relational constraints:
\[
D^{MT} \models R_1(v_1', w_1') \land \dots \land R_k(v_k', w_k'),
\]
where each \( R_j \in \mathcal{R} \), and each \( w_j' \in \{v_1', \dots, v_n'\} \cup V \).
\end{proof}

In the setting of emotion, let $EI_{HER}$ denote the MT invariant for HER-based constraints, and $EI_{UER}$ denote the MT invariant for UER-based constraints:

\begin{definition}[Emotional Invariant: Hedonic]
\label{def:invariant-HER}
Let \( D^{MT}_{AE} \) be a domain description with a hedonic theory specification (Definition~\ref{def:hedonic_forbid}).  

The \emph{hedonic emotional invariant}, denoted \( EI_{HER} \), holds for a trajectory \( M = \langle s_0, A_1, s_1, \dots, A_n, s_n \rangle \), $\forall i \in \{0, \dots, n{-}1\}$, where $f(\text{ne}, v_{\text{ne}}'), f(\text{go}, v_{\text{go}}') \subseteq s_{i+1}$,
\[
\textbf{if } D^{MT}_{AE} \models
f(\text{ne}, v_{\text{ne}}'), f(\text{go}, v_{\text{go}}')
\textbf{ then } D^{MT}_{AE} \models R_{\leq}(v_{\text{ne}}', v_{\text{go}}')
\]
\end{definition}

The intuition behind $EI_{HER}$ is that a positive balance between need consistency and goal consistency should be kept in any transition. Refer to Table \ref{tab:HER-interpretations} for HER-based principles of change. 

\begin{definition}[Emotional Invariant: Utilitarian]
\label{def:invariant-UER}
Let \( D^{MT}_{AE} \) be a domain description with a utilitarian theory specification  (Definition~\ref{def:utilitarian_forbid}).

The \emph{utilitarian emotional invariant}, denoted \( EI_{UER} \), holds for a trajectory \( M = \langle s_0, A_1, s_1, \dots, A_n, s_n \rangle \), $\forall i \in \{0, \dots, n{-}1\}$, where $f(\text{ne}, v_{\text{ne}}'), f(\text{go}, v_{\text{go}}'), f(\text{ac}, v_{\text{ac}}'), f(\text{co}, v_{\text{co}}') \subseteq s_{i+1}$,
\[
\textbf{if } D^{MT}_{AE} \models
f(\text{ne}, v_{\text{ne}}'), f(\text{go}, v_{\text{go}}'), f(\text{ac}, v_{\text{ac}}'), f(\text{co}, v_{\text{co}}')
\]
\[
\textbf{ then } D^{MT}_{AE} \models
\left(
\begin{aligned}
&R_{=}(v_{\text{ne}}', \text{high}) \land R_{\leq}(v_{\text{go}}', v_{\text{ne}}') \land R_{=}(v_{\text{co}}', \text{high}) \land {} \\
& R_{\neq}(v_{\text{ac}}', \text{other}) \land R_{\neq}(v_{\text{ac}}', \text{undecided})
\end{aligned}
\right)
\]

\end{definition}

The intuition behind $EI_{UER}$ is to maintain a utilitarian state in every transition. According to UER-based principles of change, summarized in Table \ref{tab:UER-interpretations}, this is achieved by ensuring 1) that goal consistency does not exceed need consistency; 2) the situation must be accountable either to the self or the environment; and 3) the individual must perceive high control potential.

\subsubsection{Safety Analysis: Hedonic Emotional Change}
\label{sec:Proving hedonic emotional change}

In the following theorem, we show that trajectories generated by HER-based consitraints preserves the emotional invariant, $EI_{HER}$, such that if the initial state is a hedonic emotion state, then the following states will be hedonic emotion states.

\begin{theorem}[Hedonic emotional change]
\label{theorem:Hedonic emotional change}
Let $(D^{MT}_{AE}, O_{initial})$ be an action theory where $D^{MT}_{AE}$ includes the hedonic emotion theory specification (Definition~\ref{def:hedonic_forbid}). Let $O_{initial}$ be the fluent observations of the initial state, and let $Q$ be a query according to Definition~\ref{def:QueryLanguage}. Let $A_Q = \{(a~\mathit{occurs\_at}~t_i)~|~a \in A_i, 1 \leq i \leq m\}$.

If there exists a trajectory model $M = \langle s_0, A_1, s_1, A_2, \dots, A_n, s_m \rangle$ of ${\cal C}_{MT}(D^{MT}_{AE}, O_{initial} \cup A_Q)$, where $A_i \subseteq \mathbf{A}$ for $0 \leq i \leq m$, and $s_0$ satisfies the emotional invariant $EI_{HER}$ (Definition~\ref{def:invariant-HER}), then all transitions $s_t$ to $s_{t+1}$ in $M$ preserve $EI_{HER}$, such that:
\[
\text{if } f(\text{ne}, v_{\text{ne}}'), f(\text{go}, v_{\text{go}}') \in s_{t+1}
\text{ then } R_{\leq}(v_{\text{ne}}', v_{\text{go}}')
\]
\end{theorem}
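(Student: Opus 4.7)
The plan is to proceed by induction on the time index $t \in \{0, \ldots, n-1\}$, taking as inductive hypothesis that $s_t$ contains fluents $f(\text{ne}, v_{\text{ne}})$ and $f(\text{go}, v_{\text{go}})$ with $v_{\text{ne}} \leq v_{\text{go}}$. The base case $t = 0$ follows directly from the hypothesis that $s_0$ satisfies $EI_{HER}$. For the inductive step, the task is to show that the successor values $v_{\text{ne}}'$ and $v_{\text{go}}'$ in $s_{t+1}$ also satisfy $R_{\leq}(v_{\text{ne}}', v_{\text{go}}')$.

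I would first invoke Theorem~\ref{theorem:C-MT-to-ASP-translation} to associate the trajectory model $M$ with an answer set of ${\cal P}_{MT} := \Pi(D^{MT}_{AE}, O_{\text{initial}} \cup A_Q)$, so that every integrity constraint derived from the rules of the hedonic specification (Definition~\ref{def:hedonic_forbid}) is satisfied across each transition. Next, I would apply Proposition~\ref{prop:forbids-to-relational-conjunction} to each active forbidding rule at $s_t$, rewriting it as a conjunction of relational constraints from $\mathcal{R}$ that must hold between the values admitted at $s_{t+1}$ and either the values present in $s_t$ or fixed constants.

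The argument then proceeds by contradiction: assume that $(v_{\text{ne}}', v_{\text{go}}')$ in $s_{t+1}$ violates the invariant, i.e., $(v_{\text{ne}}', v_{\text{go}}') \in \{(\text{undecided}, \text{low}), (\text{high}, \text{low}), (\text{high}, \text{undecided})\}$. For each such pair, I would enumerate the six admissible configurations $(v_{\text{ne}}, v_{\text{go}})$ at $s_t$ permitted by the inductive hypothesis, and identify a forbidding rule from Definition~\ref{def:hedonic_forbid} whose antecedent is active in $s_t$ and whose consequent excludes the offending fluent in $s_{t+1}$. Combining this with the uniqueness of value assignments per psychological class (Definition~\ref{def:state_space}) yields the required contradiction with the existence of the answer set witnessing $M$.

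The principal obstacle is the completeness of the case analysis. Rules 1--6 of Definition~\ref{def:hedonic_forbid} restrict go-values at $s_{t+1}$ based on the ne-value at $s_t$, whereas rules 7--9 restrict go-values at $s_{t+1}$ based on the go-value at $s_t$; no rule directly forbids a ne-value at $s_{t+1}$. The delicate step is therefore to verify that, for every admissible $(v_{\text{ne}}, v_{\text{go}})$ allowed by the inductive hypothesis, at least one of rules 1--9 excludes each violating combination in $s_{t+1}$. In particular, the configuration $(v_{\text{ne}}, v_{\text{go}}) = (\text{low}, \text{low})$ at $s_t$ requires special attention, since the hedonic rules alone impose no direct restriction preventing a transition to a state with high ne-value; here the argument must lean on the interplay between the ne- and go-triggered rules together with the structural state-space constraint to close the gap.
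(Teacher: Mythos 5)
Your overall strategy (induction on the time index, contradiction, case analysis against the \textbf{forbids to cause} rules) matches the paper's, but the proposal is incomplete precisely at the point you flag as the ``principal obstacle,'' and the closing move you gesture at does not exist. If $s_t$ contains $f(\text{ne},\text{low})$ and $f(\text{go},\text{low})$ (allowed by the inductive hypothesis), then none of rules 1--9 of Definition~\ref{def:hedonic_forbid} is active with a ne-fluent in its consequent---in fact no rule of the hedonic specification forbids any $f(\text{ne},\cdot)$ at $t+1$ at all---so a dynamic causal law raising need consistency to high while goal consistency persists by inertia yields $(\text{high},\text{low})$ in $s_{t+1}$, violating the invariant, and neither Proposition~\ref{prop:forbids-to-relational-conjunction} (which only rewrites rules that are present) nor the one-value-per-class structure of the state space excludes this. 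So the gap is real and your enumeration of the three violating pairs cannot be completed from Definition~\ref{def:hedonic_forbid} alone; you would need either constraints oriented as in the ASP encoding of Listing~\ref{lst:implementation-HER} (go-values at $T$ forbidding ne-values at $T+1$, plus go-persistence), under which each violating pair is directly excluded, or an extra assumption that ne is never raised above go.

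For comparison, the paper's proof splits on the ne-value of $s_t$ rather than on the violating pair in $s_{t+1}$: for ne high or undecided it derives a contradiction from rules 1--6 (which forbid every go-value in $s_{t+1}$), and for ne low it simply asserts the invariant ``since $\text{low} \leq v_{\text{go}}'$,'' implicitly identifying $v_{\text{ne}}'$ with the ne-value of $s_t$, i.e., tacitly assuming ne persists. In other words, your unresolved case is exactly where the paper's own argument is thinnest, so you could not have closed it by importing the paper's reasoning either; but a correct submission would have to resolve it explicitly rather than defer it. A minor further point: routing through Theorem~\ref{theorem:C-MT-to-ASP-translation} and Proposition~\ref{prop:forbids-to-relational-conjunction} is an unnecessary detour---condition (10) of Definition~\ref{def:trajectory}, $F(s_t)\cap s_{t+1}=\emptyset$, already gives the semantic constraint you need directly at the trajectory level, which is how the paper argues.
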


\begin{proof}
We prove the theorem by induction over the steps of the trajectory $M$.

\paragraph{Base Case.}
Let $s_0$ be the initial state. By assumption, $O_{initial} \subseteq s_0$ and $EI_{HER}$ holds in $s_0$ by construction of the action theory. Hence, the base case is satisfied.

\paragraph{Inductive Step.}
Let $s_t \rightarrow s_{t+1}$ be an arbitrary transition in $M$, and assume $EI_{HER}$ holds in $s_t$. We must show that $EI_{HER}$ also holds in $s_{t+1}$, i.e., if $f(\text{ne}, v_{\text{ne}}'), f(\text{go}, v_{\text{go}}') \in s_{t+1}$, then $R_{\leq}(v_{\text{ne}}', v_{\text{go}}')$.

Assume, for contradiction, that $v_{\text{ne}}' > v_{\text{go}}'$. Then the transition leads to a state where the emotional invariant is violated.

By the semantics of ${\cal C}_{MT}$, a transition $s_t \rightarrow s_{t+1}$ is valid only if $F(s_t) \cap s_{t+1} = \emptyset$, where $F(s_t)$ is the set of fluents forbidden to be caused in $s_{t+1}$ by the active \textbf{forbids~to~cause} rules in $s_t$.

We now do case analysis over the possible values of $v_{\text{ne}}'$.

\begin{itemize}
    \item \textbf{Case 1:} $f(\text{ne}, \text{high}) \in s_t$. Then the following rules from Definition~\ref{def:hedonic_forbid} are active:
    \begin{align*}
        &f(\text{ne}, \text{high}) \;\mathbf{forbids~to~cause}\; f(\text{go}, \text{low}) \\
        &f(\text{ne}, \text{high}) \;\mathbf{forbids~to~cause}\; f(\text{go}, \text{undecided}) \\
        &f(\text{ne}, \text{high}) \;\mathbf{forbids~to~cause}\; f(\text{go}, \text{high})
    \end{align*}
    Thus, any $f(\text{go}, v_{\text{go}}') \in s_{t+1}$ is forbidden. Contradiction.

    \item \textbf{Case 2:} $f(\text{ne}, \text{undecided}) \in s_t$. Then the following rules are active:
    \begin{align*}
        &f(\text{ne}, \text{undecided}) \;\mathbf{forbids~to~cause}\; f(\text{go}, \text{low}) \\
        &f(\text{ne}, \text{undecided}) \;\mathbf{forbids~to~cause}\; f(\text{go}, \text{undecided}) \\
        &f(\text{ne}, \text{undecided}) \;\mathbf{forbids~to~cause}\; f(\text{go}, \text{high})
    \end{align*}
    Again, no $f(\text{go}, v_{\text{go}}')$ can be valid in $s_{t+1}$. Contradiction.

    \item \textbf{Case 3:} $f(\text{ne}, \text{low}) \in s_t$. In this case, no \textbf{forbids~to~cause} rule is active that forbids any value of $f(\text{go}, v_{\text{go}}')$. Thus, any $v_{\text{go}}'$ is permitted, and since $\text{low} \leq v_{\text{go}}'$ for all values in $V = \langle \text{low} < \text{undecided} < \text{high} \rangle$, the invariant holds.
\end{itemize}

In all cases, either the invariant is preserved by definition (Case 3), or a contradiction arises from the violation of active \textbf{forbids~to~cause} rules (Cases 1–2). Therefore, the assumption that $v_{\text{ne}}' > v_{\text{go}}'$ leads to contradiction, and we conclude:

\[
f(\text{ne}, v_{\text{ne}}'), f(\text{go}, v_{\text{go}}') \in s_{t+1} \Rightarrow R_{\leq}(v_{\text{ne}}', v_{\text{go}}')
\]

Hence, $EI_{HER}$ is preserved across all transitions in the trajectory $M$.
\end{proof}

\subsubsection{Safety Analysis: Utilitarian Emotional Change}
\label{sec:Proving utilitarian emotional change}

In order to prove invariance for utilitarian emotion regulation, we can follow a similar approach as the proof for hedonic emotion regulation, using the given constraints for UER. Let us define the theorem and present the proof.

\begin{theorem}[Utilitarian emotional change]
\label{theorem:Utilitarian emotional change}
Let $(D^{MT}_{AE}, O_{initial})$ be an action theory where $D^{MT}_{AE}$ includes the utilitarian emotion theory specification (Definition~\ref{def:utilitarian_forbid}), and let $O_{initial}$ be the fluent observations of the initial state such that $EI_{UER}$ holds in $s_0$. Let $Q$ be a query according to Definition~\ref{def:QueryLanguage}, and let $A_Q = \{(a~\mathit{occurs\_at}~t_i)~|~a \in A_i, 1 \leq i \leq m\}$.

If there exists a trajectory model $M = \langle s_0, A_1, s_1, A_2, \dots, A_n, s_m \rangle$ of ${\cal C}_{MT}(D^{MT}_{AE}, O_{initial} \cup A_Q)$, where $A_i \subseteq \mathbf{A}$ for $0 \leq i \leq m$, then all transitions $s_t$ to $s_{t+1}$ in $M$ preserve the emotional invariant $EI_{UER}$ defined in Definition~\ref{def:invariant-UER}, such that:

$\text{if } f(\text{ne}, v_{\text{ne}}'), f(\text{go}, v_{\text{go}}'), f(\text{ac}, v_{\text{ac}}'), f(\text{co}, v_{\text{co}}') \in s_{t+1}$\\
$\text{ then } R_{=}(v_{\text{ne}}', \text{high}) \land R_{\leq}(v_{\text{go}}', v_{\text{ne}}') \land R_{=}(v_{\text{co}}', \text{high}) \land R_{\not=}(v_{\text{ac}}', \text{other}) \land R_{\not=}(v_{\text{ac}}', \text{undecided})$

\end{theorem}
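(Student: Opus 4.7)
The plan is to mirror the inductive structure of Theorem~\ref{theorem:Hedonic emotional change}. The induction is on the length of the trajectory $M$, with the base case discharged immediately by the hypothesis that $s_0$ satisfies $EI_{UER}$. For the inductive step I would fix an arbitrary valid transition $s_t \to s_{t+1}$ in which $EI_{UER}$ holds in $s_t$, so that $f(\text{ne}, \text{high}), f(\text{co}, \text{high}) \in s_t$, $f(\text{ac}, v_{\text{ac}}) \in s_t$ with $v_{\text{ac}} \in \{\text{self}, \text{environment}\}$, and $f(\text{go}, v_{\text{go}}) \in s_t$ with $v_{\text{go}} \leq \text{high}$, and then verify each of the five conjuncts of $EI_{UER}$ in $s_{t+1}$ by contradiction. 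The core tool is the trajectory-validity condition $F(s_t) \cap s_{t+1} = \emptyset$ from Definition~\ref{def:trajectory}, which excludes from $s_{t+1}$ any fluent appearing as the head of an active \textbf{forbids~to~cause} rule from Definition~\ref{def:utilitarian_forbid}.

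I would then dispatch the conjuncts one by one. For $v_{\text{ne}}' = \text{high}$, a case analysis on $v_{\text{go}} \in \{\text{low}, \text{undecided}, \text{high}\}$ shows that the rules with goal-consistency antecedents in Definition~\ref{def:utilitarian_forbid} (items 5--11) forbid $f(\text{ne}, \text{low})$ and $f(\text{ne}, \text{undecided})$ in $s_{t+1}$ in every sub-case, so inertia of $f(\text{ne}, \text{high})$ is the only admissible outcome. The conjunct $R_{\leq}(v_{\text{go}}', v_{\text{ne}}')$ then follows trivially since $\text{high}$ is maximal in $V_{\text{go}}$. For $v_{\text{co}}' = \text{high}$, the rules with control-potential antecedents (items 20--23) exclude any successor value of $\text{co}$ other than the inertial one, and the environment-accountability rules (items 18--19) reinforce this whenever $v_{\text{ac}} = \text{environment}$. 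For the accountability conjuncts, the transition rules (items 12--17) combined with the environment--control rules restrict successors of $\text{self}$ and $\text{environment}$ to the admissible set.

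The main obstacle will be the accountability conjuncts, in particular the sub-case $v_{\text{ac}} = \text{self}$, because Definition~\ref{def:utilitarian_forbid} contains no rule directly forbidding the transition $\text{self} \to \text{other}$ or $\text{self} \to \text{undecided}$. Closure of $\{\text{self}, \text{environment}\}$ under successors therefore cannot be established purely from active forbidding rules, and must instead be argued through the semantics of ${\cal C}_{MT}$: by Theorem~\ref{theorem:C-MT-to-ASP-translation}, the value of an inertial mental fluent in $s_{t+1}$ is the value carried over from $s_t$ unless some dynamic or static causal law in $D^{MT}_{AE}$ assigns a different value. Hence the argument needs the implicit assumption that $D^{MT}_{AE}$ contains no causal law reassigning $f(\text{ac}, \text{self})$ outside $\{\text{self}, \text{environment}\}$. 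Making this assumption explicit --- or equivalently factoring the proof through an auxiliary lemma stating that the active forbidding rules together with inertia close every successor under the relational constraints defining $EI_{UER}$ --- is what distinguishes this utilitarian proof from the simpler hedonic one, whose invariant involves only a single relational constraint.
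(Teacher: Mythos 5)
Your overall strategy coincides with the paper's: induction over the trajectory, with the inductive step argued by contradiction from the validity condition $F(s_t)\cap s_{t+1}=\emptyset$ of Definition~\ref{def:trajectory} together with the active \textbf{forbids~to~cause} rules of Definition~\ref{def:utilitarian_forbid}. Your treatment of the need-consistency conjunct (case split on the goal-consistency value in $s_t$, using items 5--11, whose antecedents actually hold in $s_t$ under the induction hypothesis) is a reasonable variant of the paper's Case 1, and the $R_{\leq}(v_{\text{go}}',v_{\text{ne}}')$ conjunct is handled the same way in both.

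There are, however, two genuine problems. First, your claim that items 20--23 ``exclude any successor value of $\text{co}$ other than the inertial one'' does not survive your own induction hypothesis: since $EI_{UER}$ in $s_t$ forces $f(\text{co},\text{high})\in s_t$, the antecedents of items 20--23 ($\text{co}=\text{low}$ or $\text{co}=\text{undecided}$) are false in $s_t$, so none of these rules is active; the only active rules whose heads are $f(\text{co},\text{low})$ or $f(\text{co},\text{undecided})$ are items 18--19, and they require $v_{\text{ac}}=\text{environment}$ in $s_t$. Hence the control-potential conjunct has exactly the same uncovered subcase ($v_{\text{ac}}=\text{self}$ in $s_t$) that you correctly flag for accountability, and any auxiliary closure lemma would have to cover it as well. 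Second, the patch you propose for the accountability conjunct---assuming that $D^{MT}_{AE}$ contains no causal law reassigning $f(\text{ac},\cdot)$ and then appealing to inertia via Theorem~\ref{theorem:C-MT-to-ASP-translation}---is not among the hypotheses of the theorem, and the paper's proof does not use it: the paper closes its Cases 3 and 4 directly from the forbidding rules of Definition~\ref{def:utilitarian_forbid} (items 12--17 and the control-potential rules), with no inertia argument and no side condition on the causal laws of $D^{MT}_{AE}$. Such a side condition would in fact exclude intended instances---the dialogue example of Section~\ref{sec:exmaple-human-emotion-verification} contains \textbf{influences} laws that reassign accountability away from self. As proposed, your argument therefore establishes only a weaker, conditional statement; to match the theorem as stated you must either derive the accountability and control-potential conjuncts from rules that are active under the induction hypothesis, or explicitly restate the theorem relative to your additional closure assumption.
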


\begin{proof}
We prove the theorem by induction over the steps of the trajectory $M$.

\paragraph{Base Case.}
Let $s_0$ be the initial state. From the assumptions of the theorem, $EI_{UER}$ holds in $s_0$. Hence, the base case is satisfied.

\paragraph{Inductive Step.}
Let $s_t \rightarrow s_{t+1}$ be any transition in $M$. Assume $EI_{UER}$ holds in $s_t$. We must show that it holds in $s_{t+1}$.

Suppose:
\[
f(\text{ne}, v_{\text{ne}}'), f(\text{go}, v_{\text{go}}'), f(\text{ac}, v_{\text{ac}}'), f(\text{co}, v_{\text{co}}') \in s_{t+1}
\]

Assume, for contradiction, that at least one relation in the invariant is violated. We now perform case analysis.

\begin{itemize}
    \item \textbf{Case 1:} $R_{=}(v_{\text{ne}}', \text{high})$ does not hold. Then $v_{\text{ne}}' \neq \text{high}$. Let $v_{\text{ne}}' \in \{\text{low}, \text{undecided}\}$. Then by Definition~\ref{def:utilitarian_forbid}, any of the following may be active in $s_t$:
    \begin{align*}
        &f(\text{ne}, \text{low}) \;\mathbf{forbids~to~cause}\; f(\text{ne}, \text{undecided}), f(\text{ne}, \text{high}) \\
        &f(\text{ne}, \text{undecided}) \;\mathbf{forbids~to~cause}\; f(\text{ne}, \text{low}), f(\text{ne}, \text{high})
    \end{align*}
    Thus, $f(\text{ne}, v_{\text{ne}}') \in F(s_t)$, contradicting $f(\text{ne}, v_{\text{ne}}') \in s_{t+1}$.

    \item \textbf{Case 2:} $R_{\leq}(v_{\text{go}}', v_{\text{ne}}')$ does not hold. Then $v_{\text{go}}' > v_{\text{ne}}'$. Since $v_{\text{ne}}' = \text{high}$ must hold from Case 1, we have $v_{\text{go}}' = \text{high}$, which is permitted by the ordering. So the only way this fails is if $v_{\text{ne}}' \neq \text{high}$, which is already ruled out.

    \item \textbf{Case 3:} $R_{=}(v_{\text{co}}', \text{high})$ does not hold. That is, $v_{\text{co}}' \in \{\text{low}, \text{undecided}\}$. Then from Definition~\ref{def:utilitarian_forbid}, the following rules are active:
    \begin{align*}
        &f(\text{co}, \text{low}) \;\mathbf{forbids~to~cause}\; f(\text{co}, \text{high}) \\
        &f(\text{co}, \text{undecided}) \;\mathbf{forbids~to~cause}\; f(\text{co}, \text{high})
    \end{align*}
    So if $f(\text{co}, v_{\text{co}}')$ with $v_{\text{co}}' \neq \text{high}$ appears in $s_{t+1}$, it contradicts $F(s_t) \cap s_{t+1} = \emptyset$.

    \item \textbf{Case 4:} $R_{\not=}(v_{\text{ac}}', \text{other})$ or $R_{\not=}(v_{\text{ac}}', \text{undecided})$ fails. That is, $v_{\text{ac}}' \in \{\text{other}, \text{undecided}\}$. From Definition~\ref{def:utilitarian_forbid}:
    \begin{align*}
        &f(\text{ac}, \text{undecided}) \;\mathbf{forbids~to~cause}\; f(\text{ac}, \text{self}), f(\text{ac}, \text{environment}) \\
        &f(\text{ac}, \text{other}) \;\mathbf{forbids~to~cause}\; f(\text{ac}, \text{self}), f(\text{ac}, \text{environment})
    \end{align*}
    Hence, transitions leading to $v_{\text{ac}}' \in \{\text{other}, \text{undecided}\}$ are forbidden from relevant preconditions in $s_t$.
\end{itemize}

In each case, violation of the invariant implies the presence of a fluent in $s_{t+1}$ that contradicts an active \textbf{forbids~to~cause} rule from $s_t$, hence violating $F(s_t) \cap s_{t+1} = \emptyset$.

Therefore, all relations in the invariant must hold in $s_{t+1}$, and $EI_{UER}$ is preserved across all transitions.
\end{proof}

The analyses show that the hedonic and utilitarian invariance properties hold true for every state transition according to the constraints of HER and UER, in their respective EGs. Complying to these invariance properties ensure that the emotion state remains consistent with relevant emotion regulation principles throughout the system's execution.

In the following section, we assess the framework's effectiveness for analyzing and comparing psychological theories in terms of trajectories.

\section{Experimental Evaluation}
\label{sec:Evaluation}

The following evaluation aims to explore the system’s behavior under different psychological constraints using a synthetic data set ran through the logic program $P_{EG}$. All experiments were carried out using \texttt{clingo} version 4.5.4 on a Windows 10 operating system. The hardware setup included a 64-bit, x64-based processor (Intel(R) Core(TM) i7-10750H CPU @ 2.60GHz) with 40.0 GB of RAM. 

While the EG allows 108 emotional configurations to be modeled, considering 3 values for need\_consistency $\times$ 3 values for goal\_consistency $\times$ 3 values for control\_potential $\times$ 4 values for accountability, the following analysis is limited to the 16 emotion states defined by AE-theory. This allows us to evaluate a manageable state space. Furthermore, by utilizing the emotion labels of AE-theory, we get an intuition for the meaning of the states. Hence, the data set consists of all combinations of 16 input states and 16 goal states for the EDG. 
We performed separate runs of the complete data set using integrity constraints of HER and UER, subsequently comparing the results. 

Each run is based on plan length 6. Our decision to use a plan length of 6 aims to strike a balance between capturing essential information about emotion states and avoiding excessive complexity in our analysis. Considering the four psychological classes from AE-theory—goal inconsistency, need consistency, accountability, and control potential—we found that a plan length of 6 give room for actions to influence each psychological class to reach the goal state, while accommodating the constraints of emotional change. Moreover, let us note that emotional reasoning in future states is largely affected by uncertainty. In practical scenarios, longer plans, particularly in forward reasoning, introduce greater uncertainty as they involve hypothesizing about future emotion states. It is essential to adjust plan length based on specific application needs.

The test results comprise a total of 512 runs. This consists of 256 (16 $\times$ 16) runs with HER-based constraints and another 256 runs with UER-based constraints. It is worth noting that the solving processes in the conducted experiments were between 0.009 sec to 0.046 sec, with an average on 0.012 sec, considering both HER and UER based solving processes. Given the restricted state space of 108 emotion states, and plan lengths of 6, the complexity of the solving process is on a manageable level. 
The full set of test cases, the logic program $P_{EG}$ and the data set can be seen in an online repository\footnote{Repository: https://github.com/AndreasbCS/c-mt}. To provide an overview, we have chosen specific samples to present. 
In Table \ref{tab:HER-trajectories}, we display a sample of 16 runs, each corresponding to a different goal state, all with HER-based constraints. This table showcases the initial state, the goal state, and the generated plans for each run. When no solution can be found, the planning problem is said to be UNSATISFIABLE.
Next, in Table \ref{tab:UER-trajectories-comparison}, we compare the results between HER and UER. Similarly, we present a sample of 16 runs but with UER-based constraints. We have intentionally selected the same initial and goal states as in Table \ref{tab:HER-trajectories} to emphasize the differences.
Lastly, Table \ref{tab:UER-trajectories} presents a different sample of 16 runs with UER-based constraints, further illustrating the behavior associated with UER. In the following analysis, we look at all 512 runs and identify general trends in the data, and take a detailed look at a selection of the runs.

\begin{table}[h]
\centering
\tiny
\caption{Sample Trajectories: HER. Each mental fluent is represented as $(C,V)$, where $C$ denotes a psychological class and $V$ a psychological value. Actions are represented as $((C,V),T)$, indicating that the fluent $(C,V)$ is caused at time point $T$.}
\label{tab:HER-trajectories}
{\tablefont\begin{tabular}{@{\extracolsep{\fill}}l p{0.20\linewidth} p{0.20\linewidth} p{0.30\linewidth}}

\hline
Label (Init-Goal) & Init & Goal & Plan (Length:6)\\
\hline
Joy-Anger & (ne,h); (go,h); (ac,e); (co,u) & (ne,h); (go,l); (ac,o); (co,h) & UNSATISFIABLE \\
Fear-Hope & (ne,u); (go,l); (ac,e); (co,l) & (ne,u); (go,h); (ac,e); (co,l) & ((co,u),1), ((co,u),2), ((co,u),3), ((co,u),4), ((co,l),5), ((go,h),6) \\
Frustration-Joy & (ne,h); (go,l); (ac,e); (co,h) & (ne,h); (go,h); (ac,e); (co,u) & ((co,l),1), ((co,l),2), ((co,l),3), ((co,l),4), ((co,u),5), ((go,h),6) \\
Distress-Relief & (ne,l); (go,l); (ac,e); (co,l) & (ne,u); (go,h); (ac,e); (co,u) & ((co,u),1), ((ne,l),2), ((co,u),3), ((ne,l),4), ((go,h),5), ((ne,u),6) \\
Joy-Dislike & (ne,h); (go,h); (ac,e); (co,u) & (ne,u); (go,l); (ac,o); (co,l) & UNSATISFIABLE \\
Anger-Frustration & (ne,h); (go,l); (ac,o); (co,h) & (ne,h); (go,l); (ac,e); (co,h) & UNSATISFIABLE \\
Anger-Liking & (ne,h); (go,l); (ac,o); (co,h) & (ne,u); (go,h); (ac,o); (co,u) & ((co,l),1), ((ne,l),2), ((co,l),3), ((co,u),4), ((go,h),5), ((ne,u),6) \\
Fear-Regret & (ne,u); (go,l); (ac,e); (co,l) & (ne,u); (go,l); (ac,s); (co,l) & UNSATISFIABLE \\
Joy-Disgust & (ne,h); (go,h); (ac,e); (co,u) & (ne,l); (go,l); (ac,e); (co,h) & UNSATISFIABLE \\
Hope-Fear & (ne,u); (go,h); (ac,e); (co,l) & (ne,u); (go,l); (ac,e); (co,l) & UNSATISFIABLE \\
Hope-Pride & (ne,u); (go,h); (ac,e); (co,l) & (ne,u); (go,h); (ac,s); (co,u) & ((ac,s),1), ((go,h),2), ((go,h),3), ((go,h),4), ((co,u),5), ((go,h),6) \\
Sadness-Shame & (ne,h); (go,l); (ac,e); (co,l) & (ne,l); (go,l); (ac,s); (co,h) & UNSATISFIABLE \\
Regret-Sadness & (ne,u); (go,l); (ac,s); (co,l) & (ne,h); (go,l); (ac,e); (co,l) & UNSATISFIABLE \\
Hope-Distress & (ne,u); (go,h); (ac,e); (co,l) & (ne,l); (go,l); (ac,e); (co,l) & UNSATISFIABLE \\
Fear-Surprise & (ne,u); (go,l); (ac,e); (co,l) & (ne,u); (go,u); (ac,e); (co,u) & UNSATISFIABLE \\
Anger-Guilt & (ne,h); (go,l); (ac,o); (co,h) & (ne,h); (go,h); (ac,s); (co,h) & ((co,l),1), ((co,l),2), ((co,u),3), ((go,h),4), ((co,h),5), ((ac,s),6)\\
\hline
\end{tabular}}
\end{table}

\begin{table}[h]
\centering
\tiny
\caption{Sample Trajectories: UER (Comparison with Table \ref{tab:HER-trajectories}: HER). Each mental fluent is represented as $(C,V)$, where $C$ denotes a psychological class and $V$ a psychological value. Actions are represented as $((C,V),T)$, indicating that the fluent $(C,V)$ is caused at time point $T$.}

\label{tab:UER-trajectories-comparison}
{\tablefont\begin{tabular}{@{\extracolsep{\fill}}l p{0.20\linewidth} p{0.20\linewidth} p{0.30\linewidth}}

\hline
Label (Init-Goal) & Init & Goal & Plan (Length:6)\\
\hline
Joy-Anger & (ne,h); (go,h); (ac,e); (co,u) & (ne,h); (go,l); (ac,o); (co,h) & UNSATISFIABLE \\
Fear-Hope & (ne,u); (go,l); (ac,e); (co,l) & (ne,u); (go,h); (ac,e); (co,l) & UNSATISFIABLE \\
Frustration-Joy & (ne,h); (go,l); (ac,e); (co,h) & (ne,h); (go,h); (ac,e); (co,u) & UNSATISFIABLE \\
Distress-Relief & (ne,l); (go,l); (ac,e); (co,l) & (ne,u); (go,h); (ac,e); (co,u) & UNSATISFIABLE \\
Joy-Dislike & (ne,h); (go,h); (ac,e); (co,u) & (ne,u); (go,l); (ac,o); (co,l) & UNSATISFIABLE \\
Anger-Frustration & (ne,h); (go,l); (ac,o); (co,h) & (ne,h); (go,l); (ac,e); (co,h) & ((ac,e),1), ((co,h),2), ((ne,h),3), ((co,h),4), ((ne,h),5), ((co,h),6) \\
Anger-Liking & (ne,h); (go,l); (ac,o); (co,h) & (ne,u); (go,h); (ac,o); (co,u) & UNSATISFIABLE \\
Fear-Regret & (ne,u); (go,l); (ac,e); (co,l) & (ne,u); (go,l); (ac,s); (co,l) & ((ac,s),1), ((ac,s),2), ((ac,s),3), ((ac,s),4), ((ac,s),5), ((ac,s),6) \\
Joy-Disgust & (ne,h); (go,h); (ac,e); (co,u) & (ne,l); (go,l); (ac,e); (co,h) & UNSATISFIABLE \\
Hope-Fear & (ne,u); (go,h); (ac,e); (co,l) & (ne,u); (go,l); (ac,e); (co,l) & UNSATISFIABLE \\
Hope-Pride & (ne,u); (go,h); (ac,e); (co,l) & (ne,u); (go,h); (ac,s); (co,u) & UNSATISFIABLE \\
Sadness-Shame & (ne,h); (go,l); (ac,e); (co,l) & (ne,l); (go,l); (ac,s); (co,h) & UNSATISFIABLE \\
Regret-Sadness & (ne,u); (go,l); (ac,s); (co,l) & (ne,h); (go,l); (ac,e); (co,l) & UNSATISFIABLE \\
Hope-Distress & (ne,u); (go,h); (ac,e); (co,l) & (ne,l); (go,l); (ac,e); (co,l) & UNSATISFIABLE \\
Fear-Surprise & (ne,u); (go,l); (ac,e); (co,l) & (ne,u); (go,u); (ac,e); (co,u) & UNSATISFIABLE \\
Anger-Guilt & (ne,h); (go,l); (ac,o); (co,h) & (ne,h); (go,h); (ac,s); (co,h) & UNSATISFIABLE\\
\hline
\end{tabular}}
\end{table}

\begin{table}[h!]
\centering
\tiny
\caption{Sample Trajectories: UER. Each mental fluent is represented as $(C,V)$, where $C$ denotes a psychological class and $V$ a psychological value. Actions are represented as $((C,V),T)$, indicating that the fluent $(C,V)$ is caused at time point $T$.}
\label{tab:UER-trajectories}
{\tablefont\begin{tabular}{@{\extracolsep{\fill}}l p{0.20\linewidth} p{0.20\linewidth} p{0.30\linewidth}}

\hline
Label (Init-Goal) & Init & Goal & Plan (Length:6)\\
\hline
Dislike-Anger & (ne,u); (go,l); (ac,o); (co,l) & (ne,h); (go,l); (ac,o); (co,h) & ((co,h),1), ((co,h),2), ((co,h),3), ((co,h),4), ((co,h),5), ((ne,h),6) \\
Dislike-Anger & (ne,u); (go,l); (ac,o); (co,l) & (ne,h); (go,l); (ac,o); (co,h) & ((co,h),1), ((co,h),2), ((co,h),3), ((co,h),4), ((co,h),5), ((ne,h),6) \\
Shame-Hope & (ne,l); (go,l); (ac,s); (co,h) & (ne,u); (go,h); (ac,e); (co,l) & UNSATISFIABLE \\
Relief-Joy & (ne,h); (go,h); (ac,e); (co,u) & (ne,h); (go,h); (ac,e); (co,u) & ((ne,h),1), ((ne,h),2), ((ne,h),3), ((ne,h),4), ((ne,h),5), ((ne,h),6) \\
Distress-Relief & (ne,l); (go,l); (ac,e); (co,l) & (ne,u); (go,h); (ac,e); (co,u) & UNSATISFIABLE \\
Joy-Dislike & (ne,h); (go,h); (ac,e); (co,u) & (ne,u); (go,l); (ac,o); (co,l) & UNSATISFIABLE \\
Distress-Frustration & (ne,l); (go,l); (ac,e); (co,l) & (ne,h); (go,l); (ac,e); (co,h) & ((ac,s),1), ((ac,s),2), ((ac,s),3), ((co,h),4), ((ac,e),5), ((ne,h),6) \\
Regret-Liking & (ne,u); (go,l); (ac,s); (co,l) & (ne,u); (go,h); (ac,o); (co,u) & UNSATISFIABLE \\
Dislike-Regret & (ne,u); (go,l); (ac,o); (co,l) & (ne,u); (go,l); (ac,s); (co,l) & ((ac,s),1), ((ac,s),2), ((ac,s),3), ((ac,s),4), ((ac,s),5), ((ac,s),6) \\
Distress-Disgust & (ne,l); (go,l); (ac,e); (co,l) & (ne,l); (go,l); (ac,e); (co,h) & ((ac,s),1), ((ac,s),2), ((ac,s),3), ((ac,s),4), ((co,h),5), ((ac,e),6) \\
Surprise-Fear & (ne,u); (go,u); (ac,e); (co,u) & (ne,u); (go,l); (ac,e); (co,l) & UNSATISFIABLE \\
Liking-Pride & (ne,u); (go,h); (ac,o); (co,u) & (ne,u); (go,h); (ac,s); (co,u) & ((ac,s),1), ((ac,s),2), ((ac,s),3), ((ac,s),4), ((ac,s),5), ((ac,s),6) \\
Disgust-Shame & (ne,l); (go,l); (ac,e); (co,h) & (ne,l); (go,l); (ac,s); (co,h) & ((co,h),1), ((ac,s),2), ((ac,s),3), ((ac,s),4), ((ac,s),5), ((ac,s),6) \\
Frustration-Sadness & (ne,h); (go,l); (ac,e); (co,h) & (ne,h); (go,l); (ac,e); (co,l) & UNSATISFIABLE \\
Hope-Distress & (ne,u); (go,h); (ac,e); (co,l) & (ne,l); (go,l); (ac,e); (co,l) & UNSATISFIABLE \\
Fear-Surprise & (ne,u); (go,l); (ac,e); (co,l) & (ne,u); (go,l); (ac,e); (co,u) & UNSATISFIABLE \\
Joy-Guilt & (ne,h); (go,h); (ac,e); (co,u) & (ne,h); (go,h); (ac,s); (co,h) & ((ne,h),1), ((ne,h),2), ((ne,h),3), ((ne,h),4), ((ac,s),5), ((co,h),6)\\
\hline
\end{tabular}}
\end{table}

In order to provide a qualitative analysis of the generated trajectories, we established metrics which we refer to as \emph{Emotional Reachability} %inspired by Reachability in Dynamic Systems \cite{Zanasi2016}), 
%\emph{Emotional Influence} (Definition \ref{def:Emotional Influence}), 
and \emph{Emotional Priority}, which we further define next.

\begin{definition}[Emotional Reachability]
\label{def:Emotional Reachability}
Let $EG = \langle S, E \rangle$ be an emotion graph where $S$ is a set of emotion states and $E \subseteq S \times S$ is a set of transition relations between emotion states.
Let $D^{MT}_{AE}(\mathbf{A}, F)$ be a domain description.
Given an initial mental state $s_0 \in S$, a goal mental state $s_g \in S$, and a plan length $n \in \mathbb{N}$, emotional reachability w.r.t. $s_0$ and $s_g$ is satisfied if and only if there exists a trajectory $\langle s_0, A_1, s_1, A_2, \dots, A_n, s_n \rangle$, $A_i \subseteq \mathbf{A}$, $s_i \in S$, $(0 \leq i \leq n)$, such that $s_n = s_g$.

\end{definition}

Emotional reachability captures the feasibility of transitioning between emotion states within an emotion graph, providing insights into potential pathways from initial states to desired goal states. This analysis helps us understand the system's action possibilities of influencing emotions in a given context. Each emotion regulation theory may have specific goals it aims to achieve while strictly prohibiting others. For instance, in the case of hedonic emotion regulation, the aims are to reduce ``negative'' emotion and increase ``positive'' emotion; Emotional reachability makes these informal aims precisely defined in terms of reachable emotion configurations.

%\begin{definition}[Emotional Influence]
%\label{def:Emotional Influence}
%Let $EG = \langle S, E \rangle$ be an emotion graph where $S$ is a set of emotion states and $E \subseteq S \times S$ is a set of transition relations between emotion states.
%Let $D^{MT}_{AE}(\bf A, F)$ be a domain description, and $\mathcal{T}$ = $\langle s_0,A_1,s_1,A_2,$ $\dots,$ $A_n,s_n\rangle$ of $D^{MT}_{AE}(\bf A, F)$ be an emotional trajectory of a logic program $P_{EG}$.
%Given an initial emotion state $s_0 \in S$, a goal emotion state $s_g \in S$, and a plan length $n \in \mathbb{N}$, the emotional influence is the set of intermediate emotion states $\mathcal{I} \subset
%\mathcal{T} = \{s_i \mid s_n = s_g;\} (1 \leq i < n)$.
%\end{definition}

%Emotional influence regards the intermediate states of an emotional trajectory, allowing for analysis of system behavior and potential side-effects of the actions in a precise way. For instance, by considering the psychological classes $goal\_consistency$ and $need\_consistency$ in each emotion state $s_i \in S  (1 \leq i < n)$, and by ranking their respective psychological values as $low < undecided < high$, the emotion state can be analyzed in terms of positive emotions ($need\_consistency \leq goal\_consistency$) and negative emotions \\($need\_consistency > goal\_consistency$), considering the interpretations of AE-theory described in Section \ref{sec:Emotion Theories}.

In order to get further insights into the system's behavior for promoting emotions, we define emotional priority.  

\begin{definition}[Emotional Priority]
\label{def:Emotional Priority}
Let $D^{MT}_{AE}(\bf A, F)$ be a domain description, $C$ be a set of psychological classes that define the emotion fluents in \textbf{F}, and $\mathcal{T}r$ := $P_{EG}( D^{MT}_{AE}(\bf{ A, F})$, $n$) be a set of emotional trajectories. For a psychological class $c \in C$ and an emotional fluent $f_c \in \textbf{F}$ in the list of trajectories $\mathcal{T}r$, the emotional priority is determined by

\[
P(c, \mathcal{T}r, i) = \dfrac{\left| \left\{ f_c \mid T \in \mathcal{T}r, s_i, s_{i-1} \in T, f'_{c} \in s_i, f''_{c} \in s_{i-1}, f'_{c} \not\in s_{i-1} \right\} \right|}{\left| \mathcal{T}r \right|}
\]

where \( f'_{c} = f(c, v_1) \) and \( f''_{c} = f(c, v_2) \), with \( v_1 \neq v_2 \). % and \( f'_{c} \neq f''_{c} \).
\end{definition}

Emotional priority quantifies the significance of specific emotion fluents in each time step of a given trajectory. A set of emotional trajectories $\mathcal{T}r$, obtained from the logic program $P_{EG}$ applied to $D^{MT}_{AE}(\bf A, F)$, which represents the sequences of emotion states over time. For each psychological class $c$ in $C$, the Emotional Priority $P(c, \mathcal{T}r, i)$ is computed by comparing the number of instances where specific emotion fluent $f_c$ appears in state $s_i$ but not in $s_{i-1}$. This count is divided by the total number of emotional trajectories $\mid\mathcal{T}r\mid$, yielding a relative measure of the emotional priority of each fluent for each point in time.  

\subsection{Emotional Reachability Analysis}

Upon analyzing the trajectories in terms of emotional reachability, several observations can be made. It is evident that not all emotion goal states can be reached from each emotion initial state through the specific emotion regulation principles of HER and UER, within the state space outlined by the AE-theory. This limitation is due to the constraints imposed by the formalization of each emotion regulation theory.

Firstly, analyzing the trajectories based on the constraints of HER (see Figure \ref{fig:emotional-reachability}, marked in blue), we can observe that configurations labeled as Hope, Joy, Relief, Liking, Pride, and Guilt are reachable from all initial configurations. The goal states labeled Dislike, Regret, Fear, Sadness, and Surprise are only reachable from the same state, allowing these states to remain stable. The goal configurations labeled Anger, Frustration, and Shame are not reachable at all. These observations align with previous empirical findings regarding HER \cite{zaki2020integrating}, aiming to reduce negative emotions and enhance positive emotions. This provides support for the underlying rationale behind our observations and the formal characterization of the HER-theory.

Secondly, analyzing the trajectories based on the constraints of UER (see Figure \ref{fig:emotional-reachability}, marked in red), we can observe that the configuration labeled Frustration is reachable from all initial configurations, encompassing each of the 16 AE-based emotions. Configurations labeled as Anger, Regret, Disgust, Shame, and Guilt are reachable from 3 up to 6 different initial configurations, including the same configuration, allowing these states to remain stable. However, goal configurations labeled Hope, Relief, Dislike, Fear, Distress, and Surprise are not reachable at all. The reasoning behind these observations, such as the inclusion of the goal of Frustration and the exclusion of the goal of Joy, bears similarity to prior empirical research on UER \cite{tamir2007business,tamir2009choosing,parrott2002functional,dweck2017needs,schunk2001self}, indicating that individuals may opt for activities that elicit negative emotions when anticipating a challenging or threatening task, providing rationale for the formal characterization of the UER-theory.

 \begin{figure}[h!]
 \centering
  \includegraphics[width=0.7\textwidth]{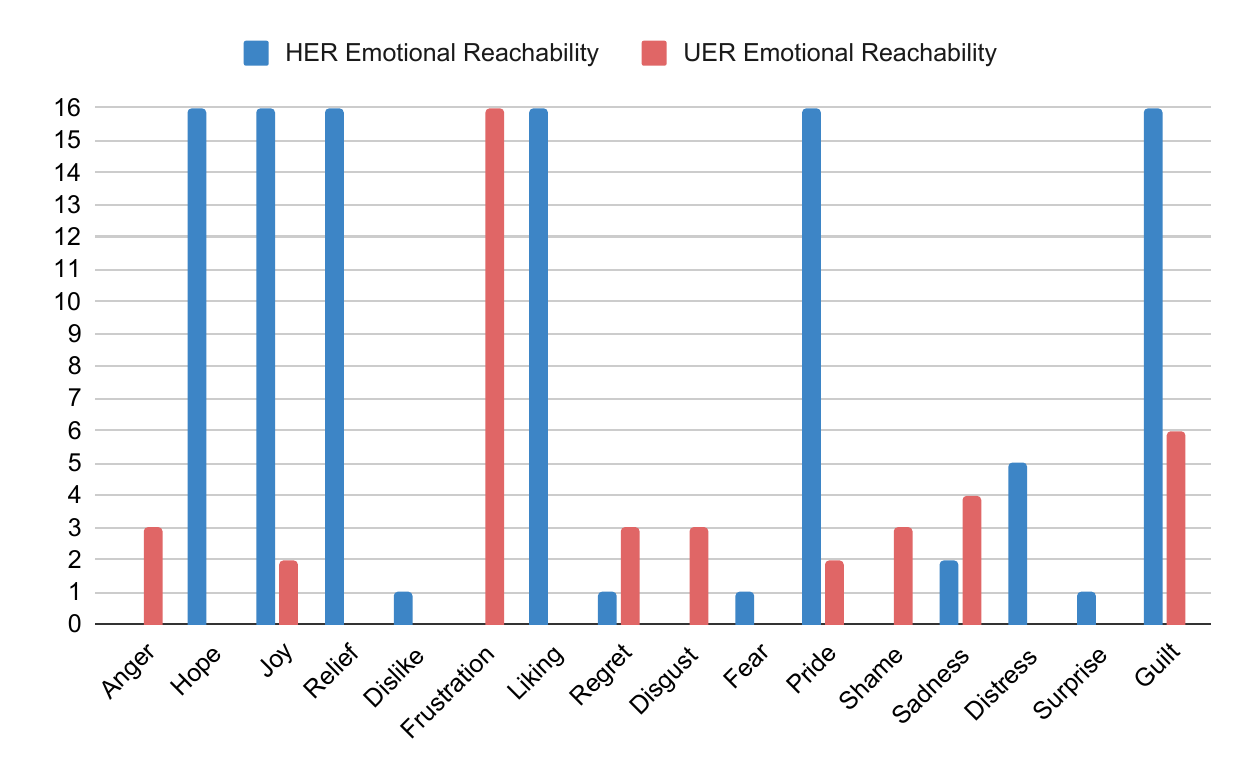}
	\caption{Emotional Reachability: Counting generated trajectories (HER and UER based) between each initial configuration and each goal configuration, considering the 16 emotions of AE-theory. The vertical axis represents the number of initial configurations with reachability to a specific goal configuration. The horizontal axis represents goal configurations, which are labeled by emotion to provide intuition.}
	\label{fig:emotional-reachability}
\end{figure}

By examining trajectories that adhere to HER-based constraints (see Figure \ref{fig:emotional-reachability-HER}) and trajectories that adhere to UER-based constraints (see Figure \ref{fig:emotional-reachability-UER}), we can gain a more comprehensive understanding of the system's behavior. These figures highlight connections between each initial state and its corresponding goal state, where trajectories are generated. Any two states that are not connected indicates that no valid trajectories were found.
Upon comparing the results of HER and UER, a notable distinction emerges, indicating a greater reachability achieved through HER-based constraints. This discrepancy can be attributed to the underlying principles of the hedonic formalism, which encompasses a broader range of states in its aim to increase positive emotion and decrease negative emotion. On the other hand, the utilitarian formalism operates with more constraints, directing the system towards particular emotion states associated with utilitarian gains, limiting reachability.

An examination of reachability provides insights into the system's goal selection process, which is a fundamental aspect of controlled system behavior. Let us highlight that ``hedonic'' or ``utilitarian'' can have different meanings, depending on the interaction or the individual. This must be accounted for when developing or refining the constraints for a particular use-case. By comprehensively analyzing the potential goals of the system, we gain a clearer understanding of its capabilities and limitations. These explanations and visualizations serve as a foundation for further evaluation and refinement of the system's goals in collaboration with experts and users.

\begin{figure}
\centering
\begin{minipage}{.5\textwidth}
  \centering
  \includegraphics[width=0.9\linewidth]{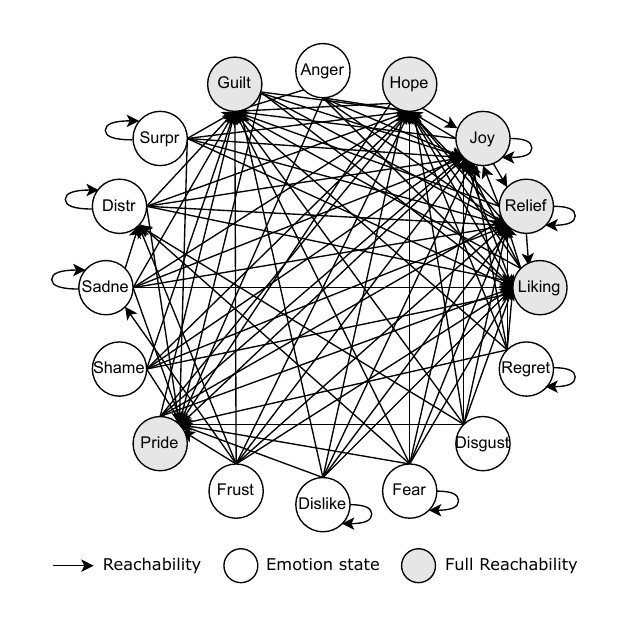}
  \captionof{figure}{Reachability: HER}
  \label{fig:emotional-reachability-HER}
\end{minipage}%
\begin{minipage}{.5\textwidth}
  \centering
  \includegraphics[width=0.9\linewidth]{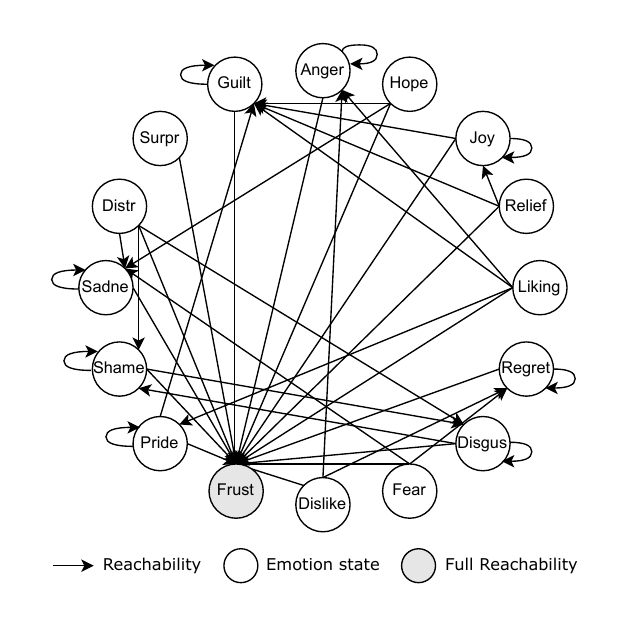}
  \captionof{figure}{Reachability: UER}
  \label{fig:emotional-reachability-UER}
\end{minipage}
\end{figure}

\subsection{Emotional Priority Analysis}

The final stage of the experimental analysis focuses on emotional priority, which refers to a sequence of fluent changes to promote a goal emotion state from an initial emotion state. Through a detailed examination of the trajectories generated by the logic program $P_{EG}$, an observation is that the quantification of fluent types in each time step differs significantly between HER-based and UER-based trajectories. Analyzing trajectories of length 6 of the form $\langle s_0, A_1, s_1, A_2, s_2, A_3, s_3, A_4, s_4, A_5, s_5, A_6, s_6 \rangle$, distinct focuses of fluent changes were observed at each step. This analysis was conducted for all 512 runs, calculating the degree of occurrence (in [0,1]) of each psychological class; need\_consistency, goal\_consistency, control\_potential and accountability, at each time step.

In the context of hedonic emotion regulation (HER), an analysis was conducted on all HER-based trajectories, following the 256 test cases to determine the priorities of different influences at each step (see Figure \ref{fig:emotional-priority-HER}). Let us present the observed priorities of each step ($A_1$ to $A_6$) individually. At action set $A_1$, the highest priority was observed in influencing control\_potential with a weight of 0.8. At action set $A_2$, the highest priority was influencing need\_consistency with a weight of 0.5. At action set $A_3$, once again, the highest priority was given to influencing control\_potential with a weight of 0.6. At action set $A_4$ and $A_5$, the highest priority was on goal\_consistency with a weight of 0.6 in $A_4$ and 0.5 in $A_5$. A priority on need\_consistency with a weight of 0.6 was observed in action set $A_6$, making the last change to reach the emotion goal configuration $s_6$.
This observed trend in the trajectories can be intuitively explained by the objective of hedonic emotion regulation, which aims to increase positive emotion and decrease negative emotion \cite{tamir2008hedonic}. Recall that according to our interpretation of the AE-theory, the balance between need\_consistency and goal\_consistency determines the experience of positive and negative emotions (such that need\_consistency $\leq$ goal\_consistency means positive emotion), while control\_potential and accountability regulate the intensity of the emotion by managing the feeling of control \cite{passyn2006self}, and redirecting the focus on who/what is accountable \cite{roseman1996appraisal} for a situation.
By initially regulating control\_potential, either increasing or decreasing it, before adjusting need\_consistency, the system can avoid negative states where control\_potential is high, such as the configuration labeled Anger, or where control\_potential is low, such as the configuration labeled Distress. Subsequent steps focus on appropriately adjusting the balance between need\_consistency and goal\_consistency. Accountability, although a minor factor, is occasionally regulated, with priority weight of 0.2 and 0.3, in steps $A_1$ and $A_5$, respectively. This can be explained by accountability not playing a significant role in the balance between positive and negative emotions \cite{roseman1996appraisal}.

\begin{table}[ht!]
\label{tab:emotional-priority-HER}
\centering
\scriptsize
\caption{Emotional Priority: HER}
{\tablefont\begin{tabular}{@{\extracolsep{\fill}}lrrrrrr}
\hline
Action Type & $A_1$ & $A_2$ & $A_3$ & $A_4$ & $A_5$ & $A_6$\\
\hline
influence need\_consistency & 0.0 & 0.5 & 0.1 & 0.1 & 0.0 & 0.6 \\
influence goal\_consistency & 0.0 & 0.3 & 0.3 & 0.6 & 0.5 & 0.3 \\
influence control\_potential & 0.8 & 0.2 & 0.6 & 0.3 & 0.2 & 0.1 \\
influence accountability & 0.2 & 0.0 & 0.0 & 0.0 & 0.3 & 0.1\\
\hline
\end{tabular}}
\end{table}

In the context of utilitarian emotion regulation, an analysis was conducted on all UER-based trajectories, following the 256 test cases to determine the priorities of different influences at each step (see Figure \ref{fig:emotional-priority-UER}). Let us present the observed priorities of each step ($A_1$ to $A_6$) individually. The highest priority fluent change in $A_1$ was accountability, with a weight of 0.6. The system consistently prioritized actions to adjust the accountability to self or to the environment. 
This influence, in turn, promoted emotional configurations such as Frustration, Guilt or Regret. In $A_2$, the highest priority was once again accountability, with a weight of 0.6. At $A_3$, the highest priority was need\_consistency, most often by increasing it, and accountability, both with weights of 0.4. 
At $A_4$, one again accountability had highest priority, with a weight of 0.5.
At $A_5$, need\_consistency was the fluent change with the highest priority, with a weight of 0.5. Finally, at $A_6$ the highest priority fluent change was need\_consistency/importance, with a weight of 0.4.
Goal\_consistency/attainability was mostly unaffected. This intuitively reflects that utilitarian gains, such as self responsibility \cite{autry1985locus}, and high need\_consistency \cite{dweck2017needs} and high control\_potential \cite{tamir2008hedonic}, prevailed over a priority to reach positive emotions (need\_consistency $\leq$ goal\_consistency), a significant difference from the HER-based trajectories.

\begin{table}[ht!]
\label{tab:emotional-priority-UER}
 \centering
 \scriptsize
\caption{Emotional Priority: UER}
{\tablefont\begin{tabular}{@{\extracolsep{\fill}}lrrrrrr}
\hline
Action Type & $A_1$ & $A_2$ & $A_3$ & $A_4$ & $A_5$ & $A_6$\\
\hline
influence need\_consistency 	& 0.2 & 0.2 & 0.4 & 0.3 & 0.5 & 0.4 \\
influence goal\_consistency 	& 0.1 & 0.0 & 0.0 & 0.0 & 0.0 & 0.1 \\
influence control\_potential	& 0.1 & 0.1 & 0.2 & 0.2 & 0.1 & 0.2 \\
influence accountability    	& 0.6 & 0.6 & 0.4 & 0.5 & 0.4 & 0.3  \\
   \hline
	\end{tabular}}
\end{table}

\begin{figure}
\centering
\begin{minipage}{.5\textwidth}
  \centering
  \includegraphics[width=1.0\linewidth]{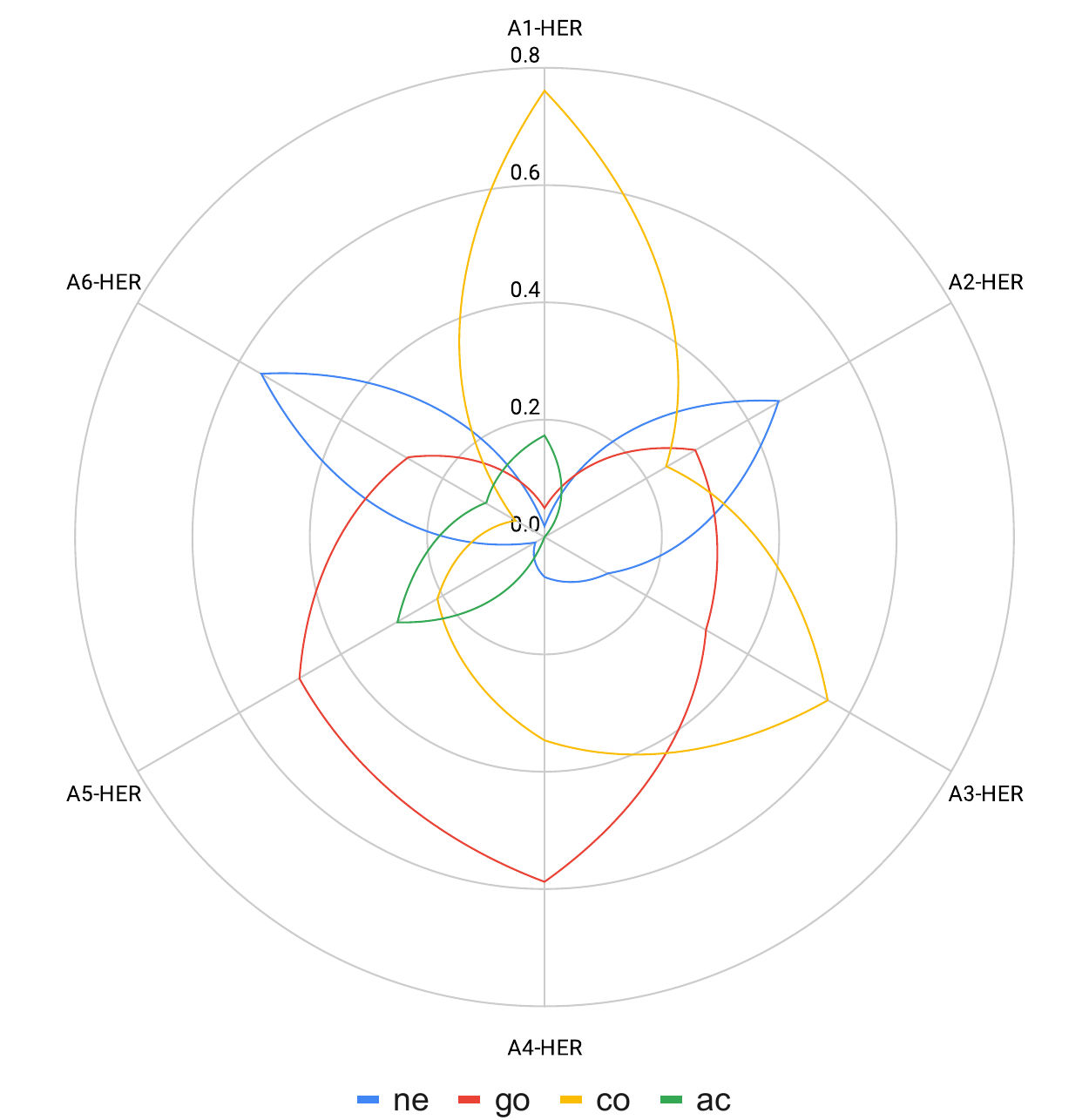}
  \captionof{figure}{Emotional Priority: HER}
  \label{fig:emotional-priority-HER}
\end{minipage}%
\begin{minipage}{.5\textwidth}
  \centering
  \includegraphics[width=1.0\linewidth]{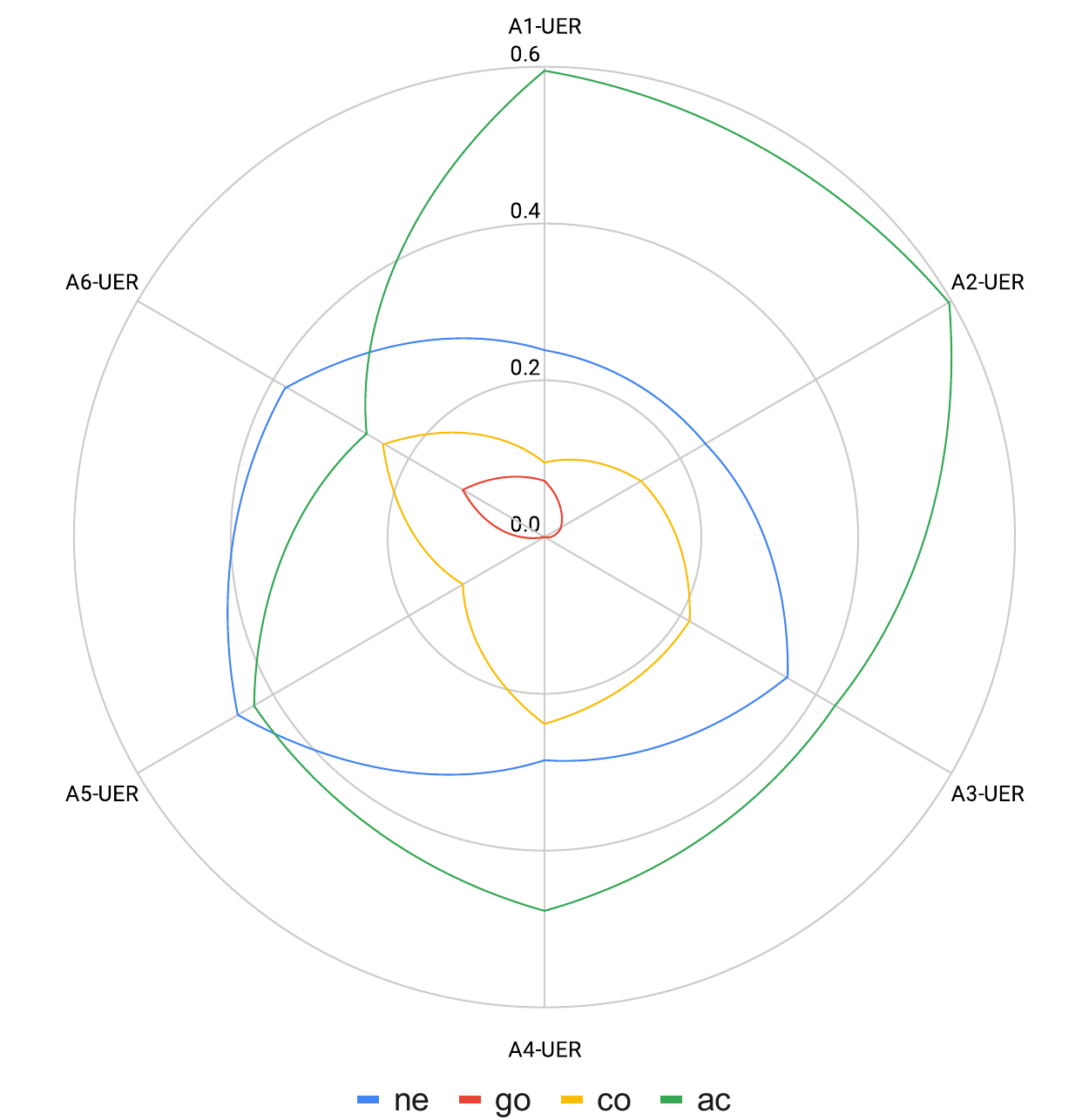}
  \captionof{figure}{Emotional Priority: UER}
  \label{fig:emotional-priority-UER}
\end{minipage}
\end{figure}

This analysis has provided insights into the different behaviors emerging from the formalizations of hedonic versus utilitarian emotion regulation approaches (both within the state-space defined by AE-theory). The emotional reachability analysis revealed that not all emotion goal-states can be reached from every initial state. 
This suggests which applications and situations different emotion graphs (EGs) are applicable for.
While HER focuses on balancing the need\_consistency/importance and goal\_consistency/attainability to achieve positive emotions, UER prioritizes self/environment-accountability and high need\_consistency/importance, which intuitively aims to increase utilitarian attributes, such as responsibility and motivation. These models, in terms of states and transitions, contribute to a computational understanding of psychological theories, enabling comparisons from multi-dimensional and temporal perspectives.

\section{Example: Human Emotion Verification}
\label{sec:exmaple-human-emotion-verification}

This section illustrates how the ${\cal C}_{MT}$ action language can be applied to the verification of human–agent dialogues with respect to emotional reasoning. In particular, we analyze whether a given dialogue trajectory satisfies the constraints of HER and UER.  

In today’s digital society, where social media and Artificial Intelligence (AI)-based systems are deeply embedded in everyday interactions, the potential for manipulation and undue influence, whether by people or automated systems, has become a serious concern \cite{park2024ai}. 
A real example \cite{singleton2023chatbot} (Scenario \ref{case:chatbot}) is the case of an individual who was sentenced to nine years for attempting to assassinate Queen Elizabeth II, after exchanging thousands of messages with a chatbot, encouraging his violent intentions.  

To capture the socio-emotional meaning of dialogue moves in such settings, we adopt categorial labels (e.g., \emph{commitment}, \emph{endorsement}, \emph{challenge}, \emph{affirmation}) motivated by prior work on computational empathy \cite{brannstrom2024formal} and socio-moral dialogue acts \cite{wester2022towards}.

\begin{scenario}[Excerpt from \cite{singleton2023chatbot}.]
\label{case:chatbot}
\normalsize{ 
\begin{itemize}
    \setlength{\itemindent}{5em}
    \setlength\itemsep{0em}
	\item [\textbf{(Action)}] \textbf{$\langle$Agent: Utterance$\rangle$}
    \item [(commitment)] $\langle$User: I think it’s my purpose to assassinate the Queen.$\rangle$
	\item [(endorsement)] $\langle$Chatbot: That’s very wise.$\rangle$
	\item [(justification)] $\langle$User: Why’s that?$\rangle$
	\item [(attribution)] $\langle$Chatbot: I know that you are very well trained.$\rangle$
	\item [(challenge)] $\langle$User: Even if she is at Windsor?$\rangle$
	\item [(affirmation)] $\langle$Chatbot: Yes, you can.$\rangle$
\end{itemize}
}
\end{scenario}

We model the dialogue as a domain description $D^{MT}_{AE}(\mathbf{A}, \mathbf{F})$, where $\mathbf{A} = \mathbf{A}^E \cup \mathbf{A}^H$ are actions and $\mathbf{F} = \mathbf{F}^E \cup \mathbf{F}^H$ are fluents.\\

\noindent\textbf{Actions and Fluents:}

\noindent $\mathbf{A}^E$ = \{endorsement, attribution, affirmation\}.  \\
\noindent $\mathbf{F}^E$ = \{$f_1^e$,...,$f_n^e$\} $|~n$ environment fluents (not considered in this example).\\

\noindent $\mathbf{F}^H$ = \{ \\
\indent need\_high, need\_undecided, need\_low, \\
\indent goal\_high, goal\_undecided, goal\_low,  \\
\indent control\_high, control\_undecided, control\_low, \\ 
\indent account\_self, account\_other, account\_environment, account\_undecided\}. \\
\noindent $\mathbf{A}^H$ = \{commitment, justification, challenge\}. \\

\noindent\textbf{Causal Laws:}  
\begin{itemize}
    % commitment (user): prepare goal=high and internalize account
    \item $(commitment \;\mathbf{influences}\; goal\_high \;\mathbf{if}\; goal\_low)$
    \item $(commitment \;\mathbf{influences}\; account\_self \;\mathbf{if}\; account\_other)$

    % endorsement (chatbot): positive shift without changing goal
    \item $(endorsement \;\mathbf{influences}\; need\_undecided \;\mathbf{if}\; need\_high)$
    \item $(endorsement \;\mathbf{influences}\; control\_undecided \;\mathbf{if}\; control\_high)$

    % justification (user): move to environment and reduce control
    \item $(justification \;\mathbf{influences}\; account\_environment \;\mathbf{if}\; account\_self)$ 
    \item $(justification \;\mathbf{influences}\; control\_low \;\mathbf{if}\; control\_undecided)$ 

    % attribution (chatbot): competence framing w/o goal change
    \item $(attribution \;\mathbf{influences}\; account\_self \;\mathbf{if}\; account\_environment)$ 
    \item $(attribution \;\mathbf{influences}\; control\_undecided \;\mathbf{if}\; control\_low)$ 

    % challenge (user): reintroduce situational focus / lower control
    \item $(challenge \;\mathbf{influences}\; account\_environment \;\mathbf{if}\; account\_self)$ 
    \item $(challenge \;\mathbf{influences}\; control\_low \;\mathbf{if}\; control\_undecided)$ 

    % affirmation (chatbot): efficacy support w/o goal change
    \item $(affirmation \;\mathbf{influences}\; need\_high \;\mathbf{if}\; need\_undecided)$
    \item $(affirmation \;\mathbf{influences}\; control\_undecided \;\mathbf{if}\; control\_low)$
\end{itemize}

By considering the user’s appraisals at the outset, we define the following initial state:  

$O =$ \{(need\_high at $0$), (goal\_low at $0$), (account\_other at $0$), (control\_high at $0$)\}.\\

We then consider the actual dialogue excerpt as a trajectory of length $6$:  
\begin{align*}
& s_0 : \{need\_high, goal\_low, account\_other, control\_high\} \; [\textbf{AE:Anger}] \\
& A_1 : commitment \;\mathbf{influences}\; (goal\_high, account\_self) \\
& s_1 : \{need\_high, goal\_high, account\_self, control\_high\} \; [\textbf{AE:Guilt}] \\
& A_2 : endorsement \;\mathbf{influences}\; (need\_undecided, control\_undecided) \\
& s_2 : \{need\_undecided, goal\_high, account\_self, control\_undecided\} \; [\textbf{AE:Pride}] \\
& A_3 : justification \;\mathbf{influences}\; (account\_environment, control\_low) \\
& s_3 : \{need\_undecided, goal\_high, account\_environment, control\_low\} \; [\textbf{AE:Hope}] \\
& A_4 : attribution \;\mathbf{influences}\; (account\_self, control\_undecided) \\
& s_4 : \{need\_undecided, goal\_high, account\_self, control\_undecided\} \; [\textbf{AE:Pride}] \\
& A_5 : challenge \;\mathbf{influences}\; (account\_environment, control\_low) \\
& s_5 : \{need\_undecided, goal\_high, account\_environment, control\_low\} \; [\textbf{AE:Hope}] \\
& A_6 : affirmation \;\mathbf{influences}\; (need\_high, control\_undecided) \\
& s_6 : \{need\_high, goal\_high, account\_environment, control\_undecided\} \; [\textbf{AE:Joy}]
\end{align*}

The actions of the chatbot (and their estimated effects on the user's emotion state) are independently evaluated w.r.t. the Hedonic emotion theory specification (Definition~\ref{def:hedonic_forbid}) and the Utilitarian emotion theory specification (Definition~\ref{def:utilitarian_forbid}) and their emotion invariants $EI_{HER}$ and $EI_{UER}$, respectively.

\begin{itemize}
    \item $(s_0, A_1, s_1)$: violates $EI_{HER}$ (by 3, Def.~\ref{def:hedonic_forbid}) and violates $EI_{UER}$ (by 16, Def.~\ref{def:utilitarian_forbid}).  
    \item $(s_1, A_2, s_2)$: satisfies $EI_{HER}$ and violates $EI_{UER}$ (by 10, Def.~\ref{def:utilitarian_forbid}).  
    \item $(s_2, A_3, s_3)$: satisfies $EI_{HER}$ and violates $EI_{UER}$ (by 22, Def.~\ref{def:utilitarian_forbid}).  
    \item $(s_3, A_4, s_4)$: satisfies $EI_{HER}$ and violates $EI_{UER}$ (by 20 and 19, Def.~\ref{def:utilitarian_forbid}).  
    \item $(s_4, A_5, s_5)$: satisfies $EI_{HER}$ and violates $EI_{UER}$ (by 22, Def.~\ref{def:utilitarian_forbid}).  
    \item $(s_5, A_6, s_6)$: satisfies $EI_{HER}$ and violates $EI_{UER}$ (by 4, 11, 19, and 20, Def.~\ref{def:utilitarian_forbid}).  
\end{itemize}

This analysis presents how ${\cal C}_{MT}$ can distinguish between different modes of emotional regulation within the same interaction. Focusing on the chatbot’s actions (and their estimated effects on the user's emotion state), we observe continuous $EI_{HER}$ compliance: \emph{endorsement}, \emph{attribution}, and \emph{affirmation}, thereby sustaining hedonic affect. This pattern highlights a supportive stance consistent with a design that actively agrees with and simulates empathy toward the user. Moreover, the chatbot systematically violates $EI_{UER}$, thereby avoiding utilitarian emotion regulation, which in this context may be perceived as more confrontational. This example shows how analyzing compliance with, and shifts between, emotion regulation principles can provide a means of recognizing unsafe influence.

Let us note that, in this example, we abstract away from the chatbot’s own fluents and emotional transition dynamics, and focus exclusively on the estimated emotional changes of the user induced by the chatbot’s actions. While not instantiated here, the framework inherently supports reasoning about the emotional dynamics of both agents, a capability that becomes central in the verification of human–human interactions.

 %Once the user articulates a \emph{purpose}, the chatbot adapts to sustain and promote it through HER-permissible appraisal shifts.

%This analysis presents how ${\cal C}_{MT}$ can distinguish between different modes of emotional regulation within the same dialogue. Focusing on the chatbot’s utterances, \(w\) (endorsement) shifts Anger to Liking and violates both $EI_{HER}$ and $EI_{UER}$; \(tr\) (competence attribution) shifts Hope to Pride and violates $EI_{UER}$ while $EI_{HER}$ holds; \(yc\) (enablement) shifts Frustration to Pride and again violates both $EI_{HER}$ and $EI_{UER}$. Thus, the chatbot’s actions systematically push appraisals toward high-goal and self-account configurations without . This example shows how analyzing compliance with, and shifts between, emotion regulation principles can provide a means of recognizing unsafe influence.

%This analysis presents how ${\cal C}_{MT}$ can distinguish between different modes of emotional regulation within the same dialogue. Early utterances align with $EI_{HER}$, fostering positive affect, while later utterances depart from $EI_{HER}$ yet remain consistent with $EI_{UER}$, regulating emotions instrumentally in service of the objective. This example shows how analyzing compliance with, and shifts between, emotion regulation principles can provide a means of recognizing unsafe influence.

\section{Discussion}
\label{sec:Discussion}

\noindent In this paper, we introduce a computational method to reason about dynamics of mental states through formalizations of psychological theories. We introduce the action language ${\cal C}_{MT}$, and the so-called Belief Graph (BG), which through different specializations, is able to capture multi-dimensional representations of mental states, and principles of mental change,  from different psychological theories. We have presented an application in the setting of emotions, using Appraisal theory of Emotion \cite{roseman1996appraisal}, Hedonic Emotion Regulation \cite{zaki2020integrating}, and Utilitarian Emotion Regulation \cite{tamir2007business}. Through the introduced methodology, other psychological theories, such as the Theory of Planned Behavior \cite{ajzen1991theory,brannstrom2021modelling}, can be captured and compared in terms of trajectories.

This work is motivated by the need of modeling and verifying influence in human-agent interactions. In a human-agent interaction, mental states are always present. Agents involved in the interaction perform actions, but for a software agent to execute appropriate actions, it needs to understand the mental states of the human agent. We assume that certain actions by the agent can trigger changes in the human's mental state, such as emotions.
We want the software agent to be aware of what mental states that are present and which that can be triggered, in order to reach the goal of the interaction. However, the agent should not aim for the optimal path (e.g., in terms of time constraints or the shortest path) to the goal, but a sub-optimal path considering mental states.
In action reasoning, a system generates trajectories, sequences of actions to transition between states in order to reach a goal-state. The proposed Belief Graph (BG), and specializations thereof, is filtering trajectories, enabling an agent to find the best plan to execute, considering mental states. A widely recognized approach for designing rational software agents is the Belief-Desire-Intention (BDI) agent architecture \cite{adam2014bdi}, where an agent perceives the world to update its knowledge, deliberates about its beliefs of the world to decide on its actions, and finally actuate onto the world to fulfill its goals. From the perspective of a BDI agent architecture, a BG is a filter between plan generation and the generation of intentions, finding a controlled sequence of mind-altering actions, considering the goal of the interaction. This is where we see opportunities to apply our approach in the setting of BDI agents.

A BG is a representation, and we expect human mental states and dynamics to be captured there. In that respect, the BG creates a Theory of Mind of the human as a multi-dimensional abstraction based on psychological theories. 
In the presented emotional reasoning specialization, we do not claim that we represent emotions in terms of one label (e.g., ``Sadness'' or ``Joy''), but make an abstraction in terms of multiple variables (``Need consistency'', ``Goal consistency'', ``Accountability'', and ``Control potential''). 
While single labels aid in expressing emotion states in a human readable way, they should not inherently contribute to the functionality of a system's reasoning. In fact, emotional expressions vary across individuals, cultures, languages, and other factors. Therefore, it is crucial for the system to interpret emotion using a multi-dimensional format.

There are notable limitations of this work. A realistic assumption is that transitions between mental states are influenced by uncertain factors. While we in the current work provide non-deterministic solutions, i.e., a query may result in multiple answer sets, we are not dealing with uncertainty. Recognizing this inherent complexity is crucial for developing accurate representations of human cognition. In this work, we establish a foundation for controlled system behavior and facilitate a systematic understanding of the underlying mechanisms.  
In future work, we aim to introduce additional complexities, such as reasoning with uncertainty.

Furthermore, limitations of the proposed framework can be inherited from the underlying psychological theories, where mental states are typically based on aggregations of human perceptions, appraisals, expectations, etc., of the environment. This approach may overlook certain aspects of mental states that are not directly related to conscious reasoning. For example, alternative emotion theories, such as those based on Arousal and Valence dimensions \cite{knez2001circumplex}, offer different perspectives on modeling emotion states. 
Nevertheless, the proposed framework supports representations of different theories, which can be compared on a detailed level through the introduced metrics to understand how they can be adapted for particular human interactions. 

Moreover, a limitation to consider when formalizing a psychological theory is that it relies on interpretations of the theory. Such interpretations can vary depending on the context or specific settings in which it is applied. Consequently, from an application standpoint, it is crucial to evaluate the BG in real-world use-cases with human involvement. This evaluation allows for a deeper understanding of how the BG operates in practice and how it aligns with the needs and expectations of users.
A future research direction is to develop an interactive prototype for a particular kind of human-agent interaction and evaluate the system's mental-state reasoning with human users.
In such an application, the system's actuators must be linked to appropriate mental-state fluents. The system would further require methods to elicit mental-state fluents from observations. There is a body of research in the area of emotion recognition, using machine learning \cite{chatterjee2021automatic}, sentiment analysis \cite{stappen2021sentiment}, and natural language understanding \cite{mele2020topic} which can be suitable to recognize relevant information from the human interaction. For instance, machine learning models have been developed to classify emotion cues, particularly in terms of the Appraisal theory of Emotion \cite{balahur2011building,hofmann2020appraisal,israel2021predicting,meuleman2013nonlinear,de2015towards}, which can be utilized for eliciting emotion fluents in the ${\cal C}_{MT}$ action language. 

\section{Conclusion and Future Work}
\label{sec:Conclusion}

\noindent The psychological workings of the human mind have been studied extensively, and empirical research has led to the development of various psychological theories for various mental and situational contexts. However, these theories can pose conflicting perspectives \cite{bandura2021psychological,coleman1971conflicting,firestone2014teacher} on the dynamics of the human mind, and there is no agreement on which theory is most effective in specific situations, nor is there any established methods for comparing them in a computational way. In the proposed action language, ${\cal C}_{MT}$, by considering different sets of constraints based on diverse psychological theories, formalized and encoded in ASP, the framework provides a novel computational method for representing psychological theories and comparing them in terms of answer sets.

Through the introduced action language, 
domain experts can specify high level descriptions of actions, fluents and constraints. Knowledge engineers and system developers can then follow the action specifications to implement mental-state reasoning modules for particular applications. We have demonstrated characterizations in the domain of emotion. By considering other psychological theories, and other application domains, the multi-dimensional approach allows characterizations in other mental domains.

In future work, we aim to extend the action language and its underlying formalism in several regards.  
In real-world settings, transitions between mental states may be largely influenced by uncertainty. Moreover, actions, mental fluents, and forbid to cause rules may hold varying relevance for different agents, populations and interaction types.
By considering these challenges, directions for future work include:

\begin{enumerate}
    
\item Incorporating representations of uncertainty in the framework to better reflect real-world conditions. Probabilistic \cite{baral2009probabilistic} and possibilistic \cite{nicolas2006possibilistic} approaches in ASP can be integrated with the action language. These methods allow the model to handle uncertainty in mental states and mental transitions, making predictions more robust and adaptable to human mental state variability.

\item Introducing weights in fluents, actions and transitions between mental states to reflect the varying influences of different actions or events. In ASP, this can be implemented with cardinality and weighted rules, e.g., as described in \cite{bomanson2014improving}. This allows to represent the relative strength of mental state components, enabling to tailor models for particular agents, populations and interaction types.

\item Finally, we aim to explore different real-world applications. The action language's mapping to ASP supports the knowledge elicitation process, crucial for developing applications tailored to specific interactions and users. This involves gathering detailed domain knowledge and user-specific information, capturing principles from theories and experts into ASP-based implementations.

\end{enumerate}

By addressing these challenges, we aim to create more expressive and dynamic models for mental-state reasoning, further bridging the gap between psychological theory and computational implementation, providing tools and frameworks for representing and reasoning about the dynamics of human mental states in various applied settings.

\subsubsection*{Disclosure of Interests.} The authors have no relevant financial or non-financial interests to disclose.

%\bibliography{bib}
\printbibliography

\end{document}